\documentclass[sn-mathphys]{sn-jnl}
\jyear{2021}%

\theoremstyle{thmstyleone}%
\newtheorem{theorem}{Theorem}%  meant for continuous numbers
\newtheorem{proposition}{Proposition}% 

\usepackage{booktabs}
\usepackage{arydshln}
\usepackage{array} % for m{..} column type (vertical centering in table cells)

\theoremstyle{thmstyletwo}%

\theoremstyle{thmstylethree}%

\usepackage{amsmath,amssymb} 
\usepackage[capitalize,noabbrev]{cleveref}
\crefname{section}{Sec.}{Secs.}
\crefname{figure}{Fig.}{Figs.}
\crefname{table}{Tab.}{Tabs.}
\crefname{equation}{Eq.}{Eqs.}
\crefname{algorithm}{Alg.}{Algs.}

\usepackage{color}
\usepackage{multirow}
\usepackage{graphicx}
\usepackage{comment}
\usepackage{changepage}
\usepackage{wrapfig}
\usepackage{colortbl} 
\usepackage{arydshln} 
\usepackage{amsthm}
\usepackage{bm}
\usepackage{rotating}

\definecolor{darkgreen}{RGB}{5,102,8}
\usepackage{subcaption}
\usepackage{overpic}
\usepackage{times}

\makeatletter
\let\svtiny\tiny
\def\tiny{\svtiny\footnotesize\color{gray}}
\makeatother

\usepackage{siunitx}
\newcommand{\std}[1]{{\tiny\normalfont
$\pm$#1}}
\newcommand{\stdp}[1]{\std{\phantom{0}#1}}

\newcommand{\myPara}[1]{%
  \noindent\textbf{#1}%
}

\begin{document}

% \textcolor{darkgreen}{xxxxxxxxxxxxxxxx}

\title[ ]{Brain-inspired hierarchical modularity for general continual learning}
%Brain-inspired hierarchical modularity for learning from dynamic experience

%%=============================================================%%
%% Prefix	-> \pfx{Dr}
%% GivenName	-> \fnm{Joergen W.}
%% Particle	-> \spfx{van der} -> surname prefix
%% FamilyName	-> \sur{Ploeg}
%% Suffix	-> \sfx{IV}
%% NatureName	-> \tanm{Poet Laureate} -> Title after name
%% Degrees	-> \dgr{MSc, PhD}
%% \author*[1,2]{\pfx{Dr} \fnm{Joergen W.} \spfx{van der} \sur{Ploeg} \sfx{IV} \tanm{Poet Laureate} 
%%                 \dgr{MSc, PhD}}\email{iauthor@gmail.com}
%%=============================================================%%

\author[1,2,3]{\fnm{Hongwei} \sur{Yan}}%\email{wly19@tsinghua.org.cn}
\equalcont{These authors contributed equally to this work.}

\author[2,4]{\fnm{Kanglei} \sur{Zhou}}%\email{xxzhang1993@gmail.com}
\equalcont{These authors contributed equally to this work.}

\author[2,4]{\fnm{Qi} \sur{Cheng}}

\author[2,4]{\fnm{Weiyi} \sur{Dong}}

\author[2,4]{\fnm{Chunyan} \sur{Lan}}

\author[1,2,3]{\fnm{Guanglong} \sur{Sun}}

\author[1,2,3]{\fnm{Jun} \sur{Zhou}}

\author[5]{\fnm{Qian} \sur{Li}}

\author[1,2,3]{\fnm{Yi} \sur{Zhong}}

\author*[2,4]{\fnm{Liyuan} \sur{Wang}}\email{liyuanwang@tsinghua.edu.cn}

\affil[1]{School of Life Sciences, Tsinghua University, Beijing, China}

\affil[2]{IDG/McGovern Institute for Brain Research, Tsinghua University, Beijing, China}

\affil[3]{Tsinghua-Peking Center for Life Sciences, Beijing, China}

\affil[4]{Department of Psychological and Cognitive Sciences, Tsinghua University, Beijing, China}

\affil[5]{Zhongshan School of Medicine, Sun Yat-sen University Shenzhen Campus, Shenzhen, China}

%%==================================%%
%% sample for unstructured abstract %%
%%==================================%%

\abstract{
Continual learning, the ability to learn from sequential experience while retaining and adapting prior knowledge, is central to intelligent systems operating in changing environments. However, conventional continual learning is typically studied with offline task-wise training and clear task boundaries, leaving a substantial gap from general continual learning under online, uncertain, and evolving data streams. In this regime, intelligent systems must separate conflicting experience to reduce interference while integrating compatible experience to promote generalization. Inspired by the organization of the \emph{Drosophila} learning and memory system, we identify a hierarchical modular principle that coordinates both functions through expert specialization and ensemble integration. We instantiate this principle as lightweight modular adaptation of pretrained foundation models, combining brain-inspired random expansion for expert routing and diversified modular integration across spatial and temporal scales. Across visual recognition, vision-language understanding, ego-exo video understanding, and embodied vision-language-action learning, our method consistently improves learning under online and uncertain data streams, with gains exceeding 50 percentage points over replay-free alternatives in embodied manipulation. These findings support hierarchical modularity as a biologically grounded path for learning from dynamic experience.
}

\keywords{neuro-inspired learning, continual learning, learning and memory, catastrophic forgetting, adaptability}

%%\pacs[JEL Classification]{D8, H51}

%%\pacs[MSC Classification]{35A01, 65L10, 65L12, 65L20, 65L70}

\maketitle

\section{Introduction}\label{sec:intro}
%%%%%%%%%%%%%%%%%%%%%%%%%%%%%%%%%

Continual learning (CL)~\cite{wang2024comprehensive,de2021continual} is a defining process through which intelligence learns, develops, and accumulates knowledge over time. In biological organisms~\cite{davis2023learning,li2020connectome}, learning from sequential experience supports immediate responses to environmental change and progressive development throughout the lifespan, enabling long-term adaptation to changing conditions. A similar capability is increasingly central to artificial intelligence (AI): moving beyond intelligence acquired primarily from static, human-curated data requires systems that can continue to learn from their own experience, despite catastrophic forgetting~\cite{mcclelland1995there,wang2024comprehensive} and loss of plasticity~\cite{wang2021afec,dohare2024loss}. Recent perspectives on an ``era of experience''~\cite{silver2025era,lecun2022path,hughes2024open} envision increasingly general agents whose capabilities emerge through persistent interaction with the external world. Emerging directions on self-improving agents~\cite{zhang2026darwin} and test-time training~\cite{zweiger2026self, behrouz2026titans} similarly point towards systems that continue to refine their behaviour and internal knowledge after deployment.

Most existing AI studies, however, formulate CL in a simplified conventional regime, typically within a narrow task setting and with largely offline training, clear task boundaries, and auxiliary task identities~\cite{zhou2025asal,wang2025hide,wang2022coscl}. These assumptions have enabled substantial progress through synaptic regularization~\cite{kirkpatrick2017overcoming,wang2021afec}, memory replay~\cite{buzzega2020dark,zhou2024magr}, and dynamic architecture~\cite{wang2025hide,wang2023hierarchical}. Real-world experience instead arrives online under uncertain, overlapping, and evolving distributions across diverse models, modalities, and application scenarios. We consider this broader regime as \emph{general continual learning} (GCL)~\cite{de2021continual,moon2023online}. Its central challenge extends beyond retaining past knowledge to a more fundamental question: \textbf{how should learning be organized as data distributions evolve over time?} Conflicting experience should be separated to reduce interference, whereas compatible experience should be integrated to exploit shared structure and promote generalization.

Biological organisms naturally learn under such dynamic conditions, providing a useful reference for GCL. Among model organisms, \emph{Drosophila} is particularly tractable because its learning circuits combine rich adaptive behaviour with increasingly detailed anatomical characterization from whole-brain connectomes and cross-connectome cell typing~\cite{modi2020drosophila,davis2023learning,li2020connectome,winding2023connectome,lin2024network,schlegel2024whole}. In the olfactory learning and memory system, sparse, largely random projections expand sensory representations in Kenyon cells and support pattern separation~\cite{caron2013random,honegger2011cellular,aso2014mushroom,dasgupta2017neural}, while downstream learning and memory are distributed across differentiated compartments with distinct spatial and temporal characteristics~\cite{aso2014neuronal,aso2016dopaminergic,cohn2015coordinated,handler2019distinct,cervantes2013system}. Computationally, we relate these biological mechanisms to two classical paradigms of modular machine learning: \emph{mixture-of-experts} (MoE)~\cite{mu2025comprehensive,jacobs1991adaptive} and \emph{ensemble learning} (EL)~\cite{dong2020survey,hansen2002neural}. MoE promotes specialization across dissimilar distributions to reduce interference, whereas EL integrates diversified information over related distributions to promote generalization. Importantly, the \emph{Drosophila} system coordinates these MoE- and EL-like functions through a hierarchical learning and memory organization rather than deploying them independently~\cite{wang2023incorporating,wang2025convergent}. This hierarchical coordination provides a biological reference for jointly organizing specialization and integration in GCL.

Here we propose FlyGCL, a unified brain-inspired framework for GCL with pretrained foundation models. In \emph{Drosophila}, learning and memory operate downstream of relatively stable sensory processing. Analogously, FlyGCL retains the pretrained backbone as a stable representational substrate and organizes lightweight, parameter-efficient learning downstream. Brain-inspired random expansion of pretrained representations improves instance-level routing among specialized experts, while differentiated adaptive modules and multi-timescale predictions introduce complementary diversity across spatial and temporal dimensions for ensemble integration. These components realize hierarchical coordination between routing-based specialization and spatial-temporal integration. This modular design accommodates different lightweight learning modules and is broadly applicable across pretrained backbones and learning settings. Computational analyses further characterize the complementary gains of specialization and integration, and the benefit of organizing them over stable pretrained representations (Methods).

We evaluate FlyGCL across diverse forms of real-world continual experience, spanning visual recognition, vision-language understanding, ego-exo video understanding, and embodied vision-language-action learning, all under online and uncertain data streams (Supplementary~\cref{tab:protocol_details,tab:method_optimization_config}). FlyGCL consistently improves CL performance across these scenarios and pretrained models. The gains are particularly pronounced in embodied vision-language-action learning. Across spatial, object-centric, goal-conditioned, and long-horizon manipulation, FlyGCL achieves final average success rates of 83.1\%, 86.1\%, 94.5\%, and 79.1\%, respectively, exceeding the strongest replay-free CL baseline on each benchmark by 49.5--58.8 percentage points. Together, these results support hierarchical modularity as a biologically grounded principle for organizing learning from dynamic experience.

\begin{figure}
    \centering
    \begin{overpic}[width=\linewidth,clip,trim=0 0 0 0]{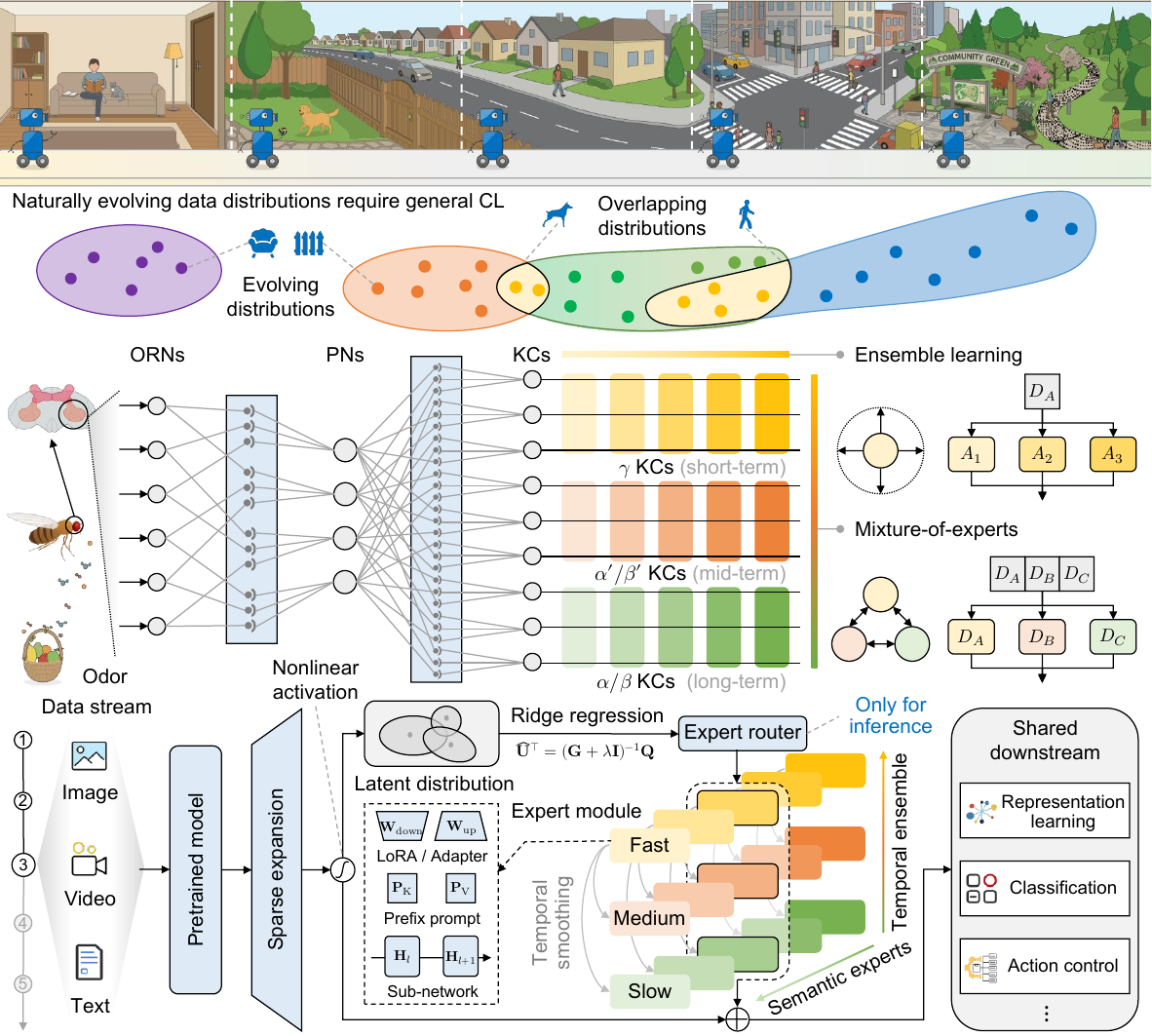}
        \put(0,88.5){\sf\footnotesize\textbf{a}}
        \put(0,59){\sf\footnotesize\textbf{b}}
        \put(0,28){\sf\footnotesize\textbf{c}}
    \end{overpic}
    \caption{
        \textbf{Brain-inspired hierarchical modular framework for general continual learning (GCL).}
        \subref{fig:teaser-a}: Real-world environments induce online and uncertain streams, where data distributions evolve and previously seen concepts may recur over time.
        \subref{fig:teaser-b}: Biological inspiration from the \textit{Drosophila} olfactory learning and memory system. Olfactory receptor neurons (ORNs) project to projection neurons (PNs) and are sparsely expanded into Kenyon cells (KCs), motivating hierarchical modular learning. The upper branch illustrates ensemble learning, where multiple readouts $A_1,A_2,A_3$ share the same distribution $D_A$, while the lower branch illustrates mixture-of-experts, where different distributions $D_A,D_B,D_C$ are assigned to specialized experts.
        \subref{fig:teaser-c}: Brain-inspired framework for GCL. Online data streams are processed by a pretrained model and adapted through sparse expansion with nonlinear activation, expert routing, and multi-timescale expert integration, where routing is solved by closed-form ridge regression in the latent space. The resulting representations support diverse downstream tasks. 
}
    \label{fig:teaser}
    \phantomsubcaption\label{fig:teaser-a}
    \phantomsubcaption\label{fig:teaser-b}
    \phantomsubcaption\label{fig:teaser-c}
\end{figure}

\section{Results}\label{sec:result}

We study \emph{general continual learning} (GCL) under online, uncertain, and evolving data streams across diverse models, modalities, and application scenarios. Its central challenge is to organize incoming experience by separating conflicting distributions to reduce interference while integrating compatible ones to promote generalization (\cref{fig:teaser-a}). FlyGCL addresses this challenge through brain-inspired hierarchical modularity. We first examine its biological and computational basis, and then evaluate its generality under perceptual and embodied learning scenarios.

\subsection{Biological and computational basis of hierarchical modular framework}

\begin{figure}
    \centering
    
    \begin{overpic}[width=\linewidth]{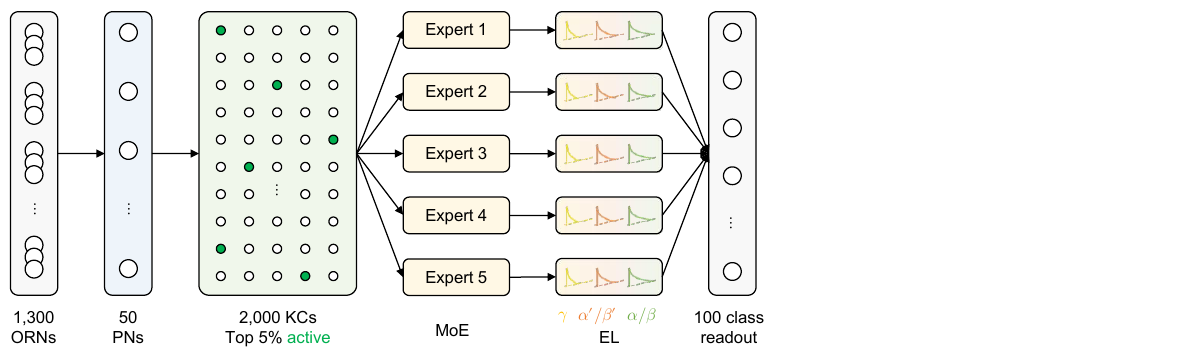}
        \put(66.5,0.5){\includegraphics[width=0.33\linewidth]{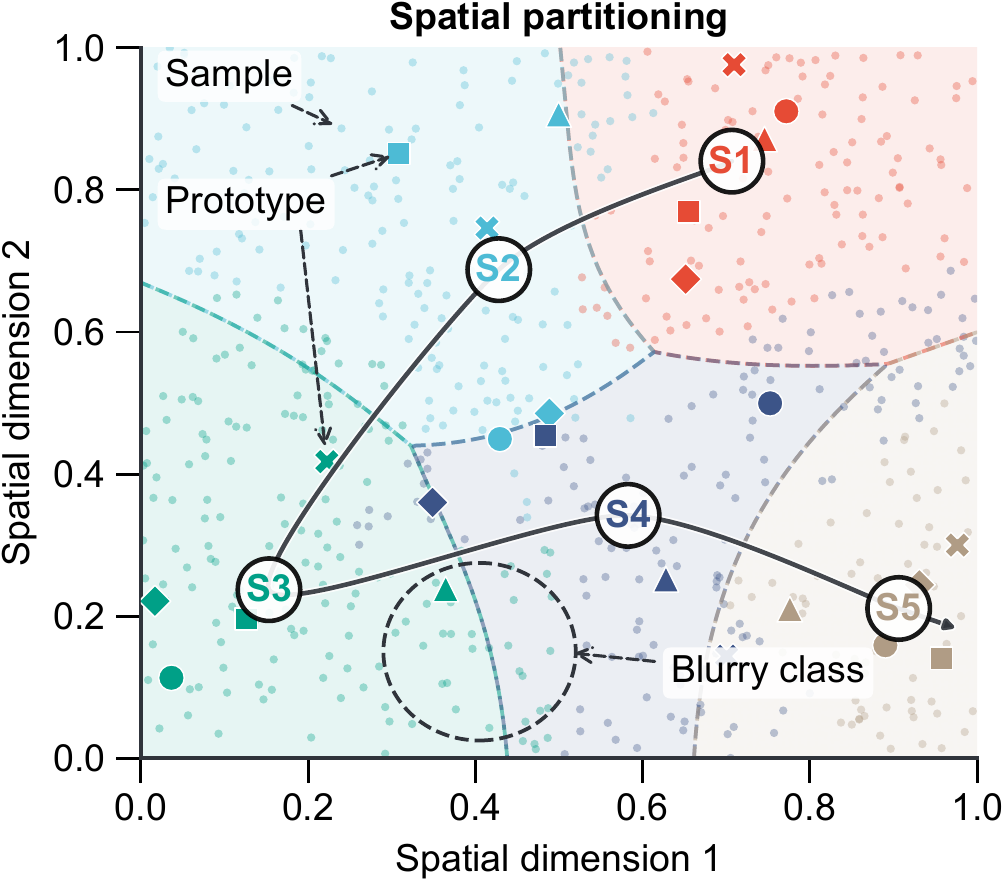}}
    \end{overpic}
    \\[2.0mm]
    \begin{overpic}[width=\linewidth]{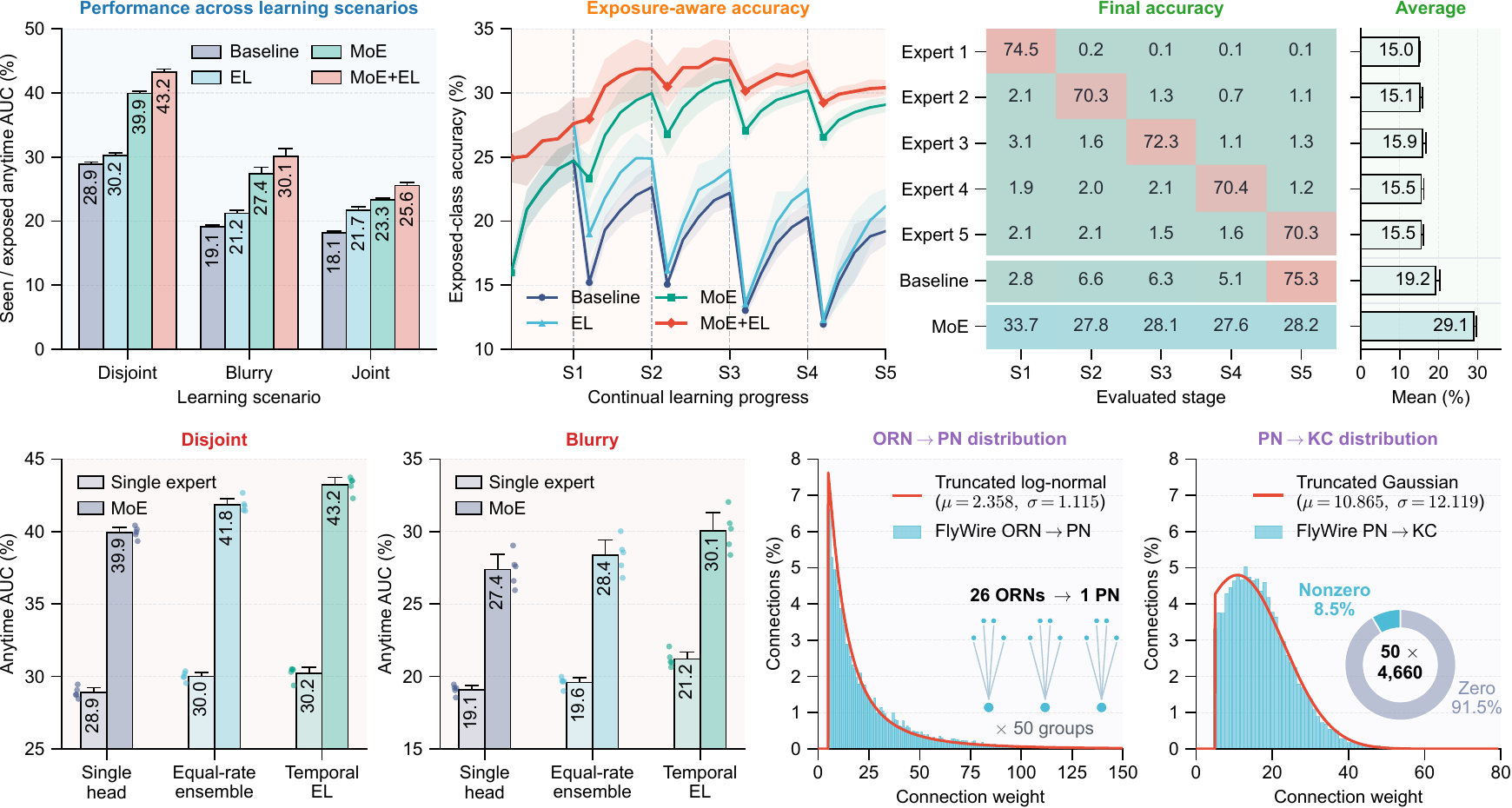}
        \put(-1.0,83){\sf\footnotesize\textbf{a}}
        \put(66,83){\sf\footnotesize\textbf{b}}
        \put(-1.0,53){\sf\footnotesize\textbf{c}}
        \put(29,53){\sf\footnotesize\textbf{d}}
        \put(58,53){\sf\footnotesize\textbf{e}}
        \put(-1.0,24){\sf\footnotesize\textbf{f}}
        \put(24,24){\sf\footnotesize\textbf{g}}
        \put(49,24){\sf\footnotesize\textbf{h}}
        \put(75,24){\sf\footnotesize\textbf{i}}
    \end{overpic}
    \caption{
    \textbf{Biologically grounded evaluation of brain-inspired hierarchical modularity.}
    \subref{fig:theory-a}: \textit{Drosophila}-inspired architecture with ORN-PN compression, sparse PN-KC expansion, spatially differentiated experts, and temporal ensemble learning. ORNs, olfactory receptor neurons; PNs, projection neurons; KCs, Kenyon cells; MoE, mixture-of-experts; EL, ensemble learning.
    \subref{fig:theory-b}: Spatial partitioning of 100 classes into five regions according to prototype proximity.
    \subref{fig:theory-c}: Area under the exposed-class anytime accuracy curve (AUC) of the baseline, EL, MoE, and MoE+EL under disjoint, blurry, and joint settings.
    \subref{fig:theory-d}: Exposed-class accuracy of the four methods over the blurry setting.
    \subref{fig:theory-e}: Final accuracy of individual experts, baseline, and MoE across five spatial regions.
    \subref{fig:theory-f} and \subref{fig:theory-g}: Anytime AUC with a single head, equal-rate ensemble, and temporal EL, with or without MoE, under disjoint and blurry settings, respectively.
    \subref{fig:theory-h}: FlyWire ORN-PN connection weights and their truncated log-normal approximation; inset, 50 groups with 26 ORNs converging onto one PN.
    \subref{fig:theory-i}: FlyWire PN-KC connection weights and their truncated Gaussian approximation; inset, connection sparsity.
    All results are averaged over five independent runs; error bars and shaded areas denote 95\% confidence intervals.
    }
    \label{fig:theory}
    \phantomsubcaption\label{fig:theory-a}
    \phantomsubcaption\label{fig:theory-b}
    \phantomsubcaption\label{fig:theory-c}
    \phantomsubcaption\label{fig:theory-d}
    \phantomsubcaption\label{fig:theory-e}
    \phantomsubcaption\label{fig:theory-f}
    \phantomsubcaption\label{fig:theory-g}
    \phantomsubcaption\label{fig:theory-h}
    \phantomsubcaption\label{fig:theory-i}
\end{figure}

The \emph{Drosophila} olfactory system provides a compact biological model of learning from continuously varying sensory experience (\cref{fig:teaser-b}). Odor signals are encoded by around 1300 olfactory receptor neurons (ORNs) of 50 groups and around 150 projection neurons (PNs) of 50 types~\cite{wang2021evolving}, and then transmitted through sparse, largely random PN-KC projections into a substantially expanded population of around 2,000 Kenyon cells (KCs)~\cite{caron2013random,honegger2011cellular,fulton2024common}. This transformation produces sparse, distributed representations that reduce overlap between odor patterns and support pattern separation~\cite{aso2014mushroom,dasgupta2017neural}. Downstream of this relatively stable sensory representation, learning and memory are organized through the $\gamma$, $\alpha'/\beta'$, and $\alpha/\beta$ KC lobes, each partitioned into five anatomically differentiated compartments (a total of 15 modules across three lobes) regulated by distinct dopaminergic and output pathways~\cite{aso2014neuronal,aso2016dopaminergic,cohn2015coordinated}. These compartments provide spatial diversification within each lobe, while the three lobes contribute preferentially to short-term memory, intermediate memory consolidation, and long-term memory, respectively~\cite{handler2019distinct,cervantes2013system}. The mushroom body therefore organizes learning and memory along two complementary axes: spatial compartmentalization within KC lobes and temporal differentiation across them.

We interpret this biological organization computationally as a hierarchy of two complementary paradigms in modular machine learning (\cref{fig:teaser-b,fig:teaser-c}). At the first level, sparse random expansion separates sensory representations and enables selective recruitment of downstream pathways, providing a biological analogue of \emph{mixture-of-experts} (MoE) routing for reducing interference between dissimilar distributions. At the second level, differentiated compartments provide parallel spatial memory pathways, while the $\gamma$, $\alpha'/\beta'$, and $\alpha/\beta$ lobes span progressively longer memory timescales. Coordinating information across these spatial and temporal dimensions resembles \emph{ensemble learning} (EL), which exploits diversity and integration to improve generalization over related distributions. Computationally, MoE-like specialization separates conflicting experience, whereas EL-like integration combines compatible information across spatial and temporal memory components. Their hierarchical coordination yields the central principle of GCL: separating conflicting experience while integrating compatible experience.

FlyGCL instantiates this biological organization in pretrained foundation models (\cref{fig:teaser-c}, Methods). The pretrained backbone serves as a relatively stable representational substrate, analogous to upstream sensory processing, while brain-inspired random expansion of its representations enables instance-level routing among specialized experts. Diversified adaptive components provide spatial variation, whereas prediction heads with different effective memory windows provide temporal variation in each expert. Fast predictions emphasize recent experience, slow predictions preserve information over longer timescales, and intermediate heads bridge the two, forming a computational analogue of short-term, consolidation-related, and long-term memory. This hierarchical design can be implemented with lightweight adaptation such as prompts, adapters, or LoRA~\cite{lester2021power, rebuffi2017learning, hu2021lora}, allowing FlyGCL to operate across different pretrained models and learning scenarios. Computational analysis further supports the complementary roles of specialization and integration, and the benefit of coordinating them on stable pretrained representations (\cref{fig:teaser-c}, Methods).

To examine the hierarchical modular principle in a controlled biologically grounded setting, we construct a continual olfactory learning model that follows the population scale and modular organization of the \emph{Drosophila} olfactory system (\cref{fig:theory-a}, Supplementary~\cref{sec:app_bio_simulation}). Following prior task-driven models of olfactory learning~\cite{wang2021evolving,shen2023reducing}, the network contains 1,300 ORNs, 50 types of PNs, and 2,000 KCs. ORN-PN and PN-KC connections are sampled using statistics derived from FlyWire~\cite{dorkenwald2022flywire} (\cref{fig:theory-h,fig:theory-i}), and only the 5\% most active KCs are retained for each odor. The sensory pathway remains fixed, restricting CL to downstream learning and memory components. We organize these components along the same spatial-temporal hierarchy: five parallel experts represent memory pathways differentiated by spatial regions, while three temporal heads capture fast, intermediate, and slow effective memory scales. We generate 100 classes from prototypes in a 50-dimensional sensory space and assign nearby prototypes to five equally sized spatial regions. Each online data stream contains 50,000 unique samples presented over five stages, ranging from strictly separated classes in Disjoint to increasing cross-stage overlap in Blurry and Joint (50\% and 100\% classes overlap, respectively).

We compare four targeted baselines that isolate the two dimensions of this organization: a naive baseline with a single adaptive pathway; MoE with five spatially differentiated experts and an expert router based on accumulated stage prototypes; EL with three prediction heads operating at distinct effective timescales; and the hierarchical model combining five experts with three temporal heads each (\cref{fig:theory-a}). Across stream configurations, MoE and EL provide complementary benefits, while their hierarchical combination consistently performs best (\cref{fig:theory-c,fig:theory-d}). Under the disjoint setting, the area under the curve (AUC) performance increases from 28.9\% for the baseline to 39.9\% with MoE and 43.2\% with MoE+EL. Under the blurry setting, MoE+EL reaches 30.1\%, compared with 19.1\% for the baseline, and remains consistently stronger over the course of learning. The same ordering holds under the joint setting, where MoE+EL also outperforms either component alone.

The expert analysis provides direct evidence of specialization (\cref{fig:theory-e}). Each expert is most accurate in its corresponding region, whereas its accuracy is low elsewhere. Routing these specialized experts produces a mean final accuracy of 29.1\% across regions, compared with 19.2\% for the shared baseline. Temporal diversity provides an additional consistent gain (\cref{fig:theory-f,fig:theory-g}). For example, under the blurry setting with MoE, anytime AUC increases from 27.4\% with a single head to 28.4\% with three equal-rate heads and 30.1\% when the heads use different learning rates. This progression holds both with and without MoE in the disjoint and blurry settings, supporting distinct contributions from expert specialization and temporal integration.

\subsection{General Continual Learning for Visual and Vision-Language Perception} %700

Real-world intelligent systems continuously encounter changing perceptual experience, from evolving visual concepts to multimodal observations grounded in language. Continual visual recognition and continual vision-language learning therefore provide representative scenarios for studying how models preserve shared structure while adapting to distribution-specific changes over time. Although both scenarios have been widely studied in conventional CL, existing efforts largely rely on offline and disjoint task sequences. We revisit them under more realistic online and blurry data streams.

\begin{figure}[!h]
    \centering
    \vspace{-0.2cm}
    \begin{overpic}[width=\linewidth,clip,trim=0 450 0 0]{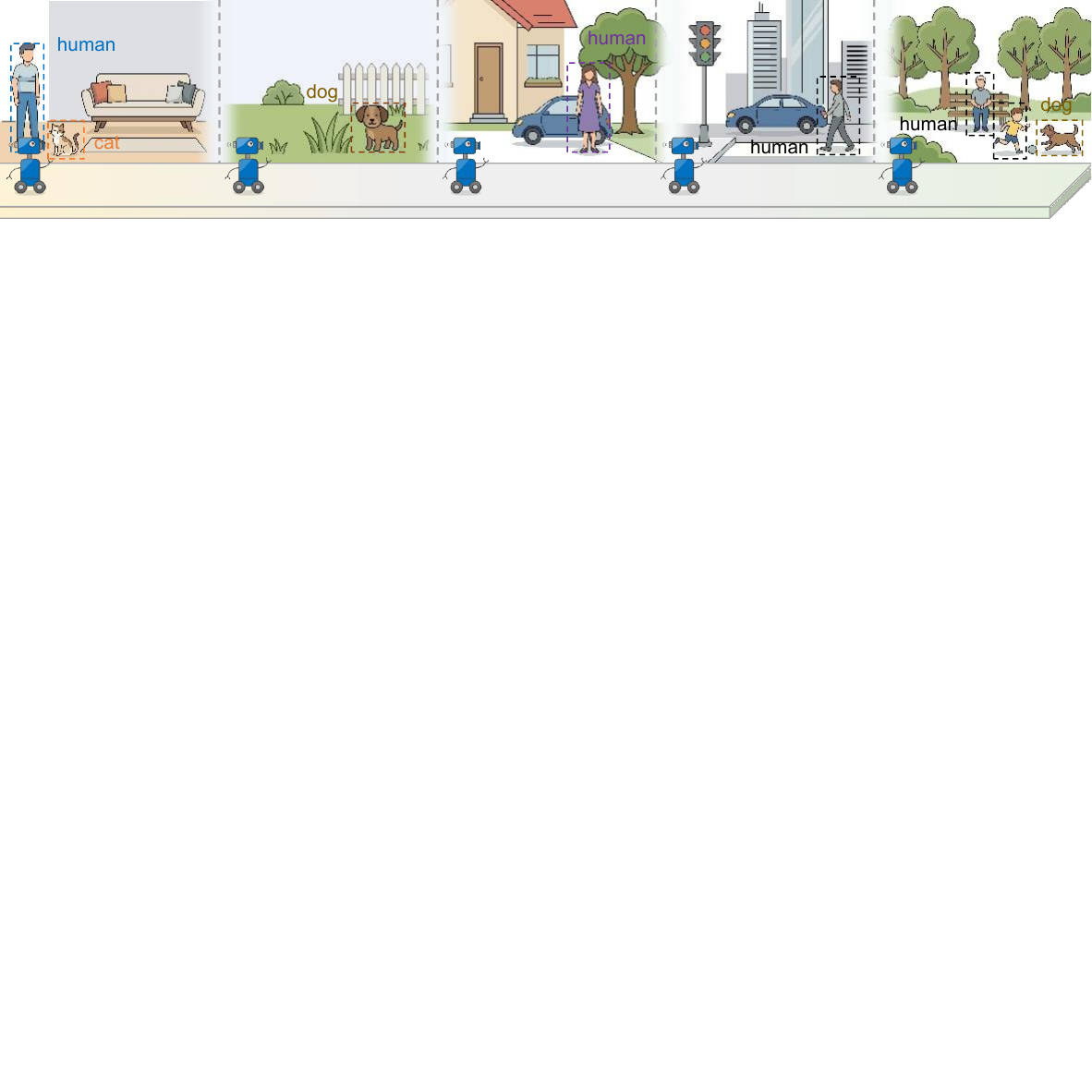}
    \end{overpic}
    % \begin{overpic}[width=\linewidth]{figs/image_hw.pdf}
    % \end{overpic}
    \\[1.5mm]
    \begin{overpic}[width=\linewidth]{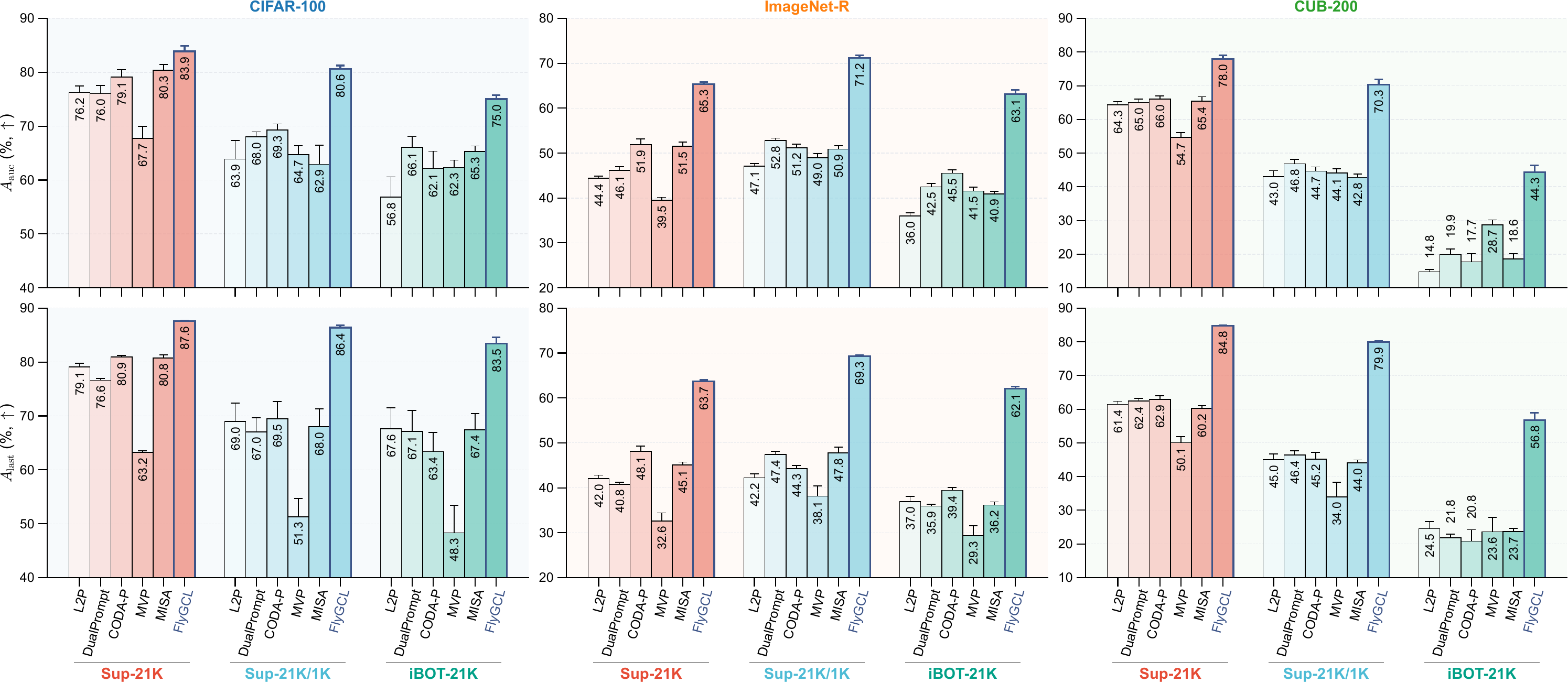}
    \end{overpic}
    \\[1.5mm]
    \begin{overpic}[width=\linewidth]{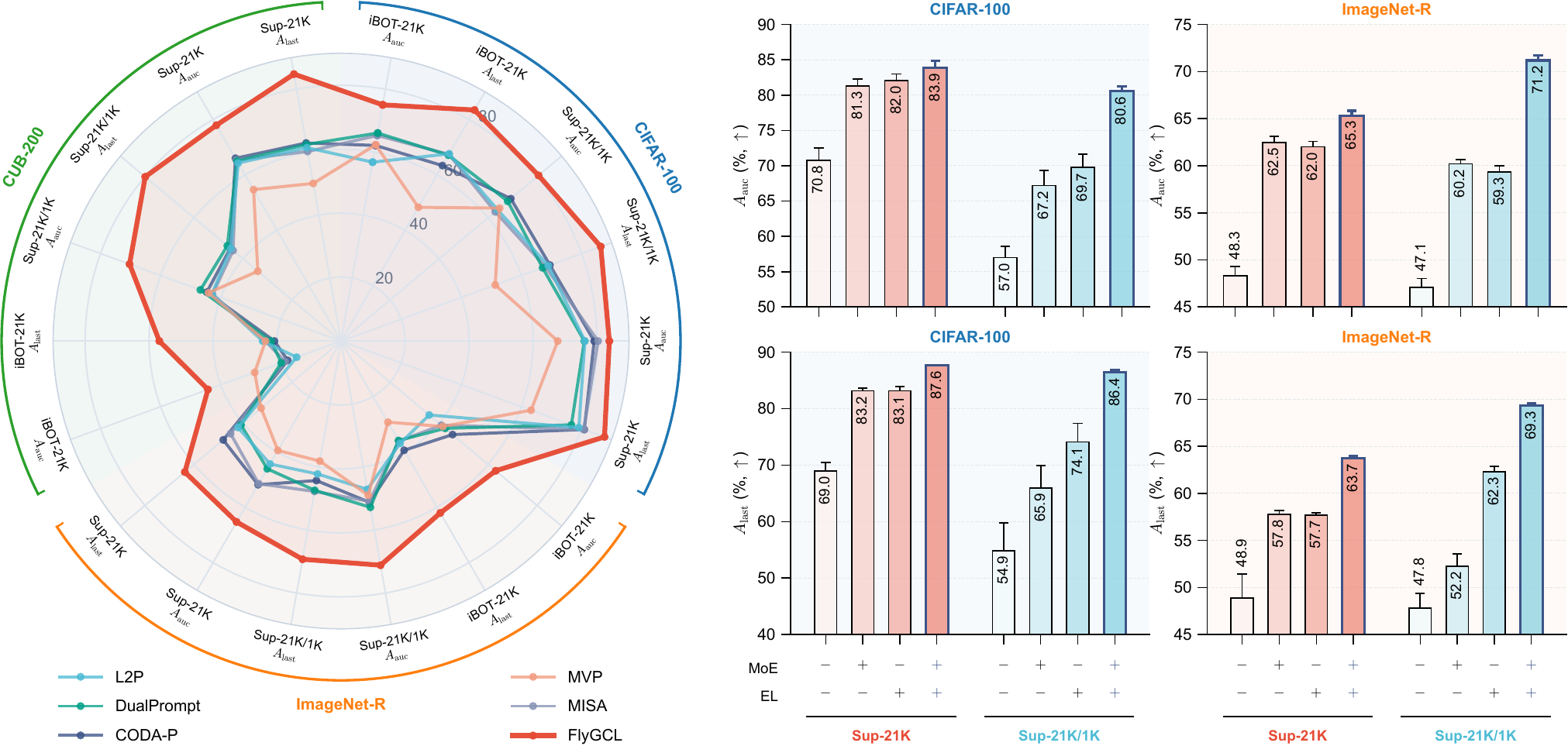}
        \put(0,112){\sf\footnotesize\textbf{a}}
        \put(0,91.75){\sf\footnotesize\textbf{b}}
        \put(0,45.5){\sf\footnotesize\textbf{c}}
        \put(44,45.5){\sf\footnotesize\textbf{d}}
    \end{overpic}
    \caption{
    \textbf{Results on continual visual recognition benchmarks.}
    \subref{fig:img-a}: Illustration of real-world vision scenarios, where agents continuously encounter diverse objects across evolving sessions.
    \subref{fig:img-b}: Overall performance comparison with state-of-the-art baselines.
    \subref{fig:img-c}: Unified performance summary across datasets, pretrained models, and evaluation metrics.
    \subref{fig:img-d}: Ablation analysis of mixture-of-experts (MoE) and ensemble learning (EL). 
    In \subref{fig:img-b}--\subref{fig:img-d}, FlyGCL uses LoRA modules as adaptive experts. All results are averaged over five independent runs; error bars indicate the standard error of the mean. Complete results for prompt-, adapter-, and LoRA-based instantiations are reported in Supplementary~\cref{tab:img,tab:img_ablation}.
    }
    \label{fig:img}
    \phantomsubcaption\label{fig:img-a}
    \phantomsubcaption\label{fig:img-b}
    \phantomsubcaption\label{fig:img-c}
    \phantomsubcaption\label{fig:img-d}
\end{figure}

\myPara{Continual Visual Recognition.}
We first evaluate FlyGCL on continual visual recognition under online and blurry data streams, where samples from newly introduced and previously observed classes are probabilistically interleaved~\cite{moon2023online,kang2025advancing}, producing uncertain and evolving distributions over time (\cref{fig:img-a}). We consider CIFAR-100~\cite{krizhevsky2009learning}, ImageNet-R~\cite{hendrycks2021many}, and CUB-200~\cite{wah2011caltech} datasets, spanning generic object recognition, distribution-shifted concepts, and fine-grained categories. Comparisons include representative pretrained-based CL methods, such as L2P~\cite{wang2022learning}, DualPrompt~\cite{wang2022dualprompt}, and CODA-Prompt~\cite{smith2023coda}, as well as online CL methods MVP~\cite{moon2023online} and MISA~\cite{kang2025advancing}. We report average anytime performance ($A_{\mathrm{auc}}$) and final average performance ($A_{\mathrm{last}}$). Across these benchmarks, FlyGCL consistently achieves the strongest final and anytime performance (\cref{fig:img-b,fig:img-c}, Supplementary~\cref{tab:img}): in the primary comparison, its $A_{\mathrm{auc}}$/$A_{\mathrm{last}}$ exceed the strongest baseline by 3.5\%/6.9\%, 13.8\%/18.7\%, and 12.6\%/24.6\% on CIFAR-100, ImageNet-R, and CUB-200, respectively. The performance gains are particularly clear on ImageNet-R and CUB-200, where distribution shifts and fine-grained distinctions place greater demands on selective adaptation.

We next test whether this advantage depends on the pretrained representation or the adaptation interface. The primary comparison covers three backbone settings: a model pretrained on ImageNet-21K (Sup-21K), a model pretrained on ImageNet-21K and subsequently adapted to ImageNet-1K (Sup-21K/1K)~\cite{russakovsky2015imagenet,ridnik2021imagenet,dosovitskiy2020image}, and a self-supervised iBOT model pretrained on ImageNet-21K (iBOT-21K)~\cite{zhou2021image} (\cref{fig:img-b}). We additionally evaluate self-supervised checkpoints pretrained on ImageNet-1K, including iBOT~\cite{zhou2021image}, DINO~\cite{caron2021emerging}, and MoCo v3~\cite{chen2021empirical} (Supplementary~\cref{tab:img}). Across these settings, FlyGCL supports prompt-, adapter-, and LoRA-based adaptation~\cite{lester2021power,li2021prefix,rebuffi2017learning,hu2021lora}, and retains strong performance across the resulting combinations. This consistency shows that the proposed hierarchical modularity is not tied to a particular pretrained representation or tuning interface.

Finally, we isolate the roles of the two components of FlyGCL design. Removing either MoE or EL reduces performance across datasets and pretrained representations, whereas their combination performs best (\cref{fig:img-d}, Supplementary~\cref{tab:img_ablation}). Expert routing provides differentiated adaptation paths for heterogeneous visual distributions, while temporal ensemble integration stabilizes predictions as related experience recurs. These results support that hierarchical specialization and integration provide complementary gains for continual visual recognition.

\begin{figure}
    \centering
    \vspace{-0.2cm}
    \begin{overpic}[width=\linewidth,clip,trim=0 450 0 0]{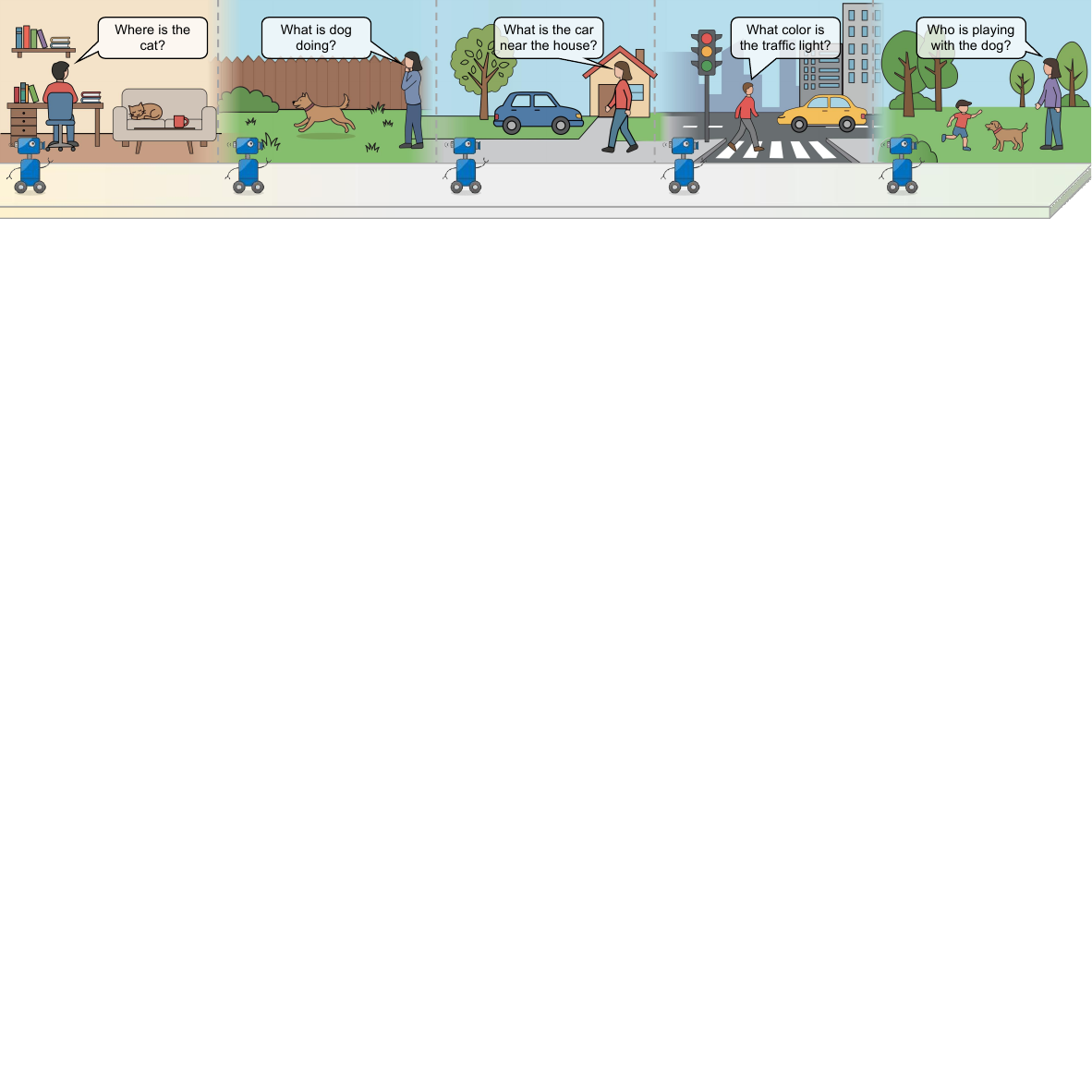}
    \end{overpic}
    \\[1.5mm]
    \begin{overpic}[width=\linewidth]{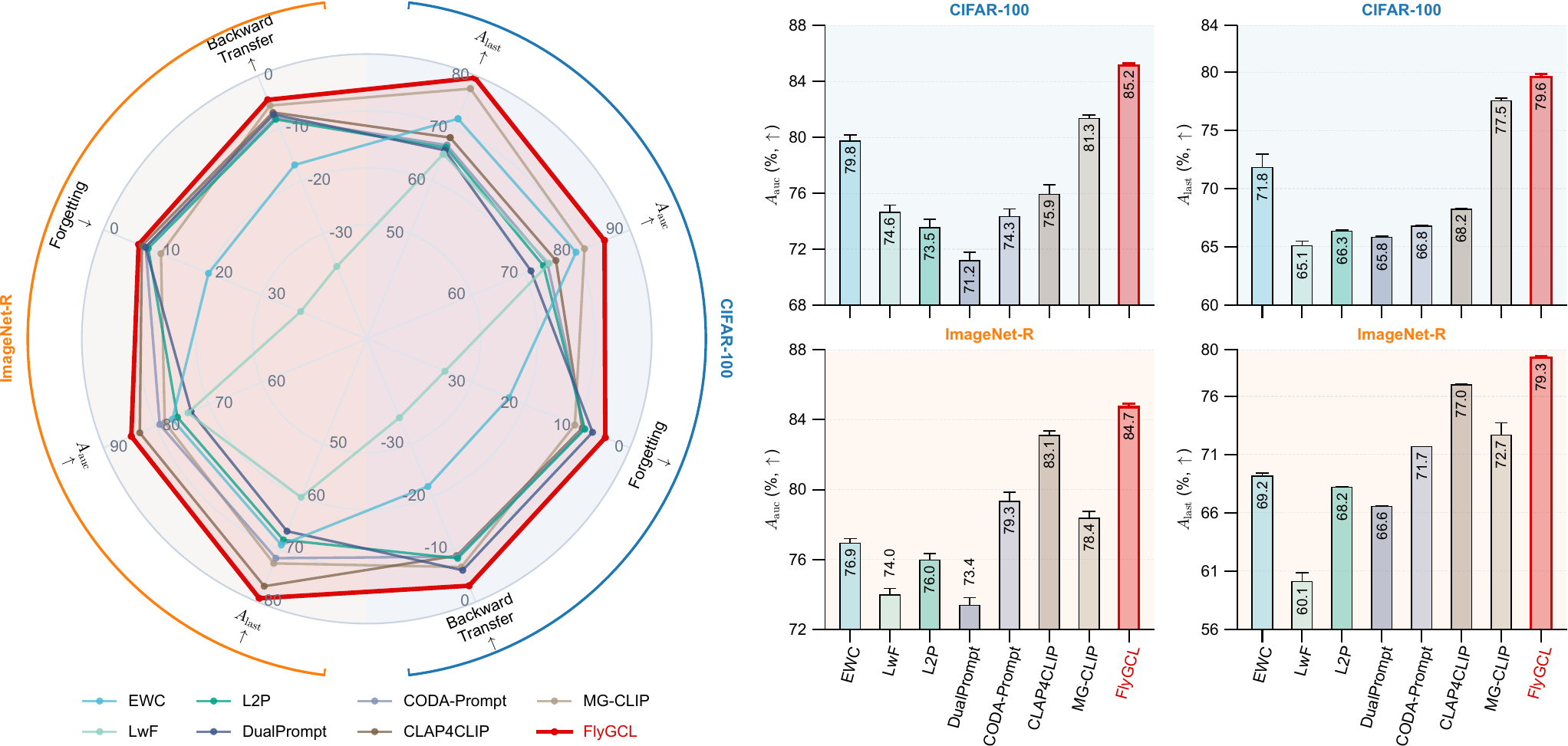}
    \end{overpic}
    \\[3.0mm]
    \begin{overpic}[width=\linewidth]{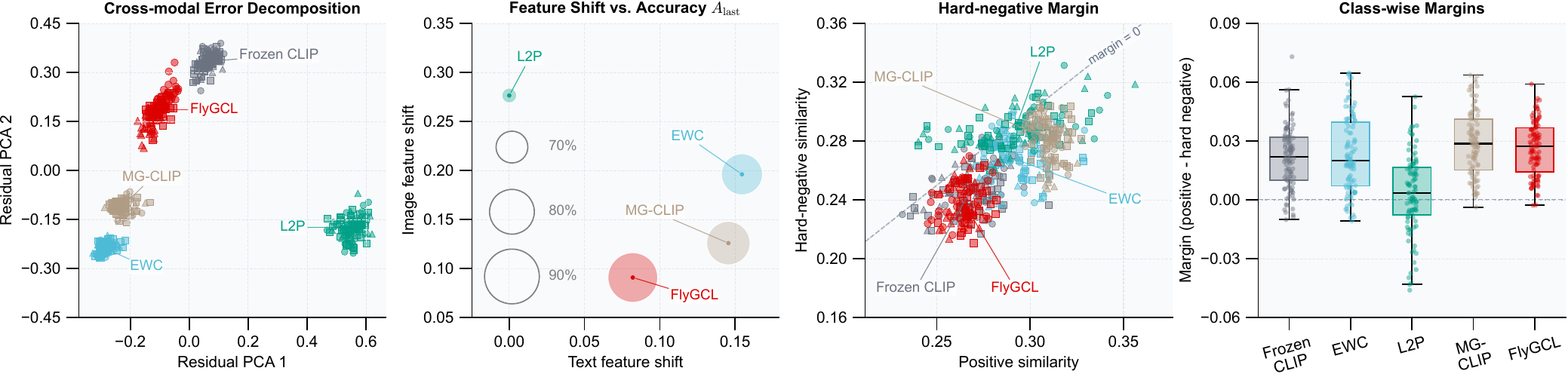}
    \put(0,93.5){\sf\footnotesize\textbf{a}}
    \put(0,72){\sf\footnotesize\textbf{b}}
    \put(48,72){\sf\footnotesize\textbf{c}}
    \put(0,23){\sf\footnotesize\textbf{d}}
    \put(25,23){\sf\footnotesize\textbf{e}}
    \put(50,23){\sf\footnotesize\textbf{f}}
    \put(75,23){\sf\footnotesize\textbf{g}}
    \end{overpic}
    \caption{
    \textbf{Results on continual vision-language benchmarks.}
    \subref{fig:vl-a}: Illustration of real-world vision-language scenarios, where an embodied agent continuously encounters evolving visual contexts and language queries.
    \subref{fig:vl-b}: Unified performance summary on CIFAR-100 and ImageNet-R using $A_{\rm auc}$, $A_{\rm last}$, forgetting, and backward transfer.
    \subref{fig:vl-c}: Performance comparison on CIFAR-100 and ImageNet-R. 
    \subref{fig:vl-d}: Principal component analysis (PCA) projection of representations on CIFAR-100.
    \subref{fig:vl-e}: Vision-language feature drift relative to frozen CLIP; bubble size denotes $A_{\rm last}$.
    \subref{fig:vl-f}: Hard-negative separation by positive-pair and hard-negative similarities.
    \subref{fig:vl-g}: Class-wise vision-language margin, computed as positive-pair similarity minus hard-negative similarity.
    All results are averaged over three independent runs; error bars indicate the standard error of the mean. Detailed numerical results are reported in Supplementary~\cref{tab:vl}.
    }
    \label{fig:vl}
    \phantomsubcaption\label{fig:vl-a}
    \phantomsubcaption\label{fig:vl-b}
    \phantomsubcaption\label{fig:vl-c}
    \phantomsubcaption\label{fig:vl-d}
    \phantomsubcaption\label{fig:vl-e}
    \phantomsubcaption\label{fig:vl-f}
    \phantomsubcaption\label{fig:vl-g}
\end{figure}

\myPara{Continual Vision-Language Learning.}
We next extend GCL to vision-language models, where CL must accommodate evolving visual concepts while preserving the cross-modal semantic structure acquired during large-scale pretraining (\cref{fig:vl-a}). Compared with visual recognition, CL introduces an additional challenge: changes in visual representations may disrupt their correspondence with language and weaken the shared semantic space that supports multimodal generalization. We evaluate FlyGCL with pretrained CLIP~\cite{radford2021learning} on CIFAR-100 and ImageNet-R, comparing with classical CL methods such as EWC~\cite{kirkpatrick2017overcoming} and LwF~\cite{li2017learning}, as well as CLIP-based CL methods including CLAP4CLIP~\cite{jha2024clap4clip} and MG-CLIP~\cite{huang2025mind}. FlyGCL achieves the strongest overall performance across both benchmarks (\cref{fig:vl-b,fig:vl-c}, Supplementary~\cref{tab:vl}), reaching $A_{\mathrm{auc}}$/$A_{\mathrm{last}}$ of 85.2/79.6\% on CIFAR-100 and 84.7/79.3\% on ImageNet-R, while also maintaining favorable forgetting and backward transfer. These results show that hierarchical modularity remains effective when GCL extends from visual prediction to continually evolving cross-modal representations.

We examine how different continual learners reshape the pretrained image-text representation space. Principal component analysis (PCA) visualization of the representation residuals reveals distinct adaptation trajectories relative to frozen CLIP (\cref{fig:vl-d}). FlyGCL exhibits a more constrained representation shift than competing approaches, while retaining the highest final performance. The joint decomposition of image- and text-feature shifts shows the same pattern (\cref{fig:vl-e}): FlyGCL remains closer to the pretrained cross-modal space without sacrificing adaptation accuracy. This balance is consistent with the hierarchical design of FlyGCL, where selective expert adaptation accommodates distribution-specific changes while temporal integration limits unnecessary displacement of the pretrained representation.

We further analyze whether this preservation extends to the semantic relationships that underpin image-text recognition. For each image, we compare its similarity to the matched text with that of the hardest negative text (\cref{fig:vl-f}). CL generally narrows this separation relative to frozen CLIP, whereas FlyGCL better preserves the positive-to-hard-negative margin. The class-wise analysis confirms that this advantage is broadly distributed across semantic categories rather than driven by a small subset of classes (\cref{fig:vl-g}). The representation- and similarity-level analyses indicate that FlyGCL adapts to evolving visual distributions while better retaining the pretrained image-text geometry, providing a mechanistic explanation for its stronger continual vision-language learning performance.

\begin{figure}
    \centering
    \vspace{-0.2cm}
    \begin{overpic}[width=\linewidth,clip,trim=0 450 0 0]{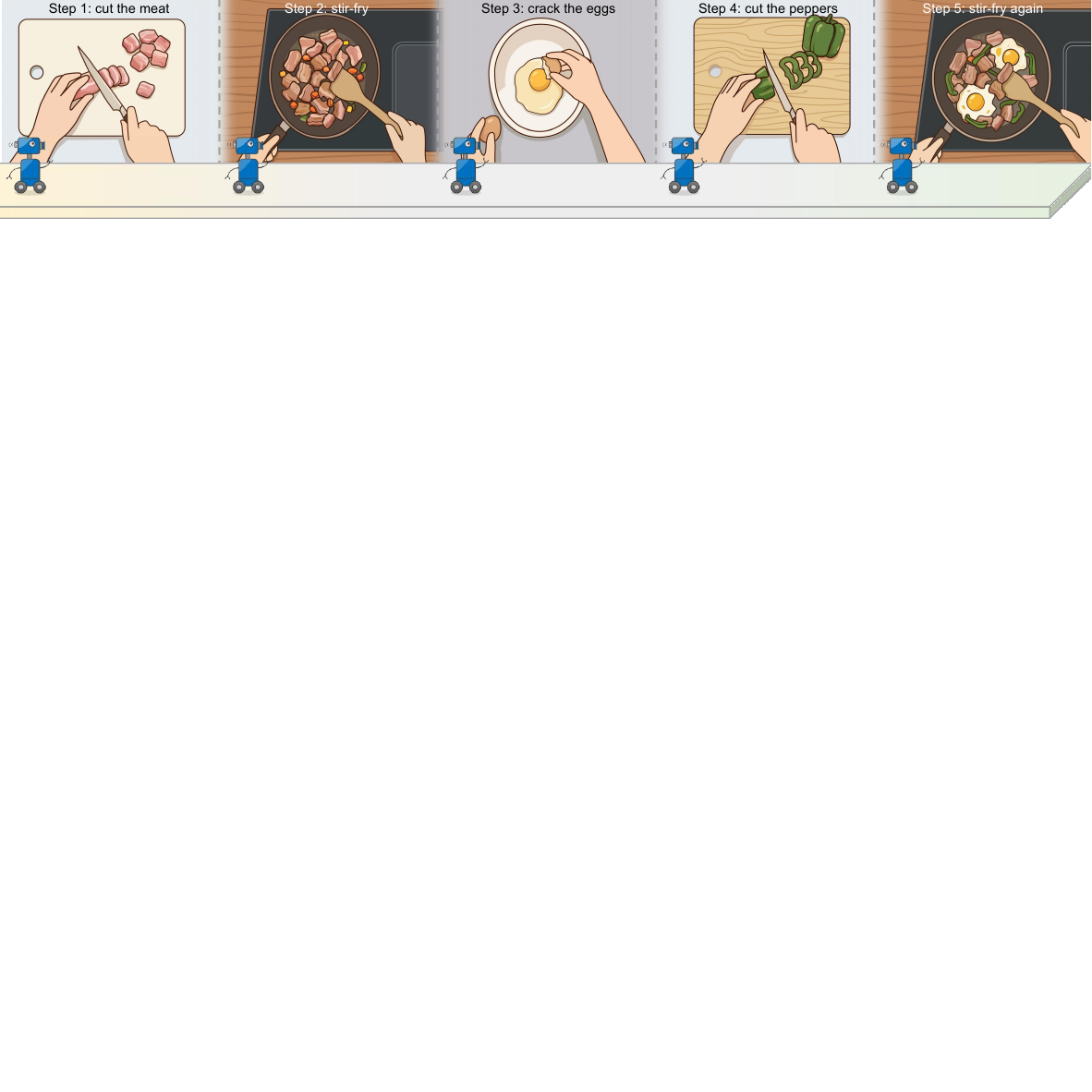}%
    \end{overpic}%
    % \begin{overpic}[width=\linewidth]{figs/egovideo_hw.pdf}%
    % \end{overpic}%
    \\[1.5mm]
    \begin{overpic}[width=\linewidth]{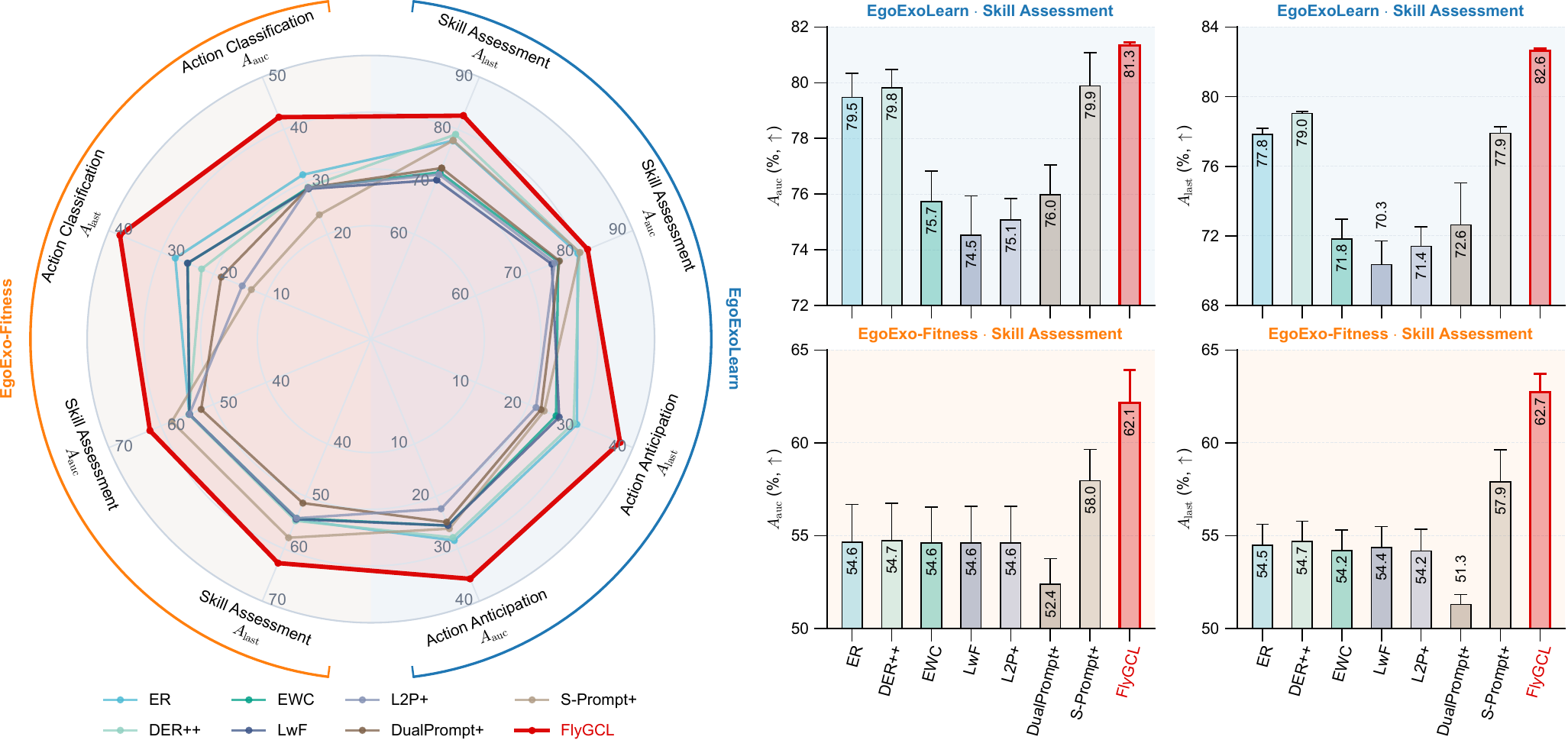}
    \end{overpic}
    \\[3.0mm]
    \begin{overpic}[width=\linewidth]{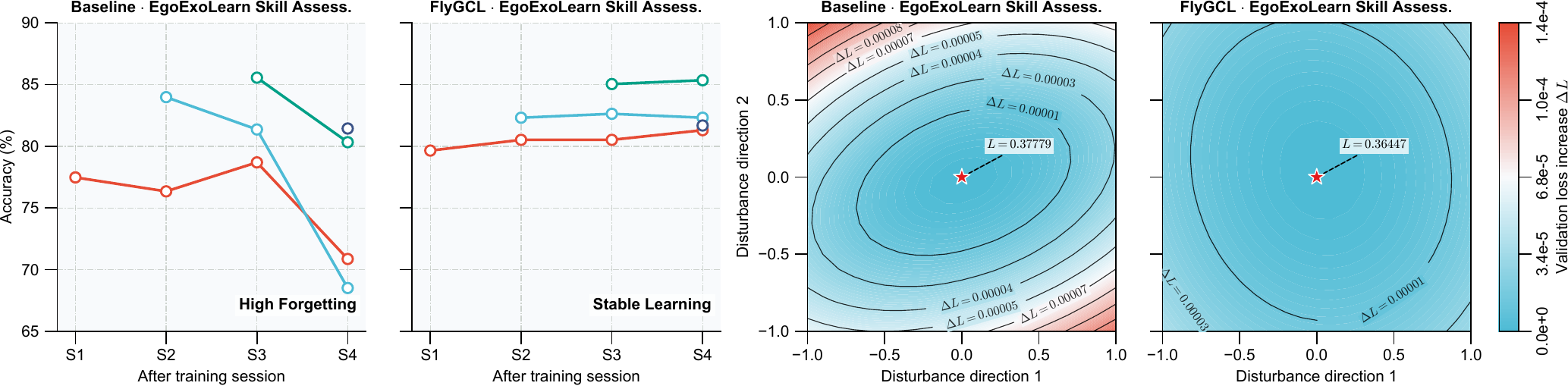}
    \put(0,95){\sf\footnotesize\textbf{a}}
    \put(0,73){\sf\footnotesize\textbf{b}}
    \put(48,73){\sf\footnotesize\textbf{c}}
    \put(0,24){\sf\footnotesize\textbf{d}}
    \put(24,24){\sf\footnotesize\textbf{e}}
    \put(47,24){\sf\footnotesize\textbf{f}}
    \put(71.5,24){\sf\footnotesize\textbf{g}}
    \end{overpic}
    \caption{
    \textbf{Results on continual ego-exo video understanding benchmarks.}
    \subref{fig:egovideo-a}: Illustration of continual ego-exo video understanding, where embodied agents encounter evolving activities and viewpoints over time.
    \subref{fig:egovideo-b}: Unified performance summary on EgoExoLearn and EgoExo-Fitness using $A_{\rm auc}$ and $A_{\rm last}$. For skill assessment on EgoExoLearn and EgoExo-Fitness, results are averaged over the Relation Network (RN)- and Triplet Loss (TL)-based ego-exo model variants \cite{huang2024egoexolearn}.
    \subref{fig:egovideo-c}: Performance comparison for continual skill assessment on EgoExoLearn and EgoExo-Fitness.
    \subref{fig:egovideo-d}, \subref{fig:egovideo-e}: Stage-wise accuracy of baseline and FlyGCL on EgoExoLearn.
    \subref{fig:egovideo-f}, \subref{fig:egovideo-g}: Local loss landscapes under parameter perturbations for baseline and FlyGCL.
    All results are averaged over three independent runs; error bars indicate the standard error of the mean. Detailed numerical results are reported in Supplementary~\cref{tab:ce4l_skill,tab:fitness_skill,tab:ce4l_anticipation,tab:fitness_action_classification}.
    }
    \label{fig:egovideo}
    \phantomsubcaption\label{fig:egovideo-a}
    \phantomsubcaption\label{fig:egovideo-b}
    \phantomsubcaption\label{fig:egovideo-c}
    \phantomsubcaption\label{fig:egovideo-d}
    \phantomsubcaption\label{fig:egovideo-e}
    \phantomsubcaption\label{fig:egovideo-f}
    \phantomsubcaption\label{fig:egovideo-g}
\end{figure}

\subsection{General Continual Learning for Embodied Perception and Action} % 700

Real-world intelligent systems must perceive changing environments while learning continuously from embodied and human-centered experience. Continual ego-exo video understanding and continual vision-language-action learning represent two important settings in this direction, spanning the interpretation of evolving human activities and the acquisition of action policies through multimodal interaction. These scenarios remain relatively underexplored in conventional CL, despite their direct relevance to long-running intelligent agents. Their video and interaction streams are inherently online, temporally continuous, and distributionally blurry, closely matching the uncertain and evolving experience targeted by GCL. We therefore examine whether FlyGCL can extend from perception to embodied understanding and action.

\myPara{Continual Ego-Exo Video Understanding.}
We extend GCL to continual ego-exo video understanding~\cite{yan2026ce4l}, where agents must continuously learn evolving activities from heterogeneous first- and third-person observations (\cref{fig:egovideo-a}). Compared with image-based GCL, continual video learning introduces additional temporal and viewpoint variations: the same activity can exhibit substantially different visual dynamics across egocentric and exocentric views, while subject, skill, and motion patterns evolve continuously throughout the data stream. We evaluate FlyGCL on EgoExoLearn~\cite{huang2024egoexolearn} and EgoExo-Fitness~\cite{li2024egoexo}, covering continual skill assessment, action anticipation, and action classification. Across these settings, FlyGCL achieves consistently strong overall performance on both $A_{\rm auc}$ and $A_{\rm last}$ (\cref{fig:egovideo-b,fig:egovideo-c}, Supplementary~\cref{tab:ce4l_skill,tab:fitness_skill,tab:ce4l_anticipation,tab:fitness_action_classification}), showing that the brain-inspired hierarchical modularity remains effective when GCL extends from static visual recognition to temporally evolving and cross-view video representations.

The performance gains are consistent across distinct forms of embodied video understanding. On EgoExoLearn skill assessment, FlyGCL reaches $A_{\rm auc}$/$A_{\rm last}$ of $81.3\%$/$82.6\%$, outperforming replay-, regularization-, and prompt-based methods. On EgoExo-Fitness skill assessment, FlyGCL reaches $62.1\%$/$62.7\%$, compared with $58.0\%$/$57.9\%$ for the strongest competing method in the main comparison (\cref{fig:egovideo-c}). Additional evaluations on sequence verification and guidance-based execution verification (Supplementary~\cref{tab:fitness_verification,tab:fitness_guidance_verification}) further demonstrate that the benefit of hierarchical modularity extends across distinct video tasks and output spaces.

We further examine how different continual learners evolve as new ego-exo sessions arrive. The stage-wise trajectories (\cref{fig:egovideo-d,fig:egovideo-e}) reveal substantially different forgetting patterns: the sequential fine-tuning baseline progressively degrades on previously learned sessions, exhibiting substantial forgetting on EgoExoLearn skill assessment, whereas FlyGCL largely preserves earlier performance and maintains stable learning throughout the continual stream. The local loss landscapes provide complementary evidence (\cref{fig:egovideo-f,fig:egovideo-g}): FlyGCL converges to a flatter basin and exhibits lower sensitivity to parameter perturbations than the sequential fine-tuning baseline. These results suggest that hierarchical modularity improves not only average continual performance but also the robustness of the learned solution, with specialized experts accommodating heterogeneous temporal and viewpoint-specific patterns while temporal integration limits destructive interference as related embodied experience recurs.

\begin{figure}[!h]
    \centering
    \begin{overpic}[width=\linewidth,clip,trim=0 450 0 0]{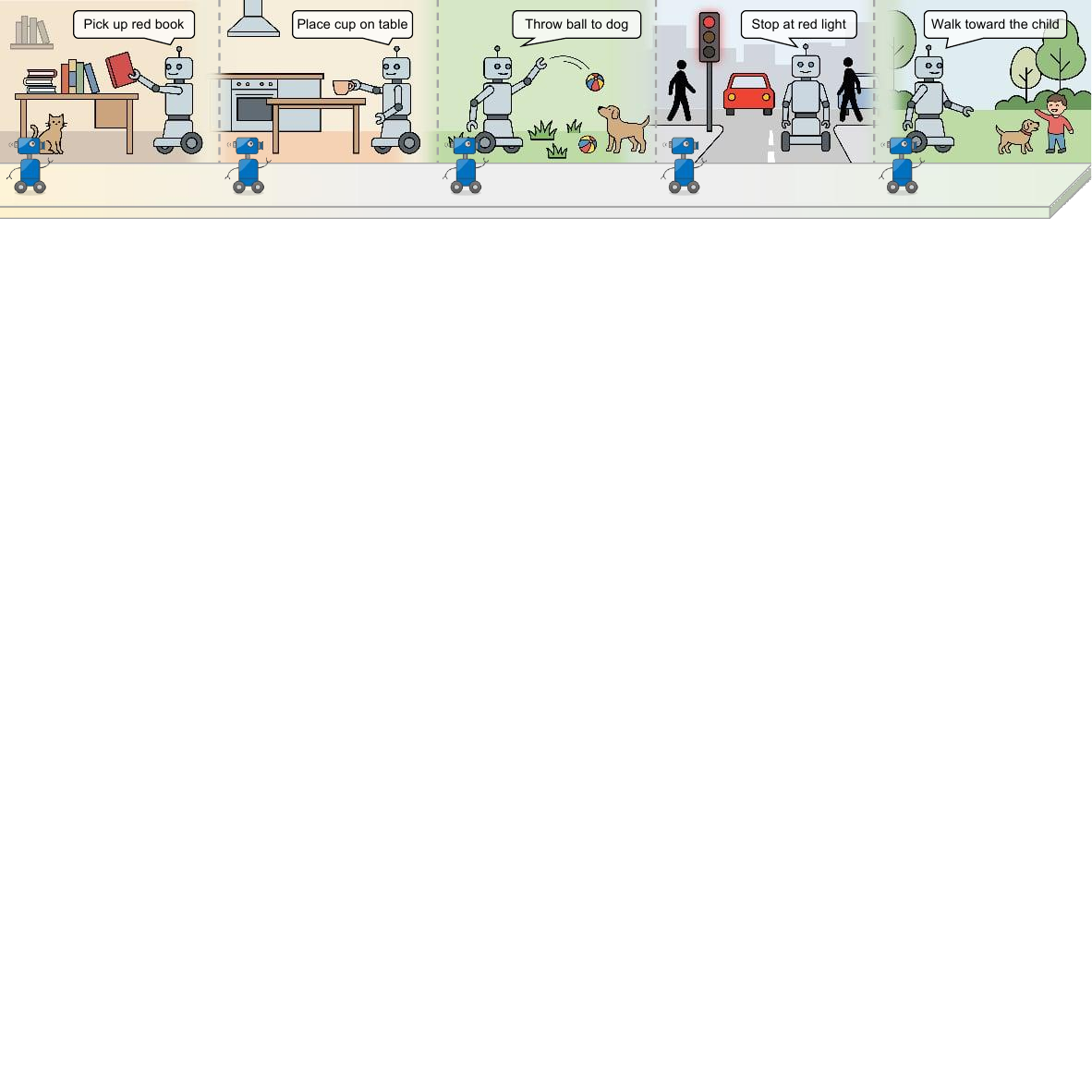}
    \end{overpic}
    \\[1.5mm]
    % \begin{overpic}[width=\linewidth]{figs/vla_online_combined.pdf}
    % \end{overpic}
    \begin{overpic}[width=\linewidth]{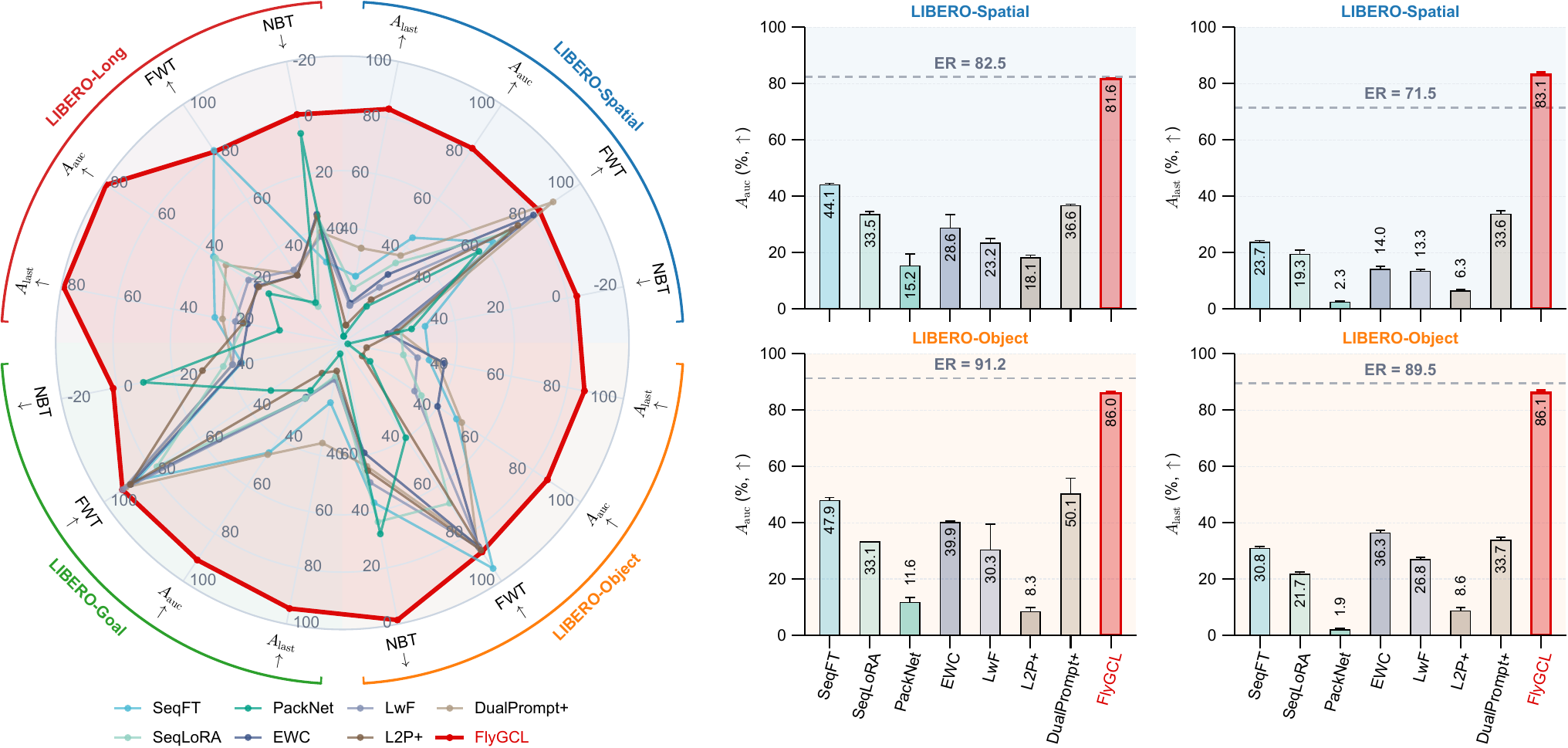}
    \end{overpic}
    % \\[1.5mm]
    % \begin{overpic}[width=\linewidth]{figs/offline_libero_spatial_object.pdf}
    % \end{overpic}
    \\[1.5mm]
    \begin{overpic}[width=\linewidth]{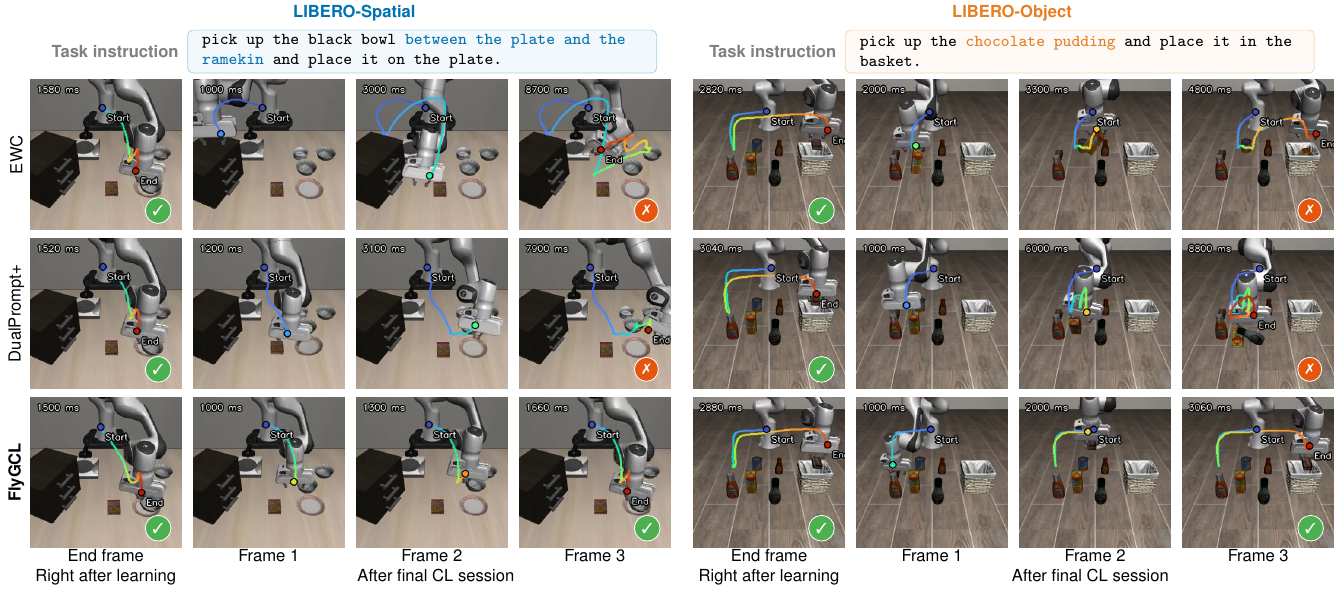}
        \put(0,113){\sf\footnotesize\textbf{a}}
        \put(0,91){\sf\footnotesize\textbf{b}}
        \put(45,91){\sf\footnotesize\textbf{c}}
        \put(0,42){\sf\footnotesize\textbf{d}}
        \put(51,42){\sf\footnotesize\textbf{e}}
    \end{overpic}
    \caption{
    \textbf{Results on continual vision-language-action benchmarks.}
    \subref{fig:vla-a}: Illustration of continual embodied interaction, where agents follow evolving language instructions across changing environments and tasks.
    \subref{fig:vla-b}: Unified performance summary across LIBERO benchmarks and CL metrics.
    \subref{fig:vla-c}: Performance comparison with state-of-the-art baselines on LIBERO-Spatial and LIBERO-Object using $A_{\rm auc}$ and $A_{\rm last}$.
    \subref{fig:vla-d}: Qualitative rollouts on LIBERO-Spatial, comparing task execution immediately after learning and after the final CL session for EWC, DualPrompt+, and FlyGCL.
    \subref{fig:vla-e}: Qualitative rollouts on LIBERO-Object under the same protocol.
    Results are averaged over three runs; error bars denote the standard error of the mean.
    }
    \label{fig:vla}
    \phantomsubcaption\label{fig:vla-a}
    \phantomsubcaption\label{fig:vla-b}
    \phantomsubcaption\label{fig:vla-c}
    \phantomsubcaption\label{fig:vla-d}
    \phantomsubcaption\label{fig:vla-e}
\end{figure}

\myPara{Continual Vision-Language-Action Learning.}
We further extend GCL to continual vision-language-action learning, where embodied agents must connect visual observations and language instructions to actions as environments, goals, and interaction states evolve over time (\cref{fig:vla-a}). Unlike recognition or representation learning, continual updates in this setting affect an entire interaction policy. The agent must preserve visuolinguistic grounding while adapting its visuomotor behaviour to new experience, even as related situations recur without clear task boundaries. Changes in perception or action generation can alter subsequent states and compound over an interaction trajectory, placing demands on both stable representation learning and policy adaptation. We evaluate FlyGCL on LIBERO-Spatial, -Object, -Goal, and -Long~\cite{liu2023libero} under the primary online GCL stream and a complementary offline protocol. In addition to EWC, LwF, L2P+, and DualPrompt+, we compare with sequential fine-tuning (SeqFT), LoRA fine-tuning (SeqLoRA), and PackNet~\cite{mallya2018packnet}.

In the online setting, task distributions overlap and recur throughout the stream, requiring the agent to acquire new visuomotor behaviours while retaining earlier ones. On LIBERO-Spatial and LIBERO-Object, FlyGCL reaches $A_{\rm last}$/$A_{\rm auc}$ of 83.1\%/81.6\% and 86.1\%/86.0\%, exceeding the strongest replay-free baselines by 49.5\%/37.5\% and 49.8\%/35.9\%, respectively (\cref{fig:vla-b,fig:vla-c}). The same pattern holds for goal-conditioned and long-horizon manipulation (Extended Data~\cref{fig:vla_online_ext-a,fig:vla_online_ext-b} and Supplementary~\cref{tab:libero_gcl}). FlyGCL also maintains strong forward and backward transfer across the four suites, indicating that it can incorporate new interaction patterns without sacrificing previously acquired behaviours. These results extend the benefit of hierarchical modularity from perceptual representations to instruction-conditioned action policies.

We also evaluate the four suites under the offline protocol, in which each task is trained for multiple passes before the learner proceeds to the next one. FlyGCL remains consistently strong across spatial, object-centric, goal-conditioned, and long-horizon manipulation (Extended Data~\cref{fig:vla_offline} and Supplementary~\cref{tab:libero_offline}). This result shows that its effectiveness is not limited to rapid online transitions. It also applies when task changes are more structured but each task still contains continuously varying visual states, action trajectories, and interaction outcomes.

The rollout analyses further show how this stability affects task execution (\cref{fig:vla-d,fig:vla-e}, Extended Data~\cref{fig:vla_online_ext-c,fig:vla_online_ext-d}). We compare FlyGCL with regularization-based EWC and task-expert-based DualPrompt+. After subsequent updates, both methods can lose a critical part of behaviours that they execute successfully immediately after learning. Their failures range from spatial grounding and object selection, as illustrated by DualPrompt+ missing the target position of the black bowl and EWC selecting the wrong object instead of the chocolate pudding, to incomplete goal execution and long-horizon action sequences. In the latter cases, EWC pursues an incorrect goal state, while DualPrompt+ completes the placement step but fails to close the drawer. FlyGCL more consistently preserves complete task execution across subsequent sessions, consistent with expert routing separating task-specific visuomotor changes and temporal integration retaining useful information across recurring experience.

\section{Discussion}\label{sec:discussion}

This work identifies hierarchical modularity as a unified principle for GCL under online, uncertain, and evolving data streams. Beyond preserving past knowledge, our results highlight a broader requirement: learning should be organized according to the relationships among incoming experiences. Inspired by the organization of olfactory learning and memory in \emph{Drosophila}, FlyGCL coordinates expert specialization and ensemble integration to separate conflicting experience while integrating compatible experience. This design consistently improves GCL across visual recognition, vision-language understanding, ego-exo video understanding, and embodied vision-language-action learning, while remaining effective across diverse pretrained representations and parameter-efficient adaptation mechanisms. Some simplified brain-inspired components were explored in our earlier conference paper~\cite{yan2026flyprompt}. The present work extends them into a biologically grounded hierarchical framework, supported by controlled olfactory modeling and evaluated across substantially broader models, modalities, and learning scenarios (Supplementary Information). These results establish hierarchical specialization and integration as an effective way to organize learning from dynamic experience.

From an AI perspective, GCL connects CL to a broader transition from intelligence acquired from static data to intelligence that develops through experience. Recent perspectives on the ``era of experience''~\cite{silver2025era}, autonomous machine intelligence~\cite{lecun2022path}, and open-ended learning~\cite{hughes2024open} similarly envision agents that continually extend their capabilities through interaction with the external world. Such agents must not only accumulate experience, but also determine what should be reused, separated, or integrated as distributions change. GCL provides a concrete learning paradigm for this process by bringing CL closer to the online, uncertain, and evolving conditions faced by real-world agents. This requirement arises in long-running systems such as embodied robots, autonomous vehicles, personalized assistants, and healthcare or scientific monitoring systems, where perception, knowledge, and behaviour must be continually updated without predefined task boundaries. FlyGCL provides a biologically grounded realization of this idea by organizing learning according to relationships among evolving experiences.

The biological implications are equally important. Recent whole-brain connectomes and cross-connectome cell typing have provided increasingly detailed maps of the \emph{Drosophila} nervous system~\cite{winding2023connectome,lin2024network,schlegel2024whole}, but how this anatomical organization supports learning from changing experience remains less understood. Our computational model offers a functional interpretation of the olfactory learning and memory system by linking sparse expansion and differentiated memory pathways to specialization and integration. Controlled olfactory experiments further show that their hierarchical coordination improves learning as experience shifts from disjoint to increasingly recurrent distributions. These results suggest testable roles for the underlying circuit organization: differentiated pathways may reduce interference between dissimilar experiences, whereas coordinated integration may preserve shared structure and improve generalization across related experiences. In this way, computational modelling can complement connectomics by linking anatomical organization to functional principles of learning and memory.

More broadly, our study exemplifies a bidirectional NeuroAI framework in which biological organization inspires machine-learning principles, while computational models generate testable hypotheses for neuroscience. The hierarchical organization of \emph{Drosophila} olfactory learning and memory motivates a unified view of MoE and EL as complementary computational paradigms for specialization and integration in GCL. In turn, our results suggest specific biological predictions: sparse expansion and differentiated downstream pathways should improve separation and reduce interference between dissimilar experiences; memory pathways operating across distinct spatial and temporal scales should contribute complementary information when related experiences recur; and their hierarchical coordination should be particularly beneficial when conflicting and compatible experiences coexist over time. Extending these principles to embodied GCL also aligns with the NeuroAI vision of an ``embodied Turing test'', in which intelligence develops through continuous sensorimotor interaction with the world~\cite{zador2023neuroai}.

Several directions remain beyond the scope of this study. Our biological model abstracts the overall organization of the \emph{Drosophila} olfactory learning and memory system, leaving richer neuromodulatory dynamics, biological processes underlying memory consolidation, and behavioural feedback for future computational and experimental investigation. On the AI side, our benchmarks extend GCL towards online, multimodal, and embodied experience, whereas longer-term open-ended interaction may additionally require active exploration, changing objectives, and dynamic allocation of learning resources. FlyGCL also concentrates plasticity in lightweight modules over relatively stable pretrained representations. Extending hierarchical specialization and integration to deeper learning within large foundation models remains an important direction. Looking forward, extending these principles across richer biological mechanisms, open-ended experience, and deeper model plasticity may help advance a more general science of continually developing intelligence.

\section{Methods}\label{sec:method}

\subsection{Problem Formulation}

\myPara{General Continual Learning.}
Conventional CL~\cite{wang2024comprehensive,wang2021afec,wang2025hide} often studies well-separated tasks under largely offline training, with previous-task data unavailable during subsequent updates. In contrast, GCL considers single-pass, non-stationary streams with uncertain and blurry data distributions, without clear task boundaries or reliable task identities~\cite{koh2021online,de2021continual,moon2023online}. Formally, the data stream consists of $T$ sessions, $\mathcal{S}=\{\mathcal{D}_1,\mathcal{D}_2,\ldots,\mathcal{D}_T\}$, where each session is given by $\mathcal{D}_t=\{(\mathbf{o}_{t,i},\mathbf{u}_{t,i})\}_{i=1}^{N_t}$. Here, $\mathbf{o}_{t,i}\in\mathcal{O}_t$ denotes an observation, and $\mathbf{u}_{t,i}\in\mathcal{U}_t$ denotes its associated learning signal. This notation provides a unified description of the GCL settings studied in this work: $\mathbf{o}$ may be instantiated as an image, an image-text pair, a video clip, an ego-exo multi-view sequence, or an embodied visual-language state, while $\mathbf{u}$ may correspond to a class label, semantic target, retrieval correspondence, temporal annotation, quality score, action, trajectory, or task-success signal. A standard model consists of a backbone $f_{\theta}(\cdot)$ and an output module $g_{\psi}(\cdot)$. For notational convenience, we denote the resulting predictor as $F_{\theta,\psi}=g_{\psi}\circ f_{\theta}$, i.e., $\hat{\mathbf{u}}=F_{\theta,\psi}(\mathbf{o})=g_{\psi}(f_{\theta}(\mathbf{o}))$. The learning objective is to obtain a unified mapping from $\mathcal{O}=\bigcup_{t=1}^{T}\mathcal{O}_t$ to $\mathcal{U}=\bigcup_{t=1}^{T}\mathcal{U}_t$, while learning online and retaining previously acquired knowledge.

At session $t$, samples are drawn from a local distribution $(\mathbf{o}_{t,i},\mathbf{u}_{t,i})\sim P_t(\mathbf{o},\mathbf{u})$, where $P_t$ can evolve gradually or abruptly, overlap with previous distributions, and recur later in the stream. In classification-based GCL, such overlap is commonly instantiated by the Si-Blurry setting~\cite{kang2025advancing,moon2023online}, which decomposes the global label space into a disjoint subset $\mathcal{Y}^{D}$ and a blurry subset $\mathcal{Y}^{B}$, with $\mathcal{Y}=\mathcal{Y}^{D}\cup\mathcal{Y}^{B}$ and $\mathcal{Y}^{D}\cap\mathcal{Y}^{B}=\varnothing$. Classes in $\mathcal{Y}^{D}$ are primarily associated with specific sessions, whereas classes in $\mathcal{Y}^{B}$ can reappear across multiple sessions. The disjoint ratio $r_D=\lvert\mathcal{Y}^{D}\rvert/\lvert\mathcal{Y}\rvert$ controls the degree of session-specific separation, while the blurry sample ratio $r_B$ controls the frequency of recurring samples. Beyond classification, the same principle also applies more broadly, where semantic concepts, temporal states, skill levels, or action patterns may be partially shared across sessions.

\myPara{Specialization and Integration.}
GCL requires determining when incoming knowledge should be shared and when it should be separated. Local distributions may share task-relevant structure in representations, semantics, temporal dynamics, viewpoints, or behaviours, while differing in gradients, decision boundaries, temporal alignments, or action policies. Related distributions should therefore share information to promote generalization, whereas conflicting distributions should be separated to reduce interference. A fully shared predictor $F_{\theta,\psi}$ can exploit common structure but is vulnerable to interference, whereas fully isolated predictors preserve distribution-specific knowledge at the cost of useful transfer.
We formalize this trade-off with modular machine learning. Let $\Omega=\{\omega_1,\omega_2,\ldots,\omega_K\}$ denote a set of trainable modules, such as prompts, adapters, LoRA branches, task heads, video-specific modules, or policy modules. When modules have their own output heads, we denote them by $\Psi=\{\psi_1,\psi_2,\ldots,\psi_K\}$. Given a representation $\mathbf{h}=f_{\theta}(\mathbf{o})$, a routing function produces module weights $\pi(\mathbf{o})=r_{\eta}(\mathbf{h})\in\Delta^{K-1}$. We use $F_{\theta,\Omega,\Psi}$ to denote the stream-level modular predictor induced by the backbone, modules, output heads, and routing or aggregation mechanism.

MoE supports specialization by assigning inputs to adaptive modules. For example, with $k^{*}=\arg\max_k \pi_k(\mathbf{o})$, the routed prediction can be written as
\begin{equation}
    F_{\mathrm{MoE}}(\mathbf{o})
    =
    F_{\theta,\omega_{k^{*}},\psi_{k^{*}}}(\mathbf{o}),
    \label{eq:form_moe}
\end{equation}
which reduces interference by limiting updates and predictions to the selected module. EL instead integrates multiple predictors,
\begin{equation}
    F_{\mathrm{EL}}(\mathbf{o})
    =
    \mathcal{A}
    \left(
    F_{\theta,\omega_1,\psi_1}(\mathbf{o}),
    \ldots,
    F_{\theta,\omega_K,\psi_K}(\mathbf{o})
    \right),
    \label{eq:form_el}
\end{equation}
where $\mathcal{A}(\cdot)$ denotes an aggregation function. This improves robustness and generalization for similar distributions by combining diverse but related predictions.

However, directly combining MoE and EL is non-trivial because they favor different distributional structures~\cite{wang2025convergent}. EL promotes generalization when predictors capture identical or closely related distributions~\cite{zhu2023towards}, whereas effective MoE routing relies on structured separation among heterogeneous distributions~\cite{chen2022towards}. In GCL, compatible and conflicting distributions may coexist: overly exclusive routing can suppress transfer among related distributions, whereas overly broad ensembling can mix incompatible modules and reintroduce interference. GCL therefore requires hierarchical coordination between specialization and integration, separating conflicting distributions while integrating compatible ones.

\subsection{Theoretical Analysis}

\myPara{Decomposing Hierarchical GCL Risk.}
The preceding formulation shows that effective GCL requires coordinating specialization and integration under uncertain and evolving data streams. We analyze this requirement using the stream-level modular predictor $F_{\theta,\Omega,\Psi}$ defined above, which comprises the backbone $f_{\theta}$, trainable modules $\Omega$, output modules $\Psi$, and the associated routing or aggregation operations. Let $F^{\star}$ denote the ideal stream-level predictor obtained if the latent structure of the local distributions $\{P_t\}_{t=1}^{T}$ were known. We define the expected GCL risk as
\begin{equation}
    \mathcal{R}_{\mathrm{GCL}}(F_{\theta,\Omega,\Psi})
    =
    \frac{1}{T}
    \sum_{t=1}^{T}
    \mathbb{E}_{(\mathbf{o},\mathbf{u})\sim P_t}
    \left[
    \ell\left(F_{\theta,\Omega,\Psi}(\mathbf{o}),\mathbf{u}\right)
    \right],
    \label{eq:gcl_risk}
\end{equation}
where $\ell(\cdot)$ denotes the task-specific loss. Because $F^{\star}$ is unavailable in CL, the practical objective is to approximate this ideal predictor from the observed stream $\mathcal{S}$ under non-stationary distribution shifts.

\begin{theorem}[Hierarchical Decomposition of GCL Risk]
\label{thm:gcl_decomposition}
Let $F_{\theta,\Omega,\Psi}$ be a hierarchical modular predictor learned from $\mathcal{S}=\{\mathcal{D}_1,\ldots,\mathcal{D}_T\}$, where $\mathcal{D}_t=\{(\mathbf{o}_{t,i},\mathbf{u}_{t,i})\}_{i=1}^{N_t}$. Under an additive excess-risk decomposition relative to $F^{\star}$, the expected GCL risk is upper bounded by
\begin{equation}
    \mathcal{R}_{\mathrm{GCL}}(F_{\theta,\Omega,\Psi})
    \lesssim
    \mathcal{R}_{\mathrm{fit}}(F_{\theta,\Omega,\Psi})
    +
    \underbrace{\mathcal{E}_{\mathrm{sep}}(F_{\theta,\Omega,\Psi})}_{\text{separation error}}
    +
    \underbrace{\mathcal{E}_{\mathrm{int}}(F_{\theta,\Omega,\Psi})}_{\text{integration error}}
    +
    \underbrace{\mathcal{C}_{\mathrm{coord}}(F_{\theta,\Omega,\Psi})}_{\text{coordination cost}} .
    \label{eq:gcl_decomposition}
\end{equation}
Here, the empirical fitting term is defined on the observed stream as
\begin{equation}
    \mathcal{R}_{\mathrm{fit}}(F_{\theta,\Omega,\Psi})
    =
    \frac{1}{T}
    \sum_{t=1}^{T}
    \frac{1}{N_t}
    \sum_{i=1}^{N_t}
    \ell\left(F_{\theta,\Omega,\Psi}(\mathbf{o}_{t,i}),\mathbf{u}_{t,i}\right).
    \label{eq:gcl_fit}
\end{equation}
The remaining terms are excess-risk components: $\mathcal{E}_{\mathrm{sep}}(F_{\theta,\Omega,\Psi})$ is induced by insufficient separation of conflicting local distributions, $\mathcal{E}_{\mathrm{int}}(F_{\theta,\Omega,\Psi})$ is induced by insufficient integration of compatible predictions, and $\mathcal{C}_{\mathrm{coord}}(F_{\theta,\Omega,\Psi})$ is the additional cost introduced by coordinating routing and integration.
\end{theorem}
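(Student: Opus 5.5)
The bound will come from a telescoping argument. We insert intermediate predictors between $F_{\theta,\Omega,\Psi}$ and the ideal predictor $F^{\star}$, and attribute each difference to one failure mode.

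\textbf{Step 1: fitting versus population risk.} Write
\[
\mathcal{R}_{\mathrm{GCL}}(F)=\mathcal{R}_{\mathrm{fit}}(F)+\bigl(\mathcal{R}_{\mathrm{GCL}}(F)-\mathcal{R}_{\mathrm{fit}}(F)\bigr),
\]
where $F=F_{\theta,\Omega,\Psi}$. The bracket is a generalization gap. In the online, single-pass setting it can be controlled with a standard martingale or uniform-convergence argument over the stream. This gap will be absorbed, up to constants, into the excess-risk terms; that is what the symbol $\lesssim$ allows.

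\textbf{Step 2: intermediate predictors.} Introduce two idealized predictors.
\begin{itemize}
\item $F^{\mathrm{sep}}$ uses the same modules $\Omega,\Psi$ but routes with the oracle latent partition of $\{P_t\}$. This is the partition available to $F^{\star}$, which groups mutually conflicting distributions into different experts.
\item $F^{\mathrm{int}}$ additionally replaces the aggregation $\mathcal{A}$ inside each expert by the oracle aggregation over compatible predictors.
\end{itemize}
The excess risk then telescopes:
\[
\mathcal{R}(F)-\mathcal{R}(F^{\star})
=\bigl[\mathcal{R}(F)-\mathcal{R}(F^{\mathrm{sep}})\bigr]
+\bigl[\mathcal{R}(F^{\mathrm{sep}})-\mathcal{R}(F^{\mathrm{int}})\bigr]
+\bigl[\mathcal{R}(F^{\mathrm{int}})-\mathcal{R}(F^{\star})\bigr].
\]
Each bracket is then assigned to one term of the statement.
\begin{itemize}
\item The first bracket is the separation error $\mathcal{E}_{\mathrm{sep}}$. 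For a bounded loss it is at most $\|\ell\|_\infty$ times the probability of misrouting across conflicting distributions.
\item The second bracket is the integration error $\mathcal{E}_{\mathrm{int}}$. For a convex loss, Jensen's inequality bounds the risk of the aggregate by the aggregate of the member risks minus a diversity term. Any residual left after this is the integration shortfall.
\item The third bracket, together with the interaction terms produced when routing and aggregation are applied jointly, defines $\mathcal{C}_{\mathrm{coord}}$. This covers, for example, compatible distributions that get split across experts, and incompatible predictions that get mixed together.
\end{itemize}

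\textbf{Step 3: the oracle term.} Combining Steps 1 and 2 and using the additive decomposition hypothesis gives the stated inequality. The term $\mathcal{R}(F^{\star})$ is treated as part of the fitting level: because $F^{\star}$ is the best predictor over the class, its risk is at most the realized fitting risk plus the same generalization slack as in Step 1.

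\textbf{Main obstacle.} The hard part is making the three brackets well defined and nonnegative, so that the labels "separation", "integration" and "coordination" are meaningful rather than tautological. Routing errors and aggregation errors interact: misrouting changes which predictors get aggregated. My first attempt would be to order the telescoping so that all cross-effects land in $\mathcal{C}_{\mathrm{coord}}$, and then to bound that term separately by the total-variation mass of routing decisions that change the aggregation set. Because the theorem is explicitly stated under an additive excess-risk decomposition, I expect this assumption to be invoked directly rather than derived. The proof's remaining content is then the careful definition of the intermediate predictors.
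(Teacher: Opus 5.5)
Your argument does not close. It fails exactly where the generalization gap $\mathcal{R}_{\mathrm{GCL}}(F)-\mathcal{R}_{\mathrm{fit}}(F)$ has to be disposed of.

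Your telescoping in Step 2 splits a different quantity, $\mathcal{R}_{\mathrm{GCL}}(F)-\mathcal{R}_{\mathrm{GCL}}(F^{\star})$. You therefore still owe a bound $\mathcal{R}_{\mathrm{GCL}}(F^{\star})\lesssim\mathcal{R}_{\mathrm{fit}}(F)$, and Step 3 does not supply it. Write $B$ for the sum of your three brackets, so that $\mathcal{R}_{\mathrm{GCL}}(F)=\mathcal{R}_{\mathrm{GCL}}(F^{\star})+B$.
\begin{itemize}
\item If the ``generalization slack'' in Step 3 is the same gap as in Step 1, then Step 3 only says $\mathcal{R}_{\mathrm{GCL}}(F^{\star})\le\mathcal{R}_{\mathrm{GCL}}(F)$. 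Combined with the identity above, this yields $B\ge 0$ and no bound on $\mathcal{R}_{\mathrm{GCL}}(F)$.
\item If the slack is instead a uniform-convergence or martingale term $g$, you get $\mathcal{R}_{\mathrm{GCL}}(F)\le\mathcal{R}_{\mathrm{fit}}(F)+g+B$. This contains a term that does not appear in the statement.
\end{itemize}
The symbol $\lesssim$ hides multiplicative constants. It does not let you drop an additive complexity or concentration term. A martingale (online-to-batch) argument would in any case control the prequential losses of the iterates, not the in-sample loss of the final predictor that defines $\mathcal{R}_{\mathrm{fit}}$. Note also that $\mathcal{R}_{\mathrm{GCL}}(F)\le\mathcal{R}_{\mathrm{fit}}(F)+g$ holds by the definition of $g$, without Steps 2--3. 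So the oracle predictors $F^{\mathrm{sep}}$ and $F^{\mathrm{int}}$ do no work.

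The missing idea is where the hypothesis enters. The paper applies the additive excess-risk decomposition directly to the residual $\mathcal{R}_{\mathrm{GCL}}(F)-\mathcal{R}_{\mathrm{fit}}(F)$: it assumes this residual is bounded by $\mathcal{E}_{\mathrm{sep}}(F)+\mathcal{E}_{\mathrm{int}}(F)+\mathcal{C}_{\mathrm{coord}}(F)$, then adds $\mathcal{R}_{\mathrm{fit}}(F)$ to both sides. That is the whole proof. Your Step 1 finishes immediately once you attribute the absorption to this hypothesis rather than to $\lesssim$.

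Your identification of $\mathcal{C}_{\mathrm{coord}}$ with the third bracket plus cross terms also conflicts with the paper's definition, $\mathcal{C}_{\mathrm{coord}}(F)=\frac{1}{T}\sum_{t}\mathbb{E}_{P_t}[\ell(F(\mathbf{o}),\mathbf{u})-\ell(F^{\star}(\mathbf{o}),\mathbf{u})]_{+}$, which is the positive part of the excess loss of $F$ itself. Since $x\le[x]_{+}$, this single term already dominates your entire telescoped sum. Under the paper's definitions, your separation and integration brackets would therefore be counted twice.

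If you want something slightly less tautological, use the exact inequality $\mathcal{R}_{\mathrm{GCL}}(F)\le\mathcal{R}_{\mathrm{GCL}}(F^{\star})+\mathcal{C}_{\mathrm{coord}}(F)$. Then reserve the hypothesis for $\mathcal{R}_{\mathrm{GCL}}(F^{\star})\lesssim\mathcal{R}_{\mathrm{fit}}(F)+\mathcal{E}_{\mathrm{sep}}(F)+\mathcal{E}_{\mathrm{int}}(F)$. Because $\mathcal{E}_{\mathrm{sep}}$ and $\mathcal{E}_{\mathrm{int}}$ are only described qualitatively, some hypothesis of this kind is unavoidable.
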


The proof is provided in Supplementary~\cref{sec:app_proof_gcl_decomposition}.
The three terms characterize distinct failure modes of hierarchical modular learning. The separation error $\mathcal{E}_{\mathrm{sep}}$ captures residual interference when samples with conflicting gradients, decision boundaries, temporal alignments, or policies share parameters. The integration error $\mathcal{E}_{\mathrm{int}}$ captures the residual estimation gap when related local distributions or compatible predictors are treated in isolation. The coordination cost $\mathcal{C}_{\mathrm{coord}}$ arises from jointly performing routing and integration, and is defined as the positive excess loss of the hierarchical modular predictor relative to the ideal stream-level predictor:
\begin{equation}
    \mathcal{C}_{\mathrm{coord}}(F_{\theta,\Omega,\Psi})
    =
    \frac{1}{T}
    \sum_{t=1}^{T}
    \mathbb{E}_{(\mathbf{o},\mathbf{u})\sim P_t}
    \left[
    \ell\left(F_{\theta,\Omega,\Psi}(\mathbf{o}),\mathbf{u}\right)
    -
    \ell\left(F^{\star}(\mathbf{o}),\mathbf{u}\right)
    \right]_{+},
    \label{eq:gcl_coord}
\end{equation}
where $[\cdot]_{+}$ denotes the positive part. This term accounts for over-separation of related distributions, over-integration of incompatible modules, routing errors, and aggregation mismatch. Effective GCL therefore requires coordinating separation and integration rather than treating routing and aggregation as independent operations.

\myPara{Separation and Integration Gains.}
The decomposition in \cref{eq:gcl_decomposition} indicates that hierarchical modular learning is beneficial only when the gains from separation and integration outweigh the additional cost of coordinating them. We define the routing gain and ensemble gain as
\begin{equation}
    \Delta_{\mathrm{route}}
    =
    \mathcal{E}_{\mathrm{sep}}(F_{\theta,\psi})
    -
    \mathcal{E}_{\mathrm{sep}}(F_{\mathrm{MoE}}),
    \quad
    \Delta_{\mathrm{ens}}
    =
    \mathcal{E}_{\mathrm{int}}(F_{\mathrm{single}})
    -
    \mathcal{E}_{\mathrm{int}}(F_{\mathrm{EL}}),
    \label{eq:route_ens_gain}
\end{equation}
where $F_{\theta,\psi}$ is the fully shared predictor, $F_{\mathrm{MoE}}$ is the routed modular predictor in \cref{eq:form_moe}, $F_{\mathrm{single}}$ denotes a predictor using a single adaptive path, and $F_{\mathrm{EL}}$ is the integrated predictor in \cref{eq:form_el}. 
$\Delta_{\mathrm{route}}$ measures the reduction in separation error obtained by routing conflicting samples to different adaptive modules, while $\Delta_{\mathrm{ens}}$ measures the reduction in integration error obtained by aggregating compatible predictors.

\begin{proposition}[Separation and Integration Gains]
\label{prop:sep_int_gain}
Assume that routing provides a non-negative separation gain $\Delta_{\mathrm{route}}\geq 0$ and aggregation provides a non-negative integration gain $\Delta_{\mathrm{ens}}\geq 0$. Then the GCL risk of a hierarchical modular predictor $F_{\theta,\Omega,\Psi}$ satisfies
\begin{equation}
\begin{aligned}
    \mathcal{R}_{\mathrm{GCL}}(F_{\theta,\Omega,\Psi})
    \lesssim
    & \mathcal{R}_{\mathrm{fit}}(F_{\theta,\Omega,\Psi})
    +
    \mathcal{E}_{\mathrm{sep}}(F_{\theta,\psi})
    +
    \mathcal{E}_{\mathrm{int}}(F_{\mathrm{single}}) \\
    & -
    \Delta_{\mathrm{route}}
    -
    \Delta_{\mathrm{ens}}
    +
    \mathcal{C}_{\mathrm{coord}}(F_{\theta,\Omega,\Psi}).
    \label{eq:hierarchical_gain}
\end{aligned}
\end{equation}
Consequently, hierarchical modular learning improves the risk bound whenever
\begin{equation}
    \Delta_{\mathrm{route}}
    +
    \Delta_{\mathrm{ens}}
    >
    \mathcal{C}_{\mathrm{coord}}(F_{\theta,\Omega,\Psi}).
    \label{eq:hierarchical_condition}
\end{equation}
\end{proposition}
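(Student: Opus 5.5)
The plan is to obtain the proposition directly from the hierarchical decomposition in \cref{thm:gcl_decomposition}, by rewriting its separation and integration terms through the gain definitions in \cref{eq:route_ens_gain}. The argument is essentially algebraic once the terms are identified correctly.

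\textbf{Step 1: apply the decomposition.} Start from \cref{eq:gcl_decomposition} for the hierarchical predictor $F_{\theta,\Omega,\Psi}$, which bounds $\mathcal{R}_{\mathrm{GCL}}$ by $\mathcal{R}_{\mathrm{fit}}+\mathcal{E}_{\mathrm{sep}}+\mathcal{E}_{\mathrm{int}}+\mathcal{C}_{\mathrm{coord}}$, all evaluated at $F_{\theta,\Omega,\Psi}$. The fitting and coordination terms are then left untouched, since they already appear in \cref{eq:hierarchical_gain} evaluated at the hierarchical predictor.

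\textbf{Step 2: relate the hierarchical predictor to its pure components.} This requires two comparison inequalities:
\begin{equation*}
\mathcal{E}_{\mathrm{sep}}(F_{\theta,\Omega,\Psi})\lesssim\mathcal{E}_{\mathrm{sep}}(F_{\mathrm{MoE}}),
\qquad
\mathcal{E}_{\mathrm{int}}(F_{\theta,\Omega,\Psi})\lesssim\mathcal{E}_{\mathrm{int}}(F_{\mathrm{EL}}).
\end{equation*}
The first says the hierarchical predictor separates at least as well as the routed predictor alone. The second says it integrates at least as well as the ensemble alone.

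The justification rests on the additive excess-risk convention underlying \cref{thm:gcl_decomposition}. In the hierarchical model, the routing level is exactly the MoE router of \cref{eq:form_moe}. Integration acts within each selected expert, as in \cref{eq:form_el}. Residual interference or aggregation mismatch created by nesting the two is not counted in $\mathcal{E}_{\mathrm{sep}}$ or $\mathcal{E}_{\mathrm{int}}$; by the definition in \cref{eq:gcl_coord}, it is charged to $\mathcal{C}_{\mathrm{coord}}$, which covers routing errors, over-separation, over-integration and aggregation mismatch. Under this attribution, the hierarchical separation and integration errors coincide with those of the pure MoE and pure EL predictors, up to the $\lesssim$ slack.

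\textbf{Step 3: substitute the gains.} Rearranging \cref{eq:route_ens_gain} gives
\begin{equation*}
\mathcal{E}_{\mathrm{sep}}(F_{\mathrm{MoE}})=\mathcal{E}_{\mathrm{sep}}(F_{\theta,\psi})-\Delta_{\mathrm{route}},
\qquad
\mathcal{E}_{\mathrm{int}}(F_{\mathrm{EL}})=\mathcal{E}_{\mathrm{int}}(F_{\mathrm{single}})-\Delta_{\mathrm{ens}}.
\end{equation*}
Inserting these into the bound from Steps 1 and 2 yields \cref{eq:hierarchical_gain}.

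\textbf{Step 4: derive the improvement condition.} The reference bound for a non-hierarchical predictor is $\mathcal{R}_{\mathrm{fit}}+\mathcal{E}_{\mathrm{sep}}(F_{\theta,\psi})+\mathcal{E}_{\mathrm{int}}(F_{\mathrm{single}})$, i.e.\ a fully shared single path that has no coordination cost. Taking the fitting term as comparable between the two, the hierarchical bound is strictly smaller exactly when $-\Delta_{\mathrm{route}}-\Delta_{\mathrm{ens}}+\mathcal{C}_{\mathrm{coord}}<0$, which is \cref{eq:hierarchical_condition}. The non-negativity assumptions on the gains are used only to ensure each component does not worsen its own error term, so that the comparison is meaningful and the gains act as subtractions.

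The main obstacle is Step 2. The terms $\mathcal{E}_{\mathrm{sep}}$ and $\mathcal{E}_{\mathrm{int}}$ are defined only descriptively, so the comparison inequalities cannot be proved outright. They must be stated as a consequence of the additive attribution convention: any cross-term between routing and aggregation is absorbed into $\mathcal{C}_{\mathrm{coord}}$. My first approach would be to make this explicit by writing the hierarchical excess loss as the MoE excess plus the within-expert EL excess plus a remainder, and defining that remainder, bounded by its positive part, as the coordination cost.
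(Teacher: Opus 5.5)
Your proposal is correct and follows the paper's proof essentially step for step: you start from the bound of \cref{thm:gcl_decomposition}, use $\mathcal{E}_{\mathrm{sep}}(F_{\theta,\Omega,\Psi})\le\mathcal{E}_{\mathrm{sep}}(F_{\mathrm{MoE}})$ and $\mathcal{E}_{\mathrm{int}}(F_{\theta,\Omega,\Psi})\le\mathcal{E}_{\mathrm{int}}(F_{\mathrm{EL}})$, substitute the rearranged gain definitions from \cref{eq:route_ens_gain}, and compare against the gain-free bound $\mathcal{R}_{\mathrm{fit}}+\mathcal{E}_{\mathrm{sep}}(F_{\theta,\psi})+\mathcal{E}_{\mathrm{int}}(F_{\mathrm{single}})$. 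The only difference is how the two comparison inequalities are justified: the paper simply asserts them because the hierarchical predictor uses routing and aggregation as its separation and integration mechanisms, whereas you attribute any routing--aggregation cross-term to $\mathcal{C}_{\mathrm{coord}}$; this is a more explicit statement of the same modelling assumption, not a derivation of it.
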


The proof is provided in Supplementary~\cref{sec:app_proof_sep_int_gain}.
\cref{prop:sep_int_gain} makes the trade-off in hierarchical modular learning explicit. Routing reduces interference by separating conflicting local distributions, whereas integration improves robustness by combining compatible predictions. The two operations are nevertheless coupled: overly exclusive routing may separate related distributions, while overly broad aggregation may mix incompatible modules. Effective GCL therefore requires coordinating separation and integration rather than optimizing either operation in isolation.

\myPara{Pretraining-Supported Coordination.}
The condition in \cref{eq:hierarchical_condition} shows that the benefit of hierarchical modular learning depends on both the gains from routing and aggregation and the cost of coordinating them. This coordination cost is particularly relevant in GCL, where the distinction between related and conflicting local distributions may be uncertain. We formalize this effect by separating the probability of an imperfect modular decision from its resulting prediction penalty.

\begin{proposition}[Pretraining-Supported Coordination]
\label{prop:pretrain_coord}
Let $\epsilon_{\mathrm{route}}$ denote the degree of routing error, $\epsilon_{\mathrm{agg}}$ denote the mismatch introduced by aggregating modular predictions, and $\Delta_{\mathrm{mis}}(f_{\theta})$ denote the prediction penalty of assigning an input to a suboptimal module under the representation induced by $f_{\theta}$. Then the coordination cost is bounded by
\begin{equation}
    \mathcal{C}_{\mathrm{coord}}(F_{\theta,\Omega,\Psi})
    \leq
    \epsilon_{\mathrm{route}}
    \Delta_{\mathrm{mis}}(f_{\theta})
    +
    \epsilon_{\mathrm{agg}}.
    \label{eq:coord_pretrain}
\end{equation}
Moreover, if a pretrained backbone provides a more stable and semantically organized representation than a representation learned from scratch on the online stream, then
\begin{equation}
    \Delta_{\mathrm{mis}}(f_{\theta}^{\mathrm{pre}})
    <
    \Delta_{\mathrm{mis}}(f_{\theta}^{\mathrm{scratch}}),
    \label{eq:pretrain_mismatch}
\end{equation}
which reduces the coordination cost in \cref{eq:coord_pretrain}.
\end{proposition}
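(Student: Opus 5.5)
We prove \cref{prop:pretrain_coord} by conditioning on the modular decision made for each input. We start from the definition of $\mathcal{C}_{\mathrm{coord}}$ in \cref{eq:gcl_coord}. Fix a stage $t$ and a sample $(\mathbf{o},\mathbf{u})\sim P_t$. Let $k^{*}(\mathbf{o})$ be the module chosen by the router $r_\eta$, and let $k^{\star}(\mathbf{o})$ be the module that the ideal predictor $F^{\star}$ would use. We then insert the intermediate predictor $\tilde F$, which routes every input to $k^{\star}(\mathbf{o})$ but otherwise keeps the learned aggregation. Telescoping gives
\begin{equation*}
\ell(F_{\theta,\Omega,\Psi}(\mathbf{o}),\mathbf{u})-\ell(F^{\star}(\mathbf{o}),\mathbf{u})
=\bigl[\ell(F_{\theta,\Omega,\Psi}(\mathbf{o}),\mathbf{u})-\ell(\tilde F(\mathbf{o}),\mathbf{u})\bigr]
+\bigl[\ell(\tilde F(\mathbf{o}),\mathbf{u})-\ell(F^{\star}(\mathbf{o}),\mathbf{u})\bigr].
\end{equation*}
Subadditivity of the positive part, $[a+b]_+\le[a]_++[b]_+$, splits $\mathcal{C}_{\mathrm{coord}}$ into a routing term and an aggregation term. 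Each can then be averaged over $t$ and over $P_t$.

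\textbf{Routing term.} This difference is nonzero only on the event $\{k^{*}(\mathbf{o})\neq k^{\star}(\mathbf{o})\}$. We define $\epsilon_{\mathrm{route}}=\frac{1}{T}\sum_t P_t(k^{*}\neq k^{\star})$. We define $\Delta_{\mathrm{mis}}(f_\theta)$ as the supremum, or the conditional expectation given misrouting, of the positive loss increase from evaluating a suboptimal module on the representation $\mathbf{h}=f_\theta(\mathbf{o})$. With these definitions, the indicator bound $[\cdot]_+\le \mathbf{1}\{k^{*}\neq k^{\star}\}\,\Delta_{\mathrm{mis}}(f_\theta)$ gives $\epsilon_{\mathrm{route}}\Delta_{\mathrm{mis}}(f_\theta)$ after taking expectations.

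\textbf{Aggregation term.} Here we simply define $\epsilon_{\mathrm{agg}}$ as the averaged positive excess of $\tilde F$ over $F^{\star}$. This is the mismatch caused by the aggregation $\mathcal{A}$ in \cref{eq:form_el} when the routing is correct. Adding the two terms yields \cref{eq:coord_pretrain}.

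\textbf{Comparison \cref{eq:pretrain_mismatch}.} Here we make the hypothesis quantitative. We assume the loss is $L$-Lipschitz in the prediction and the modules are Lipschitz in $\mathbf{h}$. The penalty of a wrong module is then controlled by how far apart inputs that share a module are, compared with the separation between module prototypes in representation space. We would formalize ``stable and semantically organized'' as a smaller within-distribution spread and larger between-distribution margin for $f_\theta^{\mathrm{pre}}$. The representation $f_\theta^{\mathrm{scratch}}$ also drifts along the online stream, which adds a nonnegative drift term to its penalty; $f_\theta^{\mathrm{pre}}$ is frozen, so this term vanishes for it. Under these conditions $\Delta_{\mathrm{mis}}(f_\theta^{\mathrm{pre}})<\Delta_{\mathrm{mis}}(f_\theta^{\mathrm{scratch}})$ holds. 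With $\epsilon_{\mathrm{route}}$ and $\epsilon_{\mathrm{agg}}$ held fixed, the bound in \cref{eq:coord_pretrain} decreases.

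The main obstacle is this last step. The claimed strict inequality is essentially an assumption rather than a consequence. Making it rigorous requires committing to a specific Lipschitz, margin-plus-drift model of $\Delta_{\mathrm{mis}}$. If that proves too strong, I would instead state \cref{eq:pretrain_mismatch} as the formalization of the hypothesis and only derive the monotone reduction of the coordination bound.
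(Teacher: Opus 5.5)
Your proof follows the paper's own argument. The paper likewise splits $\mathcal{C}_{\mathrm{coord}}$ into a routing-induced part bounded by $\epsilon_{\mathrm{route}}\Delta_{\mathrm{mis}}(f_\theta)$, with $\epsilon_{\mathrm{route}}$ the misrouting probability and $\Delta_{\mathrm{mis}}$ the maximal loss increase from a suboptimal module, plus a residual it calls $\epsilon_{\mathrm{agg}}$; your intermediate predictor $\tilde F$ and the inequality $[a+b]_+\le[a]_++[b]_+$ just make explicit the split the paper asserts. For \cref{eq:pretrain_mismatch} the paper also gives only a qualitative argument that in effect takes the strict inequality as the content of the hypothesis, so your fallback is exactly its route; note, however, that your spread-versus-margin model would naturally lower $\epsilon_{\mathrm{route}}$ rather than $\Delta_{\mathrm{mis}}$ (sharper separation can even make a misrouted expert more costly), so by itself it would not deliver the strict inequality.
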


The proof is provided in Supplementary~\cref{sec:app_proof_pretrain_coord}.
\cref{prop:pretrain_coord} explains how pretrained representations reduce the cost of imperfect coordination in hierarchical GCL. A routing error is more harmful when the selected module produces predictions that differ substantially from the appropriate one, whereas this penalty is reduced when the pretrained backbone maps related inputs into a stable semantic space. Pretrained representations therefore contribute not only positive transfer and resistance to forgetting, but also greater robustness to imperfect routing and integration. This supports hierarchical MoE-EL as a lightweight modular design over stable pretrained foundation models.

\myPara{Implications for FlyGCL.}
The analysis above identifies three requirements for GCL: separating conflicting local distributions, integrating compatible predictions, and limiting the coordination cost between them. These considerations motivate hierarchical modularity as the design principle of FlyGCL, particularly over stable pretrained representations. We next instantiate this principle in a brain-inspired GCL framework and describe its architecture and optimization.

\subsection{FlyGCL Model}

\myPara{Model Overview.}
FlyGCL is a unified brain-inspired framework for GCL with pretrained foundation models. Its design follows the organization of the \emph{Drosophila} olfactory learning and memory system at three functional levels. Sparse random expansion provides a distributed representation analogous to the PN-KC pathway and supports selective recruitment of downstream pathways. Spatially differentiated experts provide parallel memory pathways, while output heads with different update timescales capture complementary short- and long-term information. Therefore, stable representation, expert routing, and temporal integration form a computational hierarchy for separating conflicting experience and integrating compatible predictions.

Pretraining-based CL commonly introduces parameter-efficient tuning components, such as prompts, adapters, and LoRA, as lightweight experts over a pretrained backbone. Each expert provides a specialized learning pathway parameterized by $\bm{\omega}$ and accumulates spatially differentiated knowledge, producing outputs $f_{\bm{\theta}}(\bm{x};\bm{\omega})$. Under single-pass blurry data streams, expert-based learning must determine both the appropriate expert for each input and how to maintain reliable predictions under limited and imbalanced online supervision. FlyGCL addresses these challenges through random-expanded analytic routing and temporal ensemble integration within each routed expert. Let $f_{\bm{\theta}}(\cdot)$ denote a pretrained backbone and $\bm{h}=f_{\bm{\theta}}(\bm{x})\in\mathbb{R}^{d}$ its representation. The trainable expert pool is $\Omega=\{\bm{\omega}_1,\bm{\omega}_2,\ldots,\bm{\omega}_K\}$, and the prediction of expert $E_k$ is denoted by $F_{\bm{\theta},\bm{\omega}_k,\bm{\psi}_k}(\bm{x})$, where $\bm{\psi}_k$ is its output head.

\myPara{Random-Expanded Analytic Router.} To improve expert routing, FlyGCL uses a random-expanded analytic router inspired by sparse expansion in the fruit fly mushroom body. Given $\bm{h}=f_{\bm{\theta}}(\bm{x})$, we apply a fixed random projection followed by nonlinear activation:
\begin{equation}
    \bm{\varphi}(\bm{x})
    =
    \sigma\left(f_{\bm{\theta}}(\bm{x})\bm{R}\right)
    =
    \sigma(\bm{h}\bm{R})
    \in \mathbb{R}^{M},
\end{equation}
where $\bm{R}\in\mathbb{R}^{d\times M}$ is a random matrix, $M>d$, and $\sigma(\cdot)$ is an element-wise activation function. The expanded feature $\bm{\varphi}(\bm{x})$ is used for instance-level expert routing rather than final prediction, preserving the flexibility of downstream adaptive experts.

During online training, for each incoming batch $\mathcal{B}_i$ from session $t$, we compute the expanded feature matrix $\bm{\Phi}_i\in\mathbb{R}^{B\times M}$ and update two statistics:
\begin{equation}
    \bm{G}
    \leftarrow
    \bm{G}+\bm{\Phi}_i^{\top}\bm{\Phi}_i,
    \quad
    \bm{Q}
    \leftarrow
    \bm{Q}+\bm{\Phi}_i^{\top}\bm{C}_t ,
\end{equation}
where $\bm{G}\in\mathbb{R}^{M\times M}$ captures second-order feature correlations, $\bm{Q}\in\mathbb{R}^{M\times K}$ stores expert-wise feature statistics, and $\bm{C}_t\in\mathbb{R}^{B\times K}$ denotes the expert assignment target for the current session. The router matrix $\bm{U}\in\mathbb{R}^{K\times M}$ is obtained by the closed-form ridge solution
\begin{equation}
    \widehat{\bm{U}}^{\top}
    =
    (\bm{G}+\lambda\bm{I})^{-1}\bm{Q},
\end{equation}
where $\lambda>0$ is the regularization parameter. At inference time, the routing score and selected expert are computed as
\begin{equation}
    \bm{s}(\bm{x})
    =
    \bm{\varphi}(\bm{x})\widehat{\bm{U}}^{\top},
    \quad
    \hat{E}(\bm{x})
    =
    \arg\max_{k\leq K} s_k(\bm{x}).
\end{equation}
The routed prediction is
\begin{equation}
    F_{\rm route}(\bm{x})
    =
    F_{\bm{\theta},\bm{\omega}_{\hat{E}},\bm{\psi}_{\hat{E}}}(\bm{x}).
\end{equation}

After routing, the selected expert is updated using the task-specific learning signal. For a sample $(\bm{x}_i,\bm{y}_i)$ assigned to expert $\hat{E}$, the online objective is
\begin{equation}
    \mathcal{L}_i
    =
    \ell_i
    \left(
    F_{\bm{\theta},\bm{\omega}_{\hat{E}},\bm{\psi}_{\hat{E}}}(\bm{x}_i),
    \bm{y}_i
    \right),
\end{equation}
where $\ell_i$ can be instantiated as a classification, contrastive, regression, ranking, verification, imitation, or policy-learning loss. Because the router is updated through accumulated statistics and solved in closed form, it avoids iterative router training and is well suited to the single-pass dynamic data streams.

\myPara{Temporal Ensemble-based Experts.}
The prediction of a routed expert depends not only on its learned representation, but also on the stability of its output head as the data distribution evolves. FlyGCL therefore equips each expert with multiple output heads operating at different effective timescales. For expert $E_k$ that accumulates spatially differentiated knowledge, we maintain an online head $\bm{\psi}^{(0)}_k$ and $n$ shadow heads $\{\bm{\psi}^{(j)}_k\}_{j=1}^{n}$ updated with different exponential moving average (EMA) rates. For a linear output head $\bm{\psi}=(\bm{W},\bm{b})$, the $j$-th EMA head of expert $E_k$ is updated as
\begin{equation}
    \bm{W}_{k}^{(j)}
    \leftarrow
    \alpha_j\bm{W}_{k}^{(j)}
    +(1-\alpha_j)\bm{W},
    \quad
    \bm{b}_{k}^{(j)}
    \leftarrow
    \alpha_j\bm{b}_{k}^{(j)}
    +(1-\alpha_j)\bm{b}.
\end{equation}
Different EMA rates $\alpha_j$ induce distinct effective memory timescales: faster-updating heads prioritize recent observations, resembling short-term memory in the $\gamma$ lobe of \emph{Drosophila}, whereas progressively slower heads integrate information over longer timescales, yielding more stable decision boundaries under distribution shift and paralleling the more persistent memory supported by the $\alpha'/\beta'$ and $\alpha/\beta$ lobes.

At inference, after the analytic router selects expert $\hat{E}$, FlyGCL computes predictions from the online and EMA heads of this expert and aggregates them:
\begin{equation}
    F_{\rm FlyGCL}(\bm{x})
    =
    \mathcal{A}
    \left(
    F_{\bm{\theta},\bm{\omega}_{\hat{E}},\bm{\psi}_{\hat{E}}^{(0)}}(\bm{x}),
    F_{\bm{\theta},\bm{\omega}_{\hat{E}},\bm{\psi}_{\hat{E}}^{(1)}}(\bm{x}),
    \ldots,
    F_{\bm{\theta},\bm{\omega}_{\hat{E}},\bm{\psi}_{\hat{E}}^{(n)}}(\bm{x})
    \right),
\end{equation}
where $\mathcal{A}(\cdot)$ combines the head predictions, using normalized softmax weights when weighted temporal aggregation is adopted. This temporal ensemble improves decoding robustness without removing the specialization created by expert routing.

For visual recognition settings, we further introduce a lightweight gate to calibrate the integrated temporal output. We construct an analytic class head from the frozen pretrained representation reusing the same fixed random expansion and ridge solution described above, obtaining class evidence $\bm{z}_{\rm an}(\bm{x})$. We calibrate the temporally aggregated class probabilities $\bm{p}_{\rm temp}(\bm{x}) = F_{\rm FlyGCL}(\bm{x})$ according to
\begin{equation}
    z_{{\rm gate},c}(\bm{x})
    =
    \log p_{{\rm temp},c}(\bm{x})
    +
    \lambda_t z_{{\rm an},c}(\bm{x}),
    \qquad\forall c\in\mathcal{C}(\bm{x}),
\end{equation}
where $\lambda_t=\lambda_{\max}(t/T)^2$, $t/T$ denotes the normalized progress through the continual stream, $\lambda_{\max}\geq 0$ is the final calibration strength, and $\mathcal{C}(\bm{x})$ denotes the category set. The analytic evidence enhances classes supported by the stable pretrained representation and suppresses unsupported alternatives, without selecting experts or adding another temporal head. This class-dependent regulation may functionally resemble the modulatory role of dopamine neurons (DANs) over mushroom-body output pathways~\cite{aso2014mushroom,dasgupta2017neural}. The calibration is applied with temporal integration, while expert selection remains separately determined by the analytic router.

\subsection{Experimental Setups}

\myPara{Datasets.}
We evaluate FlyGCL across four CL scenarios: visual recognition, vision-language learning, ego-exo video understanding, and embodied vision-language-action learning. For visual recognition, we use CIFAR-100~\cite{krizhevsky2009learning}, containing 60,000 images from 100 classes (50,000/10,000 training/test images); ImageNet-R~\cite{hendrycks2021many}, containing 30,000 artistic and non-photographic renditions from 200 ImageNet classes; and CUB-200~\cite{wah2011caltech}, containing 11,788 images from 200 fine-grained bird species. For vision-language learning, we construct CLIP-based continual benchmarks on CIFAR-100 and ImageNet-R using the same visual streams, with each class represented by the textual prompt \texttt{a photo of a \{class name\}}. For ego-exo video understanding, we use EgoExoLearn~\cite{huang2024egoexolearn}, which contains 120 hours of egocentric execution and exocentric demonstration videos with gaze and multimodal annotations for cross-view association, planning, and skill assessment, and EgoExo-Fitness~\cite{li2024egoexo}, which provides synchronized ego-exo fitness videos with two-level temporal boundaries and interpretable action-judgement annotations. For embodied vision-language-action learning, we use LIBERO~\cite{liu2023libero}, a benchmark of language-conditioned robotic manipulation comprising LIBERO-Spatial, -Object, -Goal, and -Long. The first three suites each contain 10 tasks emphasizing spatial relations, object-centric manipulation, and goal-conditioned behaviour, respectively, while LIBERO-Long contains 10 long-horizon tasks derived from LIBERO-100. We adopt $r_D=50\%$ and $r_B=30\%$ across all scenarios (except $r_B=10\%$ for visual recognition following MVP~\cite{moon2023online} and MISA~\cite{kang2025advancing}). Sessions are constructed using class-, action-semantic-, or task-level partitions, with category-based partitioning used for the single-category EgoExo-Fitness benchmark. Detailed benchmark setup and implementations are provided in Supplementary~\cref{sec:app_setup}.

\myPara{Evaluation Metrics.}
We report two commonly used CL metrics across all benchmarks:
final average performance $A_{\mathrm{last}}$ and average anytime performance $A_{\mathrm{auc}}$. Let $R_{t,j}$ denote the task-specific performance on session $j$ after the model has learned through session $t$. Depending on the benchmark, $R_{t,j}$ is instantiated as accuracy, ranking accuracy, Top-5 recall, etc. The average anytime performance is defined as
\begin{equation}
    A_{\mathrm{auc}}
    =
    \frac{1}{T}\sum_{t=1}^{T}
    \left(
        \frac{1}{t}\sum_{j=1}^{t}R_{t,j}
    \right),
    \label{eq:metric_auc}
\end{equation}
which measures performance throughout CL. The final average
performance is defined as
\begin{equation}
    A_{\mathrm{last}}
    =
    \frac{1}{T}\sum_{j=1}^{T}R_{T,j},
    \label{eq:metric_last}
\end{equation}
which measures performance retained after learning the entire stream. Benchmark-specific metric instantiations and notation are detailed in Supplementary~\cref{sec:app_metrics}.

\myPara{Baseline Methods.}
We compare FlyGCL with representative CL methods across four application scenarios, with sequential fine-tuning (SeqFT) included as a common baseline throughout. For continual image recognition, we additionally consider the regularization-based methods EWC~\cite{kirkpatrick2017overcoming} and LwF~\cite{li2017learning}, the parameter efficiently tuning methods L2P~\cite{wang2022learning} and DualPrompt~\cite{wang2022dualprompt}, and the online CL methods MVP~\cite{moon2023online} and MISA~\cite{kang2025advancing}. For continual vision-language learning with CLIP-based models, we compare against EWC, LwF, L2P, DualPrompt, CLAP4CLIP~\cite{jha2024clap4clip}, and MG-CLIP~\cite{huang2025mind}. For continual ego-exo video understanding, we include EWC, LwF, L2P+, DualPrompt+, S-Prompt+~\cite{wang2022s}, and the replay-based methods Experience Replay (ER)~\cite{rolnick2019experience} and DER++~\cite{buzzega2020dark}. For these adapter-based variants, we replace the original prompt modules with adapters, with “+” indicating this modification. For continual embodied vision-language-action learning, we compare against EWC, LwF, L2P+, DualPrompt+, ER, and PackNet~\cite{mallya2018packnet}. Collectively, these baselines cover sequential fine-tuning, regularization-based, replay-based, and task-specific CL methods.

\myPara{Implementation Details.}
We implement FlyGCL on pretrained backbones adopted by each benchmark to ensure fair comparison with existing CL methods. For visual recognition, we use Vision Transformer (ViT-B/16) backbones with ImageNet-based pretraining, including supervised ImageNet-21K pretraining (Sup-21K), ImageNet-21K pretraining followed by ImageNet-1K fine-tuning (Sup-21K/1K), and self-supervised iBOT pretraining on ImageNet-21K (iBOT-21K). For vision-language learning, we use the pretrained OpenAI CLIP model with a ViT-B/16 image encoder. For ego-exo video understanding and embodied vision-language-action learning, we follow the backbone, input preprocessing, and evaluation pipeline of the corresponding benchmarks, replacing only the continual adaptation component across methods. All methods use the same data stream, session order, and online update budget. For prompt-, adapter-, and LoRA-based methods, we match the number of trainable modules to FlyGCL whenever applicable, so that comparisons primarily reflect the continual coordination strategy rather than model capacity. Hyperparameters are selected on the validation split of the first stream setting and then kept fixed across sessions. We report the mean and standard error over multiple random seeds. Detailed optimizer settings, learning rates, batch sizes, training budgets, and benchmark-specific implementations are provided in Supplementary~\cref{sec:app_impl}.

\backmatter

\section*{Data Availability}
All benchmark datasets used in this paper are publicly available from their original sources. For continual visual recognition, we use CIFAR-100 (\href{https://www.cs.toronto.edu/~kriz/cifar.html}{https://www.cs.toronto.edu/\textasciitilde kriz/cifar.html}), ImageNet-R (\href{https://github.com/hendrycks/imagenet-r}{https://github.com/hendrycks/imagenet-r}), and CUB-200 (\href{https://www.vision.caltech.edu/datasets/cub_200_2011/}{https://www.vision.caltech.edu/datasets/cub\_200\_2011/}). For continual ego-exo video understanding, we use EgoExoLearn (\href{https://github.com/OpenGVLab/EgoExoLearn}{https://github.com/OpenGVLab/EgoExoLearn}) and EgoExo-Fitness (\href{https://github.com/iSEE-Laboratory/EgoExo-Fitness}{https://github.com/iSEE-Laboratory/EgoExo-Fitness}). For continual vision-language-action learning, we use the LIBERO benchmark suite (\href{https://github.com/Lifelong-Robot-Learning/LIBERO}{https://github.com/Lifelong-Robot-Learning/LIBERO}), including LIBERO-Long, -Spatial, -Object and -Goal. The synthetic odor streams used for the biologically grounded analysis are generated following the procedure described in this paper and will be released with the code.

\section*{Code Availability}
The implementation code, configuration files, and evaluation protocols are available at \url{https://github.com/THU-NeuroML/FlyGCL}.

\section*{Acknowledgments}
This work was supported by the NSFC Project (No.~T2622023, No.~62406160 and No.~62595773), the Beijing Natural Science Foundation (No.~L247011), the Beijing Nova Program (No.~202604841279), and the Beijing Major Science and Technology Project (No.~Z251100008425003).

\section*{Author Contributions Statement}
H.Y., K.Z. and L.W. conceived the project.
H.Y., K.Z. and L.W. designed the computational framework.
H.Y. and K.Z. performed the main experiments, assisted by Q.C. and W.D.
H.Y., J.Z., G.S., Q.L., Y.Z., and L.W. contributed to the biological motivation and interpretation.
H.Y., K.Z. and L.W. analyzed the results.
H.Y., K.Z., and L.W. wrote the paper.
All authors discussed the results and revised the manuscript.
L.W. supervised the project.

\section*{Competing Interests Statement}
The authors declare no competing interests.

\clearpage

%%===========================================================================================%%
%% If you are submitting to one of the Nature Portfolio journals, using the eJP submission   %%
%% system, please include the references within the manuscript file itself. You may do this  %%
%% by copying the reference list from your .bbl file, paste it into the main manuscript .tex %%
%% file, and delete the associated \verb+\bibliography+ commands.                            %%
%%===========================================================================================%%

\bibliography{sn-bibliography}% common bib file
%% if required, the content of .bbl file can be included here once bbl is generated
%%\input sn-article.bbl

%% Default %%
%%\input sn-sample-bib.tex%

\clearpage

\renewcommand{\figurename}{Extended Data Fig.}
\setcounter{figure}{0}
\renewcommand{\thefigure}{\arabic{figure}}

\begin{figure}
    \centering
    \begin{overpic}[width=\linewidth,clip,trim=0 0 0 440]{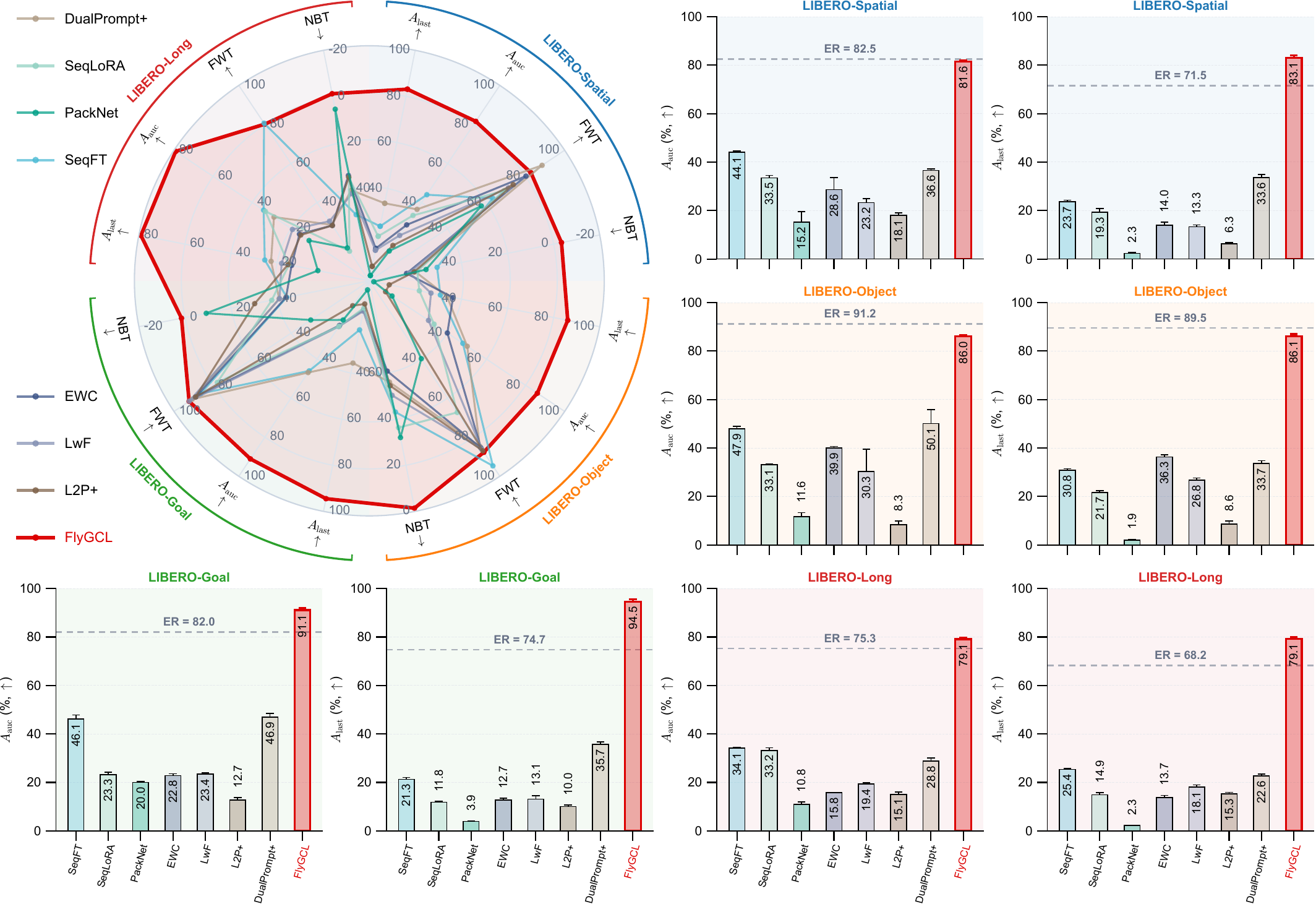}
        \put(0,24){\sf\footnotesize\textbf{a}}
        \put(50,24){\sf\footnotesize\textbf{b}}
    \end{overpic}
    \begin{overpic}[width=\linewidth]{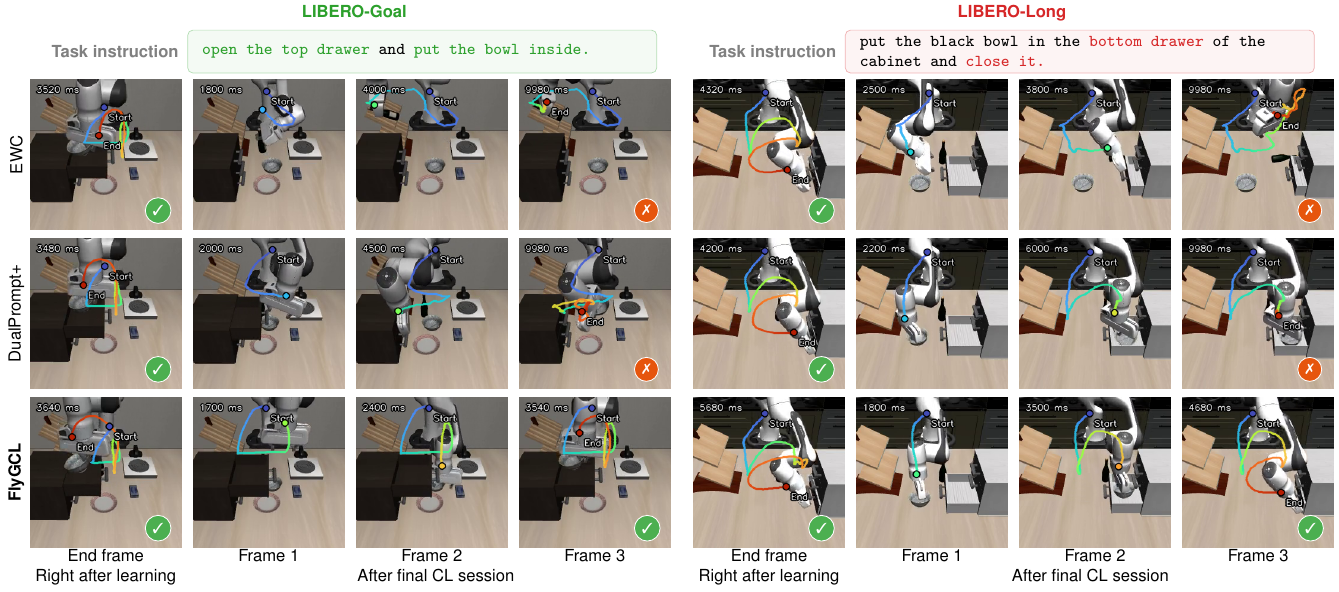}
        \put(0,43){\sf\footnotesize\textbf{c}}
        \put(50,43){\sf\footnotesize\textbf{d}}
    \end{overpic}
    \caption{\textbf{Extended results on continual vision-language-action benchmarks.}
    \subref{fig:vla_online_ext-a}: Performance comparison with state-of-the-art baselines on LIBERO-Goal using $A_{\rm auc}$ and $A_{\rm last}$.
    \subref{fig:vla_online_ext-b}: Performance comparison with state-of-the-art baselines on LIBERO-Long using $A_{\rm auc}$ and $A_{\rm last}$.
    \subref{fig:vla_online_ext-c}: Qualitative rollouts on LIBERO-Goal, comparing task execution immediately after learning and after the final CL session for EWC, DualPrompt+, and FlyGCL.
    \subref{fig:vla_online_ext-d}: Qualitative rollouts on LIBERO-Long under the same protocol.
    Results are averaged over three independent runs; error bars denote the standard error of the mean.
    }
    \label{fig:vla_online_ext}
    \phantomsubcaption\label{fig:vla_online_ext-a}
    \phantomsubcaption\label{fig:vla_online_ext-b}
    \phantomsubcaption\label{fig:vla_online_ext-c}
    \phantomsubcaption\label{fig:vla_online_ext-d}
\end{figure}

\clearpage

\begin{figure}
    \centering
    % \begin{overpic}[width=\linewidth]{figs/vla_offline_combined.pdf}
    % \end{overpic}
    \begin{overpic}[width=\linewidth]{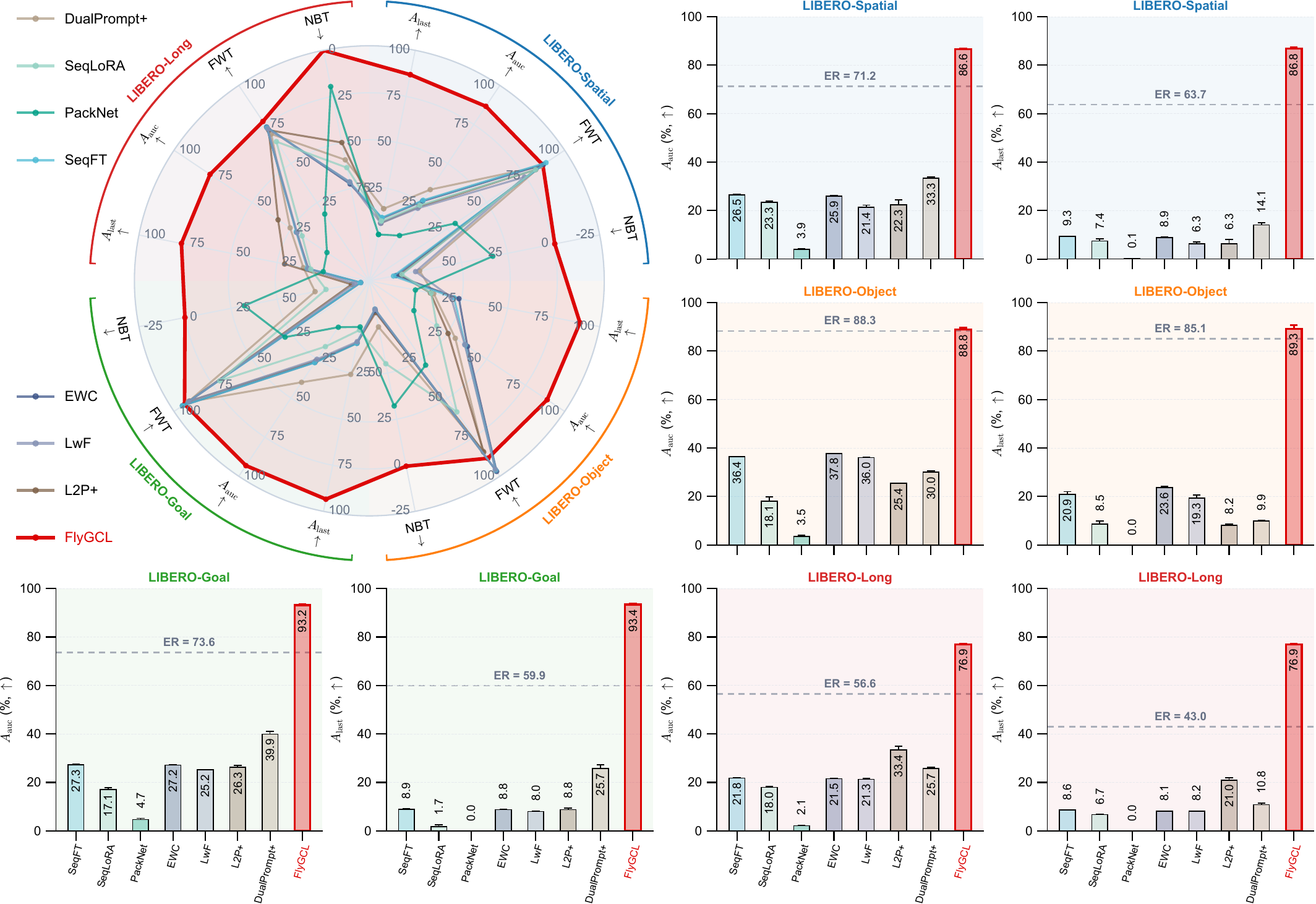}
        \put(0,68){\sf\footnotesize\textbf{a}}
        \put(48,68){\sf\footnotesize\textbf{b}}
        \put(0,25){\sf\footnotesize\textbf{c}}
    \end{overpic}
    \caption{
    \textbf{Offline results on continual vision-language-action benchmarks.}
    \subref{fig:vla_offline-a}: Unified performance summary across LIBERO-Long, LIBERO-Spatial, LIBERO-Goal, and LIBERO-Object under multiple continual learning metrics.
    \subref{fig:vla_offline-b}: Performance comparison with state-of-the-art baselines on LIBERO-Spatial and LIBERO-Object using $A_{\rm auc}$ and $A_{\rm last}$.
    \subref{fig:vla_offline-c}: Performance comparison with state-of-the-art baselines on LIBERO-Goal and LIBERO-Long using $A_{\rm auc}$ and $A_{\rm last}$.
    Results are averaged over three independent runs; error bars denote the standard error of the mean.
    }
    \label{fig:vla_offline}
    \phantomsubcaption\label{fig:vla_offline-a}
    \phantomsubcaption\label{fig:vla_offline-b}
    \phantomsubcaption\label{fig:vla_offline-c}
\end{figure}

\clearpage

\appendix
\renewcommand{\thefigure}{S\arabic{figure}}
\renewcommand{\thetable}{S\arabic{table}}
\renewcommand{\theequation}{S\arabic{equation}}

\section{Proofs}
\subsection{Proof of \cref{thm:gcl_decomposition}}
\label{sec:app_proof_gcl_decomposition}

\begin{proof}
Let $F_{\theta,\Omega,\Psi}$ be the stream-level modular predictor and let $F^{\star}$ be the ideal stream-level predictor. For brevity, we write $F$ for $F_{\theta,\Omega,\Psi}$ in this proof. By definition, the expected GCL risk over the evolving stream is
\begin{equation}
    \mathcal{R}_{\mathrm{GCL}}(F)
    =
    \frac{1}{T}
    \sum_{t=1}^{T}
    \mathbb{E}_{(\mathbf{o},\mathbf{u})\sim P_t}
    \left[
    \ell(F(\mathbf{o}),\mathbf{u})
    \right].
    \label{eq:app_gcl_risk}
\end{equation}
The empirical fitting risk on the observed stream $\mathcal{S}=\{\mathcal{D}_1,\ldots,\mathcal{D}_T\}$ is
\begin{equation}
    \mathcal{R}_{\mathrm{fit}}(F)
    =
    \frac{1}{T}
    \sum_{t=1}^{T}
    \frac{1}{N_t}
    \sum_{i=1}^{N_t}
    \ell(F(\mathbf{o}_{t,i}),\mathbf{u}_{t,i}).
    \label{eq:app_fit_risk}
\end{equation}
The gap between the expected stream risk and the empirical fitting risk can be written as
\begin{equation}
    \mathcal{R}_{\mathrm{GCL}}(F)-\mathcal{R}_{\mathrm{fit}}(F).
    \label{eq:app_residual_gap}
\end{equation}
Under hierarchical modular learning, this residual gap has three sources. First, if conflicting local distributions are assigned to insufficiently separated adaptive modules, their updates and predictions interfere with each other. We denote the corresponding excess risk by $\mathcal{E}_{\mathrm{sep}}(F)$. Second, if compatible local distributions or predictors are treated in isolation, the model loses potential transfer and suffers from larger estimation variance. We denote this excess risk by $\mathcal{E}_{\mathrm{int}}(F)$. Third, even if routing and integration are individually useful, their combination may be imperfect. The positive excess loss of the hierarchical modular predictor over the ideal stream-level predictor is
\begin{equation}
    \mathcal{C}_{\mathrm{coord}}(F)
    =
    \frac{1}{T}
    \sum_{t=1}^{T}
    \mathbb{E}_{(\mathbf{o},\mathbf{u})\sim P_t}
    \left[
    \ell(F(\mathbf{o}),\mathbf{u})
    -
    \ell(F^{\star}(\mathbf{o}),\mathbf{u})
    \right]_{+}.
    \label{eq:app_coord_cost}
\end{equation}
By the assumed additive excess-risk decomposition, the residual gap in \cref{eq:app_residual_gap} is upper bounded by the sum of these three non-negative components:
\begin{equation}
    \mathcal{R}_{\mathrm{GCL}}(F)-\mathcal{R}_{\mathrm{fit}}(F)
    \lesssim
    \mathcal{E}_{\mathrm{sep}}(F)
    +
    \mathcal{E}_{\mathrm{int}}(F)
    +
    \mathcal{C}_{\mathrm{coord}}(F).
    \label{eq:app_residual_bound}
\end{equation}
Adding $\mathcal{R}_{\mathrm{fit}}(F)$ to both sides gives
\begin{equation}
    \mathcal{R}_{\mathrm{GCL}}(F)
    \lesssim
    \mathcal{R}_{\mathrm{fit}}(F)
    +
    \mathcal{E}_{\mathrm{sep}}(F)
    +
    \mathcal{E}_{\mathrm{int}}(F)
    +
    \mathcal{C}_{\mathrm{coord}}(F).
    \label{eq:app_gcl_decomposition}
\end{equation}
Substituting back $F=F_{\theta,\Omega,\Psi}$ yields \cref{eq:gcl_decomposition}. This completes the proof.
\end{proof}

\subsection{Proof of \cref{prop:sep_int_gain}}
\label{sec:app_proof_sep_int_gain}

\begin{proof}
For brevity, we write $F_{\mathrm{hier}}$ for $F_{\theta,\Omega,\Psi}$. From \cref{thm:gcl_decomposition}, the expected GCL risk of a hierarchical modular predictor satisfies
\begin{equation}
    \mathcal{R}_{\mathrm{GCL}}(F_{\mathrm{hier}})
    \lesssim
    \mathcal{R}_{\mathrm{fit}}(F_{\mathrm{hier}})
    +
    \mathcal{E}_{\mathrm{sep}}(F_{\mathrm{hier}})
    +
    \mathcal{E}_{\mathrm{int}}(F_{\mathrm{hier}})
    +
    \mathcal{C}_{\mathrm{coord}}(F_{\mathrm{hier}}).
    \label{eq:app_hier_start}
\end{equation}
By the definition of the routing gain in \cref{eq:route_ens_gain}, we have
\begin{equation}
    \Delta_{\mathrm{route}}
    =
    \mathcal{E}_{\mathrm{sep}}(F_{\theta,\psi})
    -
    \mathcal{E}_{\mathrm{sep}}(F_{\mathrm{MoE}}),
\end{equation}
which gives
\begin{equation}
    \mathcal{E}_{\mathrm{sep}}(F_{\mathrm{MoE}})
    =
    \mathcal{E}_{\mathrm{sep}}(F_{\theta,\psi})
    -
    \Delta_{\mathrm{route}}.
    \label{eq:app_route_gain}
\end{equation}
Since the hierarchical predictor uses routing as its separation mechanism, its separation error is upper bounded by that of the routed modular predictor, i.e.,
\begin{equation}
    \mathcal{E}_{\mathrm{sep}}(F_{\mathrm{hier}})
    \leq
    \mathcal{E}_{\mathrm{sep}}(F_{\mathrm{MoE}}).
    \label{eq:app_sep_bound}
\end{equation}
Combining \cref{eq:app_route_gain,eq:app_sep_bound}, we obtain
\begin{equation}
    \mathcal{E}_{\mathrm{sep}}(F_{\mathrm{hier}})
    \leq
    \mathcal{E}_{\mathrm{sep}}(F_{\theta,\psi})
    -
    \Delta_{\mathrm{route}}.
    \label{eq:app_sep_gain}
\end{equation}

Similarly, by the definition of the ensemble gain in \cref{eq:route_ens_gain}, we have
\begin{equation}
    \Delta_{\mathrm{ens}}
    =
    \mathcal{E}_{\mathrm{int}}(F_{\mathrm{single}})
    -
    \mathcal{E}_{\mathrm{int}}(F_{\mathrm{EL}}),
\end{equation}
which gives
\begin{equation}
    \mathcal{E}_{\mathrm{int}}(F_{\mathrm{EL}})
    =
    \mathcal{E}_{\mathrm{int}}(F_{\mathrm{single}})
    -
    \Delta_{\mathrm{ens}}.
    \label{eq:app_ens_gain}
\end{equation}
Since the hierarchical predictor uses aggregation as its integration mechanism, its integration error is upper bounded by that of the ensemble predictor, i.e.,
\begin{equation}
    \mathcal{E}_{\mathrm{int}}(F_{\mathrm{hier}})
    \leq
    \mathcal{E}_{\mathrm{int}}(F_{\mathrm{EL}}).
    \label{eq:app_int_bound}
\end{equation}
Combining \cref{eq:app_ens_gain,eq:app_int_bound}, we obtain
\begin{equation}
    \mathcal{E}_{\mathrm{int}}(F_{\mathrm{hier}})
    \leq
    \mathcal{E}_{\mathrm{int}}(F_{\mathrm{single}})
    -
    \Delta_{\mathrm{ens}}.
    \label{eq:app_int_gain}
\end{equation}

Substituting \cref{eq:app_sep_gain,eq:app_int_gain} into \cref{eq:app_hier_start} yields
\begin{equation}
\begin{aligned}
    \mathcal{R}_{\mathrm{GCL}}(F_{\mathrm{hier}})
    \lesssim
    & \mathcal{R}_{\mathrm{fit}}(F_{\mathrm{hier}})
    +
    \mathcal{E}_{\mathrm{sep}}(F_{\theta,\psi})
    +
    \mathcal{E}_{\mathrm{int}}(F_{\mathrm{single}}) \\
    & -
    \Delta_{\mathrm{route}}
    -
    \Delta_{\mathrm{ens}}
    +
    \mathcal{C}_{\mathrm{coord}}(F_{\mathrm{hier}}).
\end{aligned}
\end{equation}
Substituting back $F_{\mathrm{hier}}=F_{\theta,\Omega,\Psi}$ gives \cref{eq:hierarchical_gain}.

Finally, comparing this bound with the corresponding bound without routing and integration gains,
\begin{equation}
    \mathcal{R}_{\mathrm{fit}}(F_{\theta,\Omega,\Psi})
    +
    \mathcal{E}_{\mathrm{sep}}(F_{\theta,\psi})
    +
    \mathcal{E}_{\mathrm{int}}(F_{\mathrm{single}}),
\end{equation}
shows that hierarchical modular learning improves the bound whenever
\begin{equation}
    \Delta_{\mathrm{route}}
    +
    \Delta_{\mathrm{ens}}
    >
    \mathcal{C}_{\mathrm{coord}}(F_{\theta,\Omega,\Psi}).
\end{equation}
This proves \cref{eq:hierarchical_condition} and completes the proof.
\end{proof}

\subsection{Proof of \cref{prop:pretrain_coord}}
\label{sec:app_proof_pretrain_coord}

\begin{proof}
For brevity, we write $F_{\mathrm{hier}}$ for $F_{\theta,\Omega,\Psi}$. The coordination cost measures the excess loss introduced when routing and integration do not exactly match the ideal stream-level organization. We decompose this cost into two sources: mismatch due to imperfect routing and mismatch due to imperfect aggregation.

Let $\epsilon_{\mathrm{route}}$ denote the degree of routing error, i.e., the probability or expected weight that an input is assigned to a suboptimal module. Let $\Delta_{\mathrm{mis}}(f_{\theta})$ denote the maximum expected loss increase caused by such a suboptimal assignment under the representation induced by $f_{\theta}$. Then the routing-induced part of the coordination cost is bounded by
\begin{equation}
    \mathcal{C}_{\mathrm{route}}(F_{\mathrm{hier}})
    \leq
    \epsilon_{\mathrm{route}}
    \Delta_{\mathrm{mis}}(f_{\theta}).
    \label{eq:app_route_coord}
\end{equation}
Let $\epsilon_{\mathrm{agg}}$ denote the residual mismatch caused by aggregating modular predictions that are not perfectly compatible. Then the total coordination cost is bounded by the sum of the routing-induced mismatch and the aggregation-induced mismatch:
\begin{equation}
    \mathcal{C}_{\mathrm{coord}}(F_{\mathrm{hier}})
    \leq
    \mathcal{C}_{\mathrm{route}}(F_{\mathrm{hier}})
    +
    \epsilon_{\mathrm{agg}}.
    \label{eq:app_coord_split}
\end{equation}
Combining \cref{eq:app_route_coord,eq:app_coord_split}, we obtain
\begin{equation}
    \mathcal{C}_{\mathrm{coord}}(F_{\mathrm{hier}})
    \leq
    \epsilon_{\mathrm{route}}
    \Delta_{\mathrm{mis}}(f_{\theta})
    +
    \epsilon_{\mathrm{agg}}.
    \label{eq:app_coord_bound}
\end{equation}
Substituting back $F_{\mathrm{hier}}=F_{\theta,\Omega,\Psi}$ gives \cref{eq:coord_pretrain}.

It remains to show why pretrained representations reduce the mismatch penalty. By definition, $\Delta_{\mathrm{mis}}(f_{\theta})$ measures the expected prediction penalty when an input is routed to a suboptimal module. If a representation maps related inputs closer together and makes conflicting inputs more separable, then the predictions produced by nearby or partially mismatched modules are less divergent for related inputs, while routing ambiguity is reduced for conflicting inputs. A pretrained backbone provides such a more stable and semantically organized representation than a representation learned from scratch on the limited online stream. Therefore, the mismatch penalty under a pretrained backbone is smaller:
\begin{equation}
    \Delta_{\mathrm{mis}}(f_{\theta}^{\mathrm{pre}})
    <
    \Delta_{\mathrm{mis}}(f_{\theta}^{\mathrm{scratch}}).
    \label{eq:app_pretrain_mismatch}
\end{equation}
Substituting \cref{eq:app_pretrain_mismatch} into \cref{eq:app_coord_bound} shows that pretraining reduces the upper bound of $\mathcal{C}_{\mathrm{coord}}$, which proves \cref{eq:pretrain_mismatch} and completes the proof.
\end{proof}

\section{Additional Experimental Setups}
\label{sec:app_setup}

\subsection{Biological Simulation}
\label{sec:app_bio_simulation}

\myPara{Odor Classes and Spatial Regions.}
We generate 100 class prototypes independently and uniformly in $[0,1)^{50}$. Training and test inputs are sampled from the same space and assigned to their nearest prototype by squared Euclidean distance. The prototypes are then grouped by proximity into five regions of 20 classes using capacity-constrained clustering. For each seed, the formal benchmark contains 10,000 training and 2,000 test inputs from each region. The fixed test set therefore contains 10,000 inputs, while every training stream contains 50,000 inputs.

\myPara{Continual Odor Streams.}
The stream represents a learner moving through the five spatial regions and acquiring experience along this trajectory. Each region defines the home stage of its classes. In Disjoint, the 10,000 inputs from each region are shuffled within their home stage and the five stages are presented sequentially. Blurry and Joint soften the boundaries between neighboring periods of experience without altering the inputs or labels. For each home stage, a specified subset of classes remains disjoint, while 10\% of the inputs from the other classes is removed, pooled across stages, globally shuffled, and reassigned under equal stage quotas. Blurry retains 10 disjoint classes (50\%) in each region, whereas Joint applies this procedure to all classes. In every case, the model encounters data from all five regions in sequence, each input appears exactly once, and the total learning budget is unchanged.

\myPara{Biologically Grounded Architecture.}
The sensory network follows the population-level compression and expansion described in biological and task-driven accounts of the \emph{Drosophila} olfactory system~\cite{caron2013random,wang2021evolving}. Each of the 50 input channels is replicated across 26 ORNs and pooled by one PN, giving 1,300 ORNs and 50 PNs. During training, independent Gaussian noise with standard deviation $0.1$ is added to the ORN responses; evaluation is noise-free. Within each group, ORN-PN weights are sampled from a truncated log-normal distribution fitted to FlyWire connections between matched olfactory types and normalized to sum to one. PN activity is rectified and projected to 2,000 KCs through a sparse PN-KC connection matrix. We randomly set 91.5\% of its entries to zero, sample the remaining 8.5\% from a truncated Gaussian distribution fitted to the nonzero FlyWire PN-KC connection weights, and then keep the resulting matrix fixed. After rectification, only the 100 largest KC responses (top-5\% of the 2,000 KCs) are retained for each odor. The sensory pathway remains fixed throughout continual learning, so that all learning occurs in downstream components and the effects of learning and memory organization can be evaluated independently of changes in sensory encoding.

\myPara{Baseline Variants.}
MoE uses five independently initialized experts with distinct views of KC activity, representing spatially differentiated memory pathways. During training, we associate each stage with one expert as a simple way to activate experts successively along the stream. This association is specific to the experimental protocol rather than a requirement for known task identities or predefined stage semantics: experts could instead be activated sequentially after a fixed number of observed samples or by another online switching rule. In parallel, the model maintains the mean KC representation of each observed stage. At evaluation, an input is routed to the active expert whose mean representation has the highest cosine similarity to its KC activity. The router is updated only from inputs observed so far and does not use class or region labels at inference.

The Baseline uses one bias-free 100-class readout trained at a learning rate of $10^{-3}$. EL replaces it with three independently initialized heads trained at learning rates $10^{-2}$, $10^{-3}$, and $10^{-4}$, representing fast, intermediate, and slow effective memory scales. MoE uses five experts with one head each, and MoE+EL uses three such heads within every expert. Each head is optimized with its own cross-entropy loss, and their class probabilities are averaged within the selected expert at inference. The incremental EL ablation further compares these models with three independently initialized heads all trained at $10^{-3}$, separating the effect of ensembling from that of different update rates.

\myPara{Training and Evaluation.}
All configurations share the same sensory encoding, observations, supervision, and online learning budget, without replay, so that performance differences primarily reflect the organization of the downstream learning and memory components. Models are optimized online with Adam using a batch size of 64. Each run comprises 50,000 inputs, with performance evaluated every 2,000 inputs at 25 post-update checkpoints. At each checkpoint, the exposed class set contains all classes represented by at least one input observed up to that point, and accuracy is evaluated on the corresponding test examples. Exposed-class anytime AUC is computed as the trapezoidal integral of this checkpoint-wise accuracy over the number of observed inputs, normalized by the evaluated interval. Results are averaged over five independently generated datasets, streams, and model initializations, and reported with 95\% confidence intervals.

\subsection{Continual Stream Construction}
\label{sec:app_cl_protocol}

\begin{table*}[!ht]
\centering
\caption{Detailed continual stream construction and evaluation metrics for all benchmarks. Partition Unit specifies the semantic or distributional unit used to construct sessions. Task Metric denotes the benchmark-specific evaluation measure, while Reported CL Metrics denote its continual evaluation over the stream.}
\label{tab:protocol_details}
\renewcommand\arraystretch{1.12}
\setlength{\tabcolsep}{2.6pt}
\resizebox{\textwidth}{!}{
\begin{tabular}{llllccll}
\toprule
\textbf{Scenario} & \textbf{Benchmark / Task} & \textbf{Partition Unit} &
\textbf{Setting} & \textbf{$T$} & \textbf{$r_D$ / $r_B$} &
\textbf{Task Metric} & \textbf{Reported CL Metrics} \\
\midrule

\multirow{3}{*}{\shortstack[l]{Visual\\recognition}}
& CIFAR-100
& Image class
& GCL
& 5
& $50\%$ / $10\%$
& Accuracy
& $A_{\rm last}$, $A_{\rm auc}$ \\

& ImageNet-R
& Image class
& GCL
& 5
& $50\%$ / $10\%$
& Accuracy
& $A_{\rm last}$, $A_{\rm auc}$ \\

& CUB-200
& Image class
& GCL
& 5
& $50\%$ / $10\%$
& Accuracy
& $A_{\rm last}$, $A_{\rm auc}$ \\
\midrule

\multirow{2}{*}{\shortstack[l]{Vision-language\\learning}}
& CIFAR-100
& Image class
& GCL
& 5
& $50\%$ / $30\%$
& Accuracy
& $A_{\rm last}$, $A_{\rm auc}$ \\

& ImageNet-R
& Image class
& GCL
& 5
& $50\%$ / $30\%$
& Accuracy
& $A_{\rm last}$, $A_{\rm auc}$ \\
\midrule

\multirow{6}{*}{\shortstack[l]{Ego-exo video\\understanding}}
& EgoExoLearn: Skill Assessment
& Action class
& GCL
& 4
& $50\%$ / $30\%$
& Ranking accuracy
& $A_{\rm last}$, $\bar{F}_{T}$, $A_{\rm auc}$ \\

% & EgoExoLearn: Action Segmentation
% & Procedure task
% & GCL
% & 4
% & $50\%$ / $30\%$
% & Frame-wise accuracy
% & $A_{\rm last}$, $A_{\rm auc}$ \\

% & EgoExoLearn: Cross-View Association
% & Procedure task
% & GCL
% & 5
% & $50\%$ / $30\%$
% & Top-1 association accuracy
% & $A_{\rm last}$, $A_{\rm auc}$ \\

& EgoExoLearn: Action Anticipation
& Procedure task
& GCL
& 8
& $50\%$ / $30\%$
& Top-5 recall
& $A_{\rm last}$, $A_{\rm auc}$ \\

% & EgoExoLearn: Action Planning
% & Procedure task
% & GCL
% & 4
% & $50\%$ / $30\%$
% & ED@8
% & $\mathrm{ED}_{\rm last}$, $\mathrm{ED}_{\rm auc}$ \\

\cmidrule(lr){2-8}

& EgoExo-Fitness: Skill Assessment
& Action class
& GCL
& 4
& $50\%$ / $30\%$
& Ranking accuracy
& $A_{\rm last}$, $\bar{F}_{T}$, $A_{\rm auc}$ \\

& EgoExo-Fitness: Action Classification
& Action class
& GCL
& 4
& $50\%$ / $30\%$
& Accuracy
& $A_{\rm last}$, $\bar{F}_{T}$, $A_{\rm auc}$ \\

% & EgoExo-Fitness: Action Localization
% & Action class
% & GCL
% & 4
% & $50\%$ / $30\%$
% & mAP, AP@0.5
% & $A_{\rm last}$, $\bar{F}_{T}$, $A_{\rm auc}$ \\

& EgoExo-Fitness: Sequence Verification
& Action class
& GCL
& 4
& $50\%$ / $30\%$
& ROC-AUC, mAP
& $A_{\rm last}$, $\bar{F}_{T}$, $A_{\rm auc}$ \\

& EgoExo-Fitness: Guidance-based Verification
& Action class
& GCL
& 4
& $50\%$ / $30\%$
& Classification accuracy, F1
& $A_{\rm last}$, $\bar{F}_{T}$, $A_{\rm auc}$ \\
\midrule

\multirow{2}{*}{\shortstack[l]{Vision-language-\\action learning}}
& LIBERO-Spatial, -Object, -Goal, -Long
& Manipulation task
& Offline CL
& 10
& --
& Success rate
& $A_{\rm last}$, $A_{\rm auc}$, FWT, NBT \\

& LIBERO-Spatial, -Object, -Goal, -Long
& Manipulation task
& GCL
& 10
& $50\%$ / $30\%$
& Success rate
& $A_{\rm last}$, $A_{\rm auc}$, FWT, NBT \\
\bottomrule
\end{tabular}}
\end{table*}

Across benchmarks, sessions are constructed from the semantic unit most closely aligned with the prediction target (Supplementary~\cref{tab:protocol_details}). Visual recognition and vision-language benchmarks use image classes, whereas ego-exo video tasks use action classes or procedure-level task groups to preserve the temporal and cross-view structure of the original annotations. LIBERO uses individual manipulation tasks as the partition unit. All stream construction is performed within the official data splits, so redistribution across sessions does not transfer samples between training and evaluation sets.

For GCL evaluation, each semantic unit is assigned a home session and the stream is made blurry by allowing samples from the recurring component to appear outside that session according to $r_B$, while the disjoint component remains primarily session-specific according to $r_D$. Each training sample is processed under the benchmark-specific online budget, and evaluation after session $t$ covers all sessions exposed up to that point. The complementary offline LIBERO protocol instead presents the ten manipulation tasks sequentially and permits multiple passes over each task before moving to the next, while retaining the same task order and evaluation procedure.

\section{Additional Evaluation Metrics}
\label{sec:app_metrics}

This section details the task-specific base metrics and additional CL metrics. Consistent with the notation in the main paper, let $\mathcal{E}_{j}=\{(\mathbf{o}_{j,n},\mathbf{u}_{j,n})\}_{n=1}^{N_j}$ denote the evaluation set of session $j$, where $\mathbf{o}_{j,n}$ is an observation and $\mathbf{u}_{j,n}$ is its associated learning signal. Let $\hat{\mathbf{u}}_{j,n}^{(t)}$ denote the prediction obtained after the model has learned through session $t$. Each task-specific metric defined below instantiates the session-level performance $R_{t,j}$ used in the CL aggregates in the main paper.

\subsection{Task Metrics}

\myPara{Accuracy-Based Metrics.}
For visual recognition, vision-language classification, and action classification, the learning signal is a categorical label $u_{j,n}\in\mathcal{U}_{j}$, and we use classification accuracy:
\begin{equation}
    \mathrm{Acc}_{t,j}
    =
    \frac{1}{N_j}
    \sum_{n=1}^{N_j}
    \mathbf{1}
    \left[
        \hat{u}_{j,n}^{(t)} = u_{j,n}
    \right].
    \label{eq:metric_acc}
\end{equation}

\myPara{Skill Assessment.}
For skill assessment, the learning signal specifies the relative quality ordering of two video observations. Let $\mathcal{P}_{j}$ denote the set of annotated ordered pairs, where $(a,b)\in\mathcal{P}_{j}$ indicates that $\mathbf{o}_{j,a}$ exhibits higher skill than $\mathbf{o}_{j,b}$. Given the predicted skill scores $\hat{q}_{j,a}^{(t)}$ and $\hat{q}_{j,b}^{(t)}$, ranking accuracy is defined as
\begin{equation}
    \mathrm{RAcc}_{t,j}
    =
    \frac{1}{\lvert \mathcal{P}_{j} \rvert}
    \sum_{(a,b)\in\mathcal{P}_{j}}
    \mathbf{1}
    \left[
        \hat{q}_{j,a}^{(t)} > \hat{q}_{j,b}^{(t)}
    \right].
    \label{eq:metric_ranking_acc}
\end{equation}

\myPara{Action Anticipation.}
For action anticipation, the learning signal is the future verb or noun class, and we report class-mean Top-$K$ recall with $K=5$ separately for verb and noun prediction. Let $\mathcal{C}$ denote the evaluated class set, $\mathcal{E}_{j,c}=\{n:u_{j,n}=c\}$ denote the evaluation samples belonging to class $c$, and $\operatorname{TopK}(\mathbf{s}_{j,n}^{(t)})$ denote the $K$ classes with the largest predicted scores. The metric is defined as
\begin{equation}
    \mathrm{R}@K_{t,j}
    =
    \frac{1}{\lvert \mathcal{C} \rvert}
    \sum_{c\in\mathcal{C}}
    \frac{1}{\lvert \mathcal{E}_{j,c} \rvert}
    \sum_{n\in\mathcal{E}_{j,c}}
    \mathbf{1}
    \left[
        c \in \operatorname{TopK}
        \left(
            \mathbf{s}_{j,n}^{(t)}
        \right)
    \right].
    \label{eq:metric_topk_recall}
\end{equation}

\myPara{Sequence and Guidance-Based Verification.}
For binary sequence verification, the learning signal indicates whether an observed execution sequence is valid. We report ROC-AUC and mAP. Let $\mathcal{E}_{j}^{+}$ and $\mathcal{E}_{j}^{-}$ denote the positive and negative sample sets, respectively, and let $\hat{s}_{j,n}^{(t)}$ be the predicted verification score. ROC-AUC is defined as
\begin{equation}
    \mathrm{ROC\mbox{-}AUC}_{t,j}
    =
    \frac{1}{
        \lvert \mathcal{E}_{j}^{+} \rvert
        \lvert \mathcal{E}_{j}^{-} \rvert
    }
    \sum_{p\in\mathcal{E}_{j}^{+}}
    \sum_{n\in\mathcal{E}_{j}^{-}}
    \mathbf{1}
    \left[
        \hat{s}_{j,p}^{(t)} > \hat{s}_{j,n}^{(t)}
    \right].
    \label{eq:metric_roc_auc}
\end{equation}
Following the standard ranking-based evaluation, mAP is computed as
\begin{equation}
    \mathrm{mAP}_{t,j}
    =
    \frac{1}{\lvert \mathcal{Q}_{j} \rvert}
    \sum_{q\in\mathcal{Q}_{j}}
    \mathrm{AP}_{q}^{(t)},
    \label{eq:metric_verification_map}
\end{equation}
where $\mathcal{Q}_{j}$ denotes the evaluated verification queries or groups. For guidance-based execution verification, we additionally report classification accuracy as defined in Eq.~\eqref{eq:metric_acc} and F1 score:
\begin{align}
    \mathrm{F1}_{t,j}
    &=
    \frac{
        2\,\mathrm{Prec}_{t,j}\,\mathrm{Rec}_{t,j}
    }{
        \mathrm{Prec}_{t,j}+\mathrm{Rec}_{t,j}
    }, \label{eq:metric_f1}
    \\
    \mathrm{Prec}_{t,j}
    &=
    \frac{\mathrm{TP}_{t,j}}
         {\mathrm{TP}_{t,j}+\mathrm{FP}_{t,j}},
    \\
    \mathrm{Rec}_{t,j}
    &=
    \frac{\mathrm{TP}_{t,j}}
         {\mathrm{TP}_{t,j}+\mathrm{FN}_{t,j}}.
\end{align}

\myPara{Embodied Vision-Language-Action.}
For embodied vision-language-action, the learning signal corresponds to an action or trajectory and the evaluation target is successful task completion. Let $z_{j,n}^{(t)}\in\{0,1\}$ indicate whether the learned policy successfully completes episode observation $\mathbf{o}_{j,n}$ after learning through session $t$. The success rate is defined as
\begin{equation}
    \mathrm{SR}_{t,j}
    =
    \frac{1}{N_j}
    \sum_{n=1}^{N_j}
    z_{j,n}^{(t)}.
    \label{eq:metric_success_rate}
\end{equation}

\subsection{Continual Learning Metrics}
Each higher-is-better task metric above is instantiated as $R_{t,j}$ in the main-paper definitions of $A_{\rm last}$ and $A_{\rm auc}$, where $R_{t,j}$ denotes the performance on session $j$ after learning through session $t$. For selected benchmarks, we additionally report average forgetting ($\bar{F}_T$), forward transfer (FWT), and negative backward transfer (NBT). 

\myPara{Average Forgetting.}
For a higher-is-better base metric, $\bar{F}_T$ is defined as
\begin{equation}
    \bar{F}_{T}
    =
    \frac{1}{T-1}
    \sum_{j=1}^{T-1}
    \left(
        \max_{t\in\{j,\ldots,T-1\}} R_{t,j}
        -
        R_{T,j}
    \right),
    \label{eq:metric_forgetting}
\end{equation}
which measures the average degradation from the best historical performance on each previously observed session to its final performance.

\myPara{Forward Transfer} measures how knowledge acquired from earlier continual sessions improves performance on a new session before it is learned. Let $b_j$ denote the performance of an untrained or reference model on session $j$. We define
\begin{equation}
    \mathrm{FWT}
    =
    \frac{1}{T-1}
    \sum_{j=2}^{T}
    \left(
        R_{j-1,j}
        -
        b_j
    \right),
    \label{eq:metric_fwt}
\end{equation}
where higher values indicate stronger transfer from previous to future sessions.

\myPara{Negative Backward Transfer} measures the effect of subsequent learning on previously learned sessions relative to their performance immediately after acquisition:
\begin{equation}
    \mathrm{NBT}
    =
    \frac{1}{T-1}
    \sum_{j=1}^{T-1}
    \left(
        R_{j,j}
        -
        R_{T,j}
    \right),
    \label{eq:metric_nbt}
\end{equation}
where lower values indicate less interference to previously learned sessions.

\clearpage
\section{Additional Implementation Details}
\label{sec:app_impl}

\begin{table*}[t]
\centering
\caption{Method-level optimization configurations across benchmark groups. Each method is listed separately within the benchmark group where it is evaluated. All methods within the same benchmark follow the same random seed, stream construction, session order, online update budget, and evaluation schedule. Hyperparameters are selected on the validation split of the first stage stream and then fixed across sessions. Opt., optimizer. LR, learning rate. WD, weight decay. Mem., memory buffer capacity for replay-based ER and DER++.}
\label{tab:method_optimization_config}
\renewcommand\arraystretch{1.12}
\setlength{\tabcolsep}{2.4pt}
\resizebox{\textwidth}{!}{
\begin{tabular}{m{2cm} m{3cm} m{3.5cm} m{1.35cm} m{1.35cm} m{1.35cm} m{1.25cm} m{1.2cm} m{6cm}}
\toprule
\textbf{Benchmark}
& \textbf{Method}
& \textbf{Backbone}
& \textbf{Opt.}
& \textbf{LR}
& \textbf{WD}
& \textbf{Batch}
& \textbf{Mem.}
& \textbf{Notes} \\
\midrule

\multirow{15}{*}{\shortstack[l]{Visual\\Recog.}}
& SeqFT
& ViT-B/16
& Adam
& $5\times10^{-3}$
& 0
& 64
& --
& Sequentially fine-tunes the full network. \\

& EWC~\cite{kirkpatrick2017overcoming}
& ViT-B/16
& Adam
& $5\times10^{-3}$
& 0
& 64
& --
& Uses regularization based on the Fisher information matrix. \\

& LwF~\cite{li2017learning}
& ViT-B/16
& Adam
& $5\times10^{-3}$
& 0
& 64
& --
& Uses output distillation from the previous model. \\

& L2P~\cite{wang2022learning}
& ViT-B/16
& Adam
& $5\times10^{-3}$
& 0
& 64
& --
& Uses a learnable prompt pool for continual adaptation. \\

& DualPrompt~\cite{wang2022dualprompt}
& ViT-B/16
& Adam
& $5\times10^{-3}$
& 0
& 64
& --
& Uses task-shared and task-specific prompts. \\

& MVP~\cite{moon2023online}
& ViT-B/16
& Adam
& $5\times10^{-3}$
& 0
& 64
& --
& Uses GCL-oriented prompt adaptation. \\

& MISA~\cite{kang2025advancing}
& ViT-B/16
& Adam
& $5\times10^{-3}$
& 0
& 64
& --
& Initializes adaptation with pretrained prompts. \\

& FlyGCL
& ViT-B/16
& Adam
& $5\times10^{-3}$
& 0
& 64
& --
& Uses adapter, prompt, or LoRA experts; the random expansion dimension and regularization parameter for regression are set to 10,000, and the temporal-ensemble EMA decay rates are 0.9 and 0.99. \\
\midrule

\multirow{13}{*}{\shortstack[l]{Vision\\Language}}
& SeqFT
& CLIP ViT-B/16
& AdamW
& $5\times10^{-4}$
& $5\times10^{-4}$
& 64
& --
& Sequentially fine-tunes the CLIP vision encoder. \\

& EWC~\cite{kirkpatrick2017overcoming}
& CLIP ViT-B/16
& AdamW
& $5\times10^{-4}$
& $5\times10^{-4}$
& 64
& --
& Uses regularization based on the Fisher information matrix. \\

& LwF~\cite{li2017learning}
& CLIP ViT-B/16
& AdamW
& $5\times10^{-4}$
& $5\times10^{-4}$
& 64
& --
& Uses output distillation from the previous model. \\

& L2P~\cite{wang2022learning}
& CLIP ViT-B/16
& AdamW
& $5\times10^{-4}$
& $5\times10^{-4}$
& 64
& --
& Uses a learnable prompt pool for continual adaptation. \\

& DualPrompt~\cite{wang2022dualprompt}
& CLIP ViT-B/16
& AdamW
& $5\times10^{-4}$
& $5\times10^{-4}$
& 64
& --
& Uses task-shared and task-specific prompts. \\

& CLAP4CLIP~\cite{jha2024clap4clip}
& CLIP ViT-B/16
& AdamW
& $5\times10^{-4}$
& $5\times10^{-4}$
& 64
& --
& Provides a CLIP-specific continual-learning baseline. \\

& MG-CLIP~\cite{huang2025mind}
& CLIP ViT-B/16
& AdamW
& $5\times10^{-4}$
& $5\times10^{-4}$
& 64
& --
& Provides a CLIP-specific alignment baseline. \\

& FlyGCL
& CLIP ViT-B/16
& AdamW
& $5\times10^{-4}$
& $5\times10^{-4}$
& 64
& --
& Uses LoRA experts. \\
\midrule

\multirow{13}{*}{\shortstack[l]{EgoExo-\\Learn \&-\\Fitness}}
& SeqFT
& I3D / CLIP encoder
& AdamW
& $10^{-4}$
& $5\times10^{-4}$
& 32
& --
& Sequentially fine-tunes the full network. \\

& ER~\cite{rolnick2019experience}
& I3D / CLIP encoder
& AdamW
& $10^{-4}$
& $5\times10^{-4}$
& 32
& 10\%
& Uses reservoir-based experience replay. \\

& DER++~\cite{buzzega2020dark}
& I3D / CLIP encoder
& AdamW
& $10^{-4}$
& $5\times10^{-4}$
& 32
& 10\%
& Combines experience replay with logit-level distillation. \\

& EWC~\cite{kirkpatrick2017overcoming}
& I3D / CLIP encoder
& AdamW
& $10^{-4}$
& $5\times10^{-4}$
& 32
& --
& Uses regularization based on the Fisher information matrix. \\

& LwF~\cite{li2017learning}
& I3D / CLIP encoder
& AdamW
& $10^{-4}$
& $5\times10^{-4}$
& 32
& --
& Uses output distillation from the previous model. \\

& L2P+~\cite{wang2022learning}
& I3D / CLIP encoder
& AdamW
& $10^{-4}$
& $5\times10^{-4}$
& 32
& --
& Uses a learnable adapter pool for continual adaptation. \\

& DualPrompt+~\cite{wang2022dualprompt}
& I3D / CLIP encoder
& AdamW
& $10^{-4}$
& $5\times10^{-4}$
& 32
& --
& Uses task-shared and task-specific adapters. \\

& S-Prompt+~\cite{wang2022s}
& I3D / CLIP encoder
& AdamW
& $10^{-4}$
& $5\times10^{-4}$
& 32
& --
& Uses adapter-based adaptation and routing for sequential video tasks. \\

& FlyGCL
& I3D / CLIP encoder
& AdamW
& $10^{-4}$
& $5\times10^{-4}$
& 32
& --
& Uses adapter experts. \\
\midrule

% \multirow{9}{*}{\shortstack[l]{EgoExo-\\Fitness}}
% & SeqFT
% & I3D / CLIP encoder
% & AdamW
% & $10^{-4}$
% & $5\times10^{-4}$
% & 32
% & --
% & Sequentially fine-tunes the full network. \\

% & ER~\cite{rolnick2019experience}
% & I3D / CLIP encoder
% & AdamW
% & $10^{-4}$
% & $5\times10^{-4}$
% & 32
% & 10\%
% & Uses reservoir-based experience replay. \\

% & DER++~\cite{buzzega2020dark}
% & I3D / CLIP encoder
% & AdamW
% & $10^{-4}$
% & $5\times10^{-4}$
% & 32
% & 10\%
% & Combines experience replay with logit-level distillation. \\

% & EWC~\cite{kirkpatrick2017overcoming}
% & I3D / CLIP encoder
% & AdamW
% & $10^{-4}$
% & $5\times10^{-4}$
% & 32
% & --
% & Uses regularization based on the Fisher information matrix. \\

% & LwF~\cite{li2017learning}
% & I3D / CLIP encoder
% & AdamW
% & $10^{-4}$
% & $5\times10^{-4}$
% & 32
% & --
% & Uses output distillation from the previous model. \\

% & L2P+~\cite{wang2022learning}
% & I3D / CLIP encoder
% & AdamW
% & $10^{-4}$
% & $5\times10^{-4}$
% & 32
% & --
% & Uses a learnable adapter pool for continual adaptation. \\

% & DualPrompt+~\cite{wang2022dualprompt}
% & I3D / CLIP encoder
% & AdamW
% & $10^{-4}$
% & $5\times10^{-4}$
% & 32
% & --
% & Uses task-shared and task-specific adapters. \\

% & S-Prompt+~\cite{wang2022s}
% & I3D / CLIP encoder
% & AdamW
% & $10^{-4}$
% & $5\times10^{-4}$
% & 32
% & --
% & Uses prompt-based adaptation for sequential video tasks. \\

% & FlyGCL
% & I3D / CLIP encoder
% & AdamW
% & $10^{-4}$
% & $5\times10^{-4}$
% & 32
% & --
% & Uses adapter experts. \\
% \midrule

\multirow{13}{*}{\shortstack[l]{LIBERO}}
& SeqFT
& DiT flow-matching
& Adam
& $10^{-4}$
& $10^{-6}$
& 32
& --
& Sequentially fine-tunes the full network. \\

& SeqLoRA
& DiT flow-matching
& Adam
& $10^{-4}$
& $10^{-6}$
& 32
& --
& Sequentially updates LoRA parameters. \\

& PackNet~\cite{mallya2018packnet}
& DiT flow-matching
& Adam
& $10^{-4}$
& $10^{-6}$
& 32
& --
& Isolates parameters through model pruning. \\

& ER~\cite{rolnick2019experience}
& DiT flow-matching
& Adam
& $10^{-4}$
& $10^{-6}$
& $32$
& $10\%$
& Uses reservoir-based experience replay. \\

& EWC~\cite{kirkpatrick2017overcoming}
& DiT flow-matching
& Adam
& $10^{-4}$
& $10^{-6}$
& 32
& --
& Uses regularization based on the Fisher information matrix. \\

& LwF~\cite{li2017learning}
& DiT flow-matching
& Adam
& $10^{-4}$
& $10^{-6}$
& 32
& --
& Uses output distillation from the previous model. \\

& L2P+~\cite{wang2022learning}
& DiT flow-matching
& Adam
& $10^{-4}$
& $10^{-6}$
& 32
& --
& Uses a learnable adapter pool for continual adaptation. \\

& DualPrompt+~\cite{wang2022dualprompt}
& DiT flow-matching
& Adam
& $10^{-4}$
& $10^{-6}$
& 32
& --
& Uses task-shared and task-specific adapters. \\

& FlyGCL
& DiT flow-matching
& Adam
& $10^{-4}$
& $10^{-6}$
& 32
& --
& Uses adapter experts. \\
\bottomrule
\end{tabular}}
\end{table*}

The visual-recognition experiments use frozen ViT-B/16 backbones with supervised or self-supervised ImageNet pretraining, as specified in Methods and Supplementary~\cref{tab:method_optimization_config}. The vision-language experiments use the OpenAI CLIP ViT-B/16 model with frozen pretrained encoders and update only the method-specific lightweight modules and output components. For ego-exo video understanding, we retain each benchmark's pretrained feature extraction and task-specific prediction heads, changing only the continual adaptation module. The embodied vision-language-action experiments initialize a DiT flow-matching policy from a LIBERO-90 pretrained checkpoint (built upon DINOv2 vision encoder and CLIP text encoder), and use the same observation and policy pipeline across methods within each suite. Within each benchmark and task, methods share the same stream, session order, data preprocessing, mini-batch schedule, and evaluation checkpoints. Optimization settings that of each method are listed in Supplementary~\cref{tab:method_optimization_config}. FlyGCL freezes the pretrained backbone and updates only the active adaptive expert and its output heads; the random-expanded router is updated from accumulated sufficient statistics as feature covariance and mean. Unless a method intrinsically requires stored samples, no replay memory is used. Method-specific regularization coefficients, prompt or LoRA dimensions, and other configurations follow the corresponding implementations and are held fixed across sessions after selection on the initial validation stream.

\section{Extensions beyond the Conference Version}
\label{sec:app_extension}

An earlier conference paper, FlyPrompt~\cite{yan2026flyprompt}, explored a preliminary brain-inspired GCL framework based on random-expanded expert routing and temporal-ensemble prompt experts. That study focused on continual image classification with prompt-based adaptation over pretrained vision models. The present work substantially extends this preliminary study in its scientific formulation, biological grounding, model generality, theoretical analysis, and empirical scope.

First, FlyGCL develops a broader hierarchical modular principle for GCL. The present work formulates GCL around two complementary requirements: separating conflicting experience to reduce interference and integrating compatible experience to promote generalization. We relate these requirements to MoE and EL, respectively, and study how their hierarchical coordination supports learning under online, uncertain, and evolving data distributions. This formulation generalizes the original algorithmic design into a broader principle for organizing CL.

Second, the biological grounding is substantially expanded. FlyPrompt focused on sparse random expansion and multi-timescale memory in the \emph{Drosophila} olfactory system. FlyGCL develops a more complete correspondence with the hierarchical organization of olfactory learning and memory, including sparse sensory expansion, spatially differentiated downstream pathways, and learning and memory processes operating across distinct temporal scales. We further introduce a controlled biologically grounded olfactory learning model that directly evaluates spatial specialization, temporal integration, and their hierarchical coordination under continual odor streams with different degrees of distributional recurrence.

Third, FlyGCL extends the model and theoretical scope beyond prompt-based visual learning. The hierarchical design can be instantiated with different lightweight learning modules, including prompts, adapters, LoRA branches, video-specific modules, task heads, policy adapters, and action heads, while retaining pretrained backbones as relatively stable representational substrates. We further develop a new theoretical analysis of hierarchical specialization and integration, decomposing GCL risk into separation, integration, and coordination terms and analyzing when the gains from routing and aggregation outweigh their coordination cost. The analysis also characterizes how stable pretrained representations can reduce the penalty of imperfect coordination.

The empirical evaluation is also substantially expanded within visual CL. Beyond reproducing the original image-classification setting, FlyGCL is evaluated across multiple pretrained representations and lightweight learning interfaces, including prompts, adapters, and LoRA. We further analyze the complementary contributions of MoE-based specialization and EL-based integration across different datasets and pretrained backbones. These experiments examine whether the proposed principle remains effective beyond a particular prompt design or backbone initialization, and establish its generality across diverse pretrained visual representations.
   
We then extend GCL from unimodal visual recognition to multimodal and temporal understanding. FlyGCL is evaluated on continual vision-language learning with pretrained CLIP models, where learning must preserve cross-modal semantic structure while accommodating evolving visual concepts. Beyond overall performance, we analyze changes in image-text representation spaces and the preservation of semantic relations under continual updates. We further evaluate continual ego-exo video understanding, where temporal dynamics, viewpoint shifts, and skill-related variations introduce substantially more complex distribution changes than static image classification. These settings test hierarchical specialization and integration across multimodal semantics, temporal structure, and cross-view experience.

Finally, we extend GCL to embodied vision-language-action learning, where continual updates directly affect sequential decision-making and action policies. FlyGCL is evaluated on language-conditioned robotic manipulation under evolving task distributions, providing a substantially more challenging setting than recognition-oriented benchmarks. Together with the controlled olfactory simulation, these experiments expand the empirical scope from static visual classification to biological modeling, multimodal perception, temporal multi-view understanding, and embodied action. They therefore validate the proposed hierarchical modular principle across a much broader range of models, modalities, and CL scenarios than the conference study.

\clearpage
\section{Additional Results}

\begin{table}[htbp!]
\centering

\caption{Performance comparison on continual visual recognition benchmarks under the GCL setting using ViT-B/16 backbones with different pretraining. We report average anytime accuracy $A_{\rm auc}$ ($\uparrow$), final average accuracy $A_{\rm last}$ ($\uparrow$), and forgetting $F$ ($\downarrow$). SL, use small backbone learning rate (1/100 of classifier learnign rate).}
\label{tab:img}
\resizebox{\textwidth}{!}{
\begin{tabular}{
l
*{9}{S[table-format=3.2(2)]}
}
\toprule
\multirow{2.5}{*}{\textbf{Method}} 
& \multicolumn{3}{c}{\textbf{CIFAR-100}} 
& \multicolumn{3}{c}{\textbf{ImageNet-R}} 
& \multicolumn{3}{c}{\textbf{CUB-200}} \\
\cmidrule(lr){2-4}\cmidrule(lr){5-7}\cmidrule(lr){8-10}
& {$A_{\rm{auc}} (\uparrow)$} 
& {$A_{\rm{last}} (\uparrow)$}
& {$F (\downarrow)$}
& {$A_{\rm{auc}} (\uparrow)$} 
& {$A_{\rm{last}} (\uparrow)$}
& {$F (\downarrow)$}
& {$A_{\rm{auc}} (\uparrow)$} 
& {$A_{\rm{last}} (\uparrow)$}
& {$F (\downarrow)$} \\
\midrule
\multicolumn{10}{l}{\textit{Backbone: Sup-21K}} \\
SeqFT
& 19.71\std{3.39} & 10.42\std{4.92} & 32.25\std{16.74}
& 7.51\std{3.94} & 2.29\std{0.85} & 74.79\std{23.89}
& 3.47\std{0.41} & 1.49\std{0.42} & 54.01\std{24.57} \\

Linear Probe
& 49.69\std{6.09} & 23.07\std{7.33} & 11.59\std{1.63}
& 29.24\std{1.26} & 16.87\std{3.14} & 18.95\std{1.44}
& 28.96\std{2.46} & 17.33\std{3.08} & 28.82\std{3.02} \\

SeqFT w/ SL
& 64.90\std{7.18} & 62.06\std{1.89} & 21.09\std{3.82}
& 47.20\std{1.47} & 39.60\std{2.43} & 33.99\std{7.14}
& 56.16\std{4.32} & 56.50\std{3.08} & 29.05\std{1.73} \\

EWC
& 55.94\std{8.36} & 58.13\std{1.77} & 41.13\std{4.59}
& 36.29\std{1.51} & 33.92\std{1.86} & 50.28\std{12.28}
& 40.27\std{4.03} & 35.57\std{3.25} & 68.73\std{13.00} \\

LwF
& 62.14\std{8.35} & 71.59\std{3.20} & 18.47\std{7.95}
& 37.91\std{1.60} & 36.66\std{2.35} & 48.01\std{13.05}
& 41.08\std{3.94} & 37.00\std{3.24} & 67.05\std{13.30} \\

L2P
& 76.23\std{2.73} & 79.11\std{1.43} & 11.53\std{1.44}
& 44.40\std{1.03} & 42.03\std{1.72} & 19.16\std{1.92}
& 64.30\std{2.18} & 61.42\std{2.13} & 29.15\std{3.19} \\

DualPrompt
& 76.04\std{3.32} & 76.62\std{0.74} & 11.42\std{0.91}
& 46.13\std{1.94} & 40.80\std{1.04} & 18.24\std{4.34}
& 65.03\std{2.24} & 62.43\std{1.78} & 27.18\std{2.91} \\

CODA-P
& 79.13\std{3.06} & 80.91\std{0.70} & 9.95\std{1.30}
& 51.87\std{2.81} & 48.09\std{2.75} & 18.49\std{2.44}
& 66.01\std{2.20} & 62.90\std{2.46} & 27.43\std{3.42} \\

MVP
& 67.74\std{4.96} & 63.22\std{0.69} & 33.19\std{2.33}
& 39.50\std{1.41} & 32.63\std{3.95} & 43.39\std{4.23}
& 54.69\std{3.14} & 50.07\std{3.86} & 47.61\std{5.73} \\

MISA
& 80.35\std{2.39} & 80.75\std{1.24} & 9.67\std{1.39}
& 51.52\std{2.09} & 45.08\std{1.43} & 21.46\std{4.25}
& 65.40\std{3.01} & 60.20\std{1.82} & 30.06\std{3.70} \\

\rowcolor{gray!10}
FlyGCL-Prompt
& 83.77\std{2.21} & 87.59\std{0.38} & \bfseries 4.31\std{0.82}
& 60.22\std{1.18} & 59.62\std{0.50} & \bfseries 10.74\std{1.69}
& \bfseries 78.02\std{2.74} & \bfseries 84.93\std{0.50} & 3.30\std{1.72} \\

\rowcolor{gray!10}
FlyGCL-Adapter
& 83.65\std{2.31} & 87.29\std{0.21} & 4.83\std{0.59}
& 63.91\std{1.22} & 62.64\std{0.24} & 12.13\std{1.57}
& 77.61\std{2.78} & 84.37\std{0.11} & \bfseries 3.27\std{1.60} \\

\rowcolor{gray!10}
FlyGCL-LoRA
& \bfseries 83.87\std{2.22} & \bfseries 87.64\std{0.19} & 4.46\std{0.62}
& \bfseries 65.32\std{1.17} & \bfseries 63.74\std{0.58} & 11.79\std{2.20}
& 77.95\std{2.49} & 84.79\std{0.42} & 3.55\std{1.96} \\

\midrule
\multicolumn{10}{l}{\textit{Backbone: Sup-21K/1K}} \\
L2P
& 63.88\std{7.79} & 68.96\std{7.63} & 14.88\std{6.27}
& 47.10\std{1.21} & 42.22\std{1.94} & 35.88\std{4.51}
& 42.96\std{4.13} & 45.00\std{3.83} & 35.05\std{9.31} \\

DualPrompt
& 68.02\std{2.08} & 67.04\std{5.84} & 19.18\std{5.13}
& 52.80\std{1.21} & 47.39\std{1.60} & 30.73\std{7.45}
& 46.80\std{2.89} & 46.39\std{2.76} & 35.40\std{8.16} \\

CODA-P
& 69.29\std{2.52} & 69.47\std{7.19} & 19.82\std{6.85}
& 51.20\std{1.76} & 44.30\std{1.50} & 35.58\std{5.52}
& 44.66\std{2.73} & 45.18\std{4.50} & 35.73\std{8.95} \\

MVP
& 64.69\std{3.77} & 51.29\std{7.56} & 46.98\std{8.15}
& 48.99\std{2.01} & 38.12\std{5.20} & 51.11\std{2.86}
& 44.10\std{2.81} & 33.97\std{9.62} & 62.93\std{8.05} \\

MISA
& 62.91\std{7.96} & 67.99\std{7.41} & 11.38\std{2.76}
& 50.87\std{1.69} & 47.75\std{2.87} & 28.66\std{8.19}
& 42.76\std{2.33} & 44.05\std{1.94} & 33.95\std{8.63} \\

\rowcolor{gray!10}
FlyGCL-Prompt
& 80.72\std{1.61} & 86.74\std{0.70} & \bfseries 3.09\std{0.86}
& 67.13\std{1.21} & 65.78\std{0.69} & \bfseries 12.10\std{1.92}
& 69.82\std{3.66} & 79.16\std{1.18} & 3.41\std{3.61} \\

\rowcolor{gray!10}
FlyGCL-Adapter
& \bfseries 82.79\std{1.71} & \bfseries 87.42\std{0.81} & 3.61\std{1.09}
& \bfseries 73.09\std{1.44} & \bfseries 70.97\std{0.92} & 13.22\std{2.06}
& \bfseries 71.46\std{4.02} & \bfseries 81.23\std{1.68} & 3.25\std{2.51} \\

\rowcolor{gray!10}
FlyGCL-LoRA
& 80.60\std{1.45} & 86.39\std{0.93} & 3.17\std{1.12}
& 71.19\std{1.24} & 69.30\std{0.61} & 12.32\std{2.46}
& 70.35\std{3.54} & 79.91\std{0.88} & \bfseries 2.10\std{3.58} \\

\midrule
\multicolumn{10}{l}{\textit{Backbone: iBOT-21K}} \\
L2P
& 56.82\std{8.42} & 67.61\std{8.76} & 12.58\std{6.24}
& 35.97\std{1.62} & 36.95\std{2.44} & 34.17\std{7.54}
& 14.76\std{1.53} & 24.51\std{4.82} & 19.09\std{8.05} \\

DualPrompt
& 66.06\std{4.52} & 67.14\std{8.60} & 19.22\std{4.22}
& 42.48\std{1.62} & 35.91\std{0.88} & 37.96\std{9.50}
& 19.90\std{3.68} & 21.84\std{2.35} & 30.18\std{8.79} \\

CODA-P
& 62.13\std{7.17} & 63.38\std{7.98} & 22.20\std{5.61}
& 45.50\std{1.66} & 39.44\std{1.35} & 41.21\std{6.19}
& 17.72\std{5.33} & 20.82\std{7.66} & 28.98\std{9.54} \\

MVP
& 62.33\std{3.06} & 48.32\std{11.42} & 50.23\std{11.27}
& 41.55\std{1.98} & 29.29\std{5.03} & 62.17\std{6.10}
& 28.73\std{3.18} & 23.62\std{9.51} & 62.20\std{5.81} \\

MISA
& 65.30\std{2.28} & 67.43\std{6.75} & 17.34\std{6.20}
& 40.94\std{1.22} & 36.16\std{1.58} & 35.62\std{11.69}
& 18.62\std{3.36} & 23.66\std{2.21} & 21.18\std{10.00} \\

\rowcolor{gray!10}
FlyGCL-Prompt
& 78.29\std{1.98} & \bfseries 85.49\std{2.13} & 2.74\std{1.38}
& 61.08\std{1.99} & 61.61\std{0.37} & \bfseries 16.32\std{3.03}
& 34.60\std{6.33} & 48.13\std{4.35} & 5.44\std{5.33} \\

\rowcolor{gray!10}
FlyGCL-Adapter
& \bfseries 79.27\std{1.97} & 85.45\std{2.04} & 2.65\std{1.39}
& \bfseries 64.82\std{1.41} & \bfseries 64.41\std{0.06} & 17.02\std{3.20}
& \bfseries 46.31\std{6.04} & \bfseries 60.75\std{3.11} & \bfseries 4.75\std{4.54} \\

\rowcolor{gray!10}
FlyGCL-LoRA
& 75.02\std{1.69} & 83.45\std{2.50} & \bfseries 2.34\std{1.52}
& 63.11\std{2.05} & 62.07\std{1.02} & 16.46\std{4.10}
& 44.27\std{4.57} & 56.80\std{4.87} & 6.25\std{3.83} \\

\midrule
\multicolumn{10}{l}{\textit{Backbone: iBOT-1K}} \\
L2P
& 53.17\std{7.08} & 62.28\std{8.19} & 16.47\std{7.80}
& 38.29\std{2.65} & 39.86\std{0.95} & 33.01\std{8.96}
& 19.20\std{2.21} & 31.21\std{5.24} & 21.26\std{11.09} \\

DualPrompt
& 52.39\std{3.21} & 53.56\std{6.10} & 20.45\std{3.88}
& 45.76\std{1.63} & 39.19\std{0.65} & 35.14\std{7.72}
& 29.32\std{3.15} & 30.53\std{5.33} & 32.70\std{9.00} \\

CODA-P
& 59.29\std{4.03} & 61.30\std{6.73} & 22.72\std{6.32}
& 49.56\std{1.57} & 42.64\std{2.78} & 40.23\std{6.19}
& 27.57\std{2.83} & 33.61\std{4.52} & 27.71\std{9.65} \\

MVP
& 57.52\std{3.62} & 44.08\std{12.42} & 53.47\std{12.87}
& 44.76\std{2.23} & 34.93\std{4.48} & 56.26\std{5.49}
& 33.81\std{3.50} & 26.32\std{9.97} & 62.61\std{6.88} \\

MISA
& 54.31\std{2.91} & 55.89\std{5.10} & 19.16\std{4.04}
& 43.91\std{3.95} & 40.09\std{1.24} & 33.52\std{10.12}
& 27.76\std{2.69} & 33.74\std{2.11} & 29.35\std{6.60} \\

\rowcolor{gray!10}
FlyGCL-Prompt
& 72.83\std{2.77} & \bfseries 81.88\std{3.12} & 3.47\std{1.74}
& 62.78\std{1.62} & 62.77\std{1.08} & \bfseries 15.10\std{1.44}
& 45.98\std{6.11} & 59.28\std{2.18} & \bfseries 4.75\std{4.44} \\

\rowcolor{gray!10}
FlyGCL-Adapter
& \bfseries 73.38\std{2.84} & 81.14\std{2.53} & 3.78\std{2.05}
& \bfseries 66.12\std{1.17} & \bfseries 64.07\std{0.82} & 16.43\std{1.74}
& \bfseries 53.45\std{5.36} & \bfseries 64.90\std{1.85} & 5.72\std{4.25} \\

\rowcolor{gray!10}
FlyGCL-LoRA
& 66.59\std{5.95} & 76.56\std{5.71} & \bfseries 2.52\std{2.11}
& 64.57\std{1.75} & 62.02\std{0.82} & 16.55\std{2.22}
& 52.25\std{5.52} & 63.54\std{1.90} & 5.13\std{4.29} \\

\midrule
\multicolumn{10}{l}{\textit{Backbone: DINO-1K}} \\
L2P
& 47.98\std{7.38} & 59.13\std{6.32} & 14.82\std{6.81}
& 35.81\std{1.37} & 36.58\std{1.31} & 33.00\std{7.22}
& 21.18\std{2.01} & 32.47\std{6.10} & 24.23\std{9.93} \\

DualPrompt
& 52.12\std{4.01} & 55.71\std{6.11} & 18.69\std{5.27}
& 43.03\std{1.12} & 35.40\std{1.40} & 37.89\std{7.65}
& 27.80\std{4.21} & 29.49\std{4.24} & 31.35\std{10.61} \\

CODA-P
& 54.69\std{4.49} & 58.91\std{5.43} & 19.43\std{5.83}
& 45.16\std{2.05} & 38.23\std{2.02} & 41.26\std{6.20}
& 29.22\std{2.97} & 31.85\std{7.47} & 32.69\std{10.65} \\

MVP
& 53.64\std{3.91} & 41.02\std{12.09} & 54.70\std{12.47}
& 41.78\std{2.15} & 32.00\std{4.22} & 56.54\std{6.00}
& 33.44\std{3.43} & 26.02\std{10.29} & 64.21\std{7.82} \\

MISA
& 52.03\std{3.07} & 55.98\std{4.26} & 17.76\std{4.28}
& 41.26\std{3.25} & 37.50\std{1.62} & 34.38\std{10.60}
& 27.13\std{3.31} & 33.08\std{4.10} & 25.77\std{10.77} \\

\rowcolor{gray!10}
FlyGCL-Prompt
& 66.93\std{5.69} & \bfseries 78.07\std{5.66} & 3.13\std{1.20}
& 59.48\std{1.43} & 58.92\std{0.84} & 16.32\std{2.17}
& 46.33\std{6.20} & 60.50\std{2.62} & \bfseries 4.20\std{4.88} \\

\rowcolor{gray!10}
FlyGCL-Adapter
& \bfseries 67.92\std{4.71} & 77.67\std{5.29} & 3.02\std{2.60}
& \bfseries 62.03\std{1.50} & \bfseries 61.16\std{1.09} & \bfseries 15.95\std{1.96}
& \bfseries 54.34\std{5.17} & \bfseries 66.45\std{1.92} & 4.49\std{4.32} \\

\rowcolor{gray!10}
FlyGCL-LoRA
& 67.00\std{3.82} & 76.12\std{3.02} & \bfseries 2.97\std{2.89}
& 61.61\std{1.75} & 60.01\std{1.23} & 16.26\std{2.60}
& 52.45\std{4.96} & 64.07\std{1.91} & 4.34\std{3.94} \\

\midrule
\multicolumn{10}{l}{\textit{Backbone: MoCo v3-1K}} \\
L2P
& 28.17\std{7.08} & 39.07\std{11.31} & 19.68\std{15.19}
& 17.43\std{1.71} & 16.27\std{5.43} & 43.59\std{4.64}
& 12.42\std{2.31} & 20.00\std{7.36} & 26.38\std{11.63} \\

DualPrompt
& 53.33\std{4.65} & 58.20\std{7.73} & 14.79\std{4.09}
& 36.69\std{1.74} & 30.24\std{1.94} & 35.07\std{5.93}
& 19.88\std{3.35} & 21.93\std{4.30} & 28.42\std{9.47} \\

CODA-P
& 53.47\std{3.42} & 58.55\std{7.19} & 16.97\std{6.15}
& 39.89\std{2.71} & 31.72\std{4.86} & 43.27\std{2.27}
& 20.09\std{2.52} & 24.10\std{6.48} & 30.90\std{10.77} \\

MVP
& 54.33\std{4.56} & 40.84\std{14.21} & 55.42\std{15.00}
& 36.45\std{2.35} & 26.37\std{6.04} & 55.87\std{4.26}
& 28.48\std{3.34} & 23.56\std{9.78} & 62.97\std{5.67} \\

MISA
& 57.00\std{6.06} & 62.18\std{3.94} & 16.19\std{5.26}
& 38.85\std{4.27} & 33.47\std{0.95} & 34.42\std{8.60}
& 25.02\std{4.39} & 27.68\std{4.35} & 35.50\std{7.97} \\

\rowcolor{gray!10}
FlyGCL-Prompt
& 67.29\std{4.65} & 79.67\std{3.83} & \bfseries 1.42\std{1.92}
& 59.00\std{1.58} & 59.68\std{0.64} & \bfseries 15.29\std{2.46}
& 44.69\std{6.32} & 59.30\std{2.31} & \bfseries 4.27\std{4.98} \\

\rowcolor{gray!10}
FlyGCL-Adapter
& 73.39\std{3.02} & \bfseries 82.49\std{1.29} & 2.57\std{1.83}
& \bfseries 61.82\std{1.46} & 59.61\std{0.78} & 17.45\std{2.49}
& \bfseries 51.38\std{3.41} & \bfseries 62.53\std{1.36} & 7.12\std{4.20} \\

\rowcolor{gray!10}
FlyGCL-LoRA
& \bfseries 73.45\std{1.96} & 81.99\std{1.52} & 3.02\std{1.52}
& 61.40\std{1.49} & \bfseries 59.75\std{0.83} & 15.81\std{2.55}
& 50.26\std{5.11} & 61.53\std{3.06} & 7.28\std{2.21} \\

\bottomrule
\end{tabular}
}
\end{table}

\begin{table*}[t]
    \centering
    \caption{Ablation study of FlyGCL under the GCL setting using ViT-B/16 backbones pretrained on ImageNet-21K (Sup-21K) and fine-tuned on ImageNet-1K (Sup-21K/1K). We report average anytime accuracy $A_{\rm auc}$ ($\uparrow$), final average accuracy $A_{\rm last}$ ($\uparrow$), forgetting $F$ ($\downarrow$), and negative backward transfer (NBT, $\downarrow$).}
    \label{tab:img_ablation}
    \renewcommand\arraystretch{1.08}
    \setlength{\tabcolsep}{3.5pt}
    \resizebox{\textwidth}{!}{
    \begin{tabular}{
        l
        c c
        *{8}{S[table-format=-2.2(2)]}
    }
    \toprule
    \multirow{2.5}{*}{\textbf{Method}}
    & \multicolumn{2}{c}{\textbf{Component}}
    & \multicolumn{4}{c}{\textbf{CIFAR-100}}
    & \multicolumn{4}{c}{\textbf{ImageNet-R}} \\
    \cmidrule(lr){2-3}\cmidrule(lr){4-7}\cmidrule(lr){8-11}
    & \textbf{MoE} & \textbf{EL}
    & {$A_{\rm auc}$ ($\uparrow$)} & {$A_{\rm last}$ ($\uparrow$)} & {$F$ ($\downarrow$)} & {NBT ($\downarrow$)}
    & {$A_{\rm auc}$ ($\uparrow$)} & {$A_{\rm last}$ ($\uparrow$)} & {$F$ ($\downarrow$)} & {NBT ($\downarrow$)} \\
    \midrule
    \multicolumn{11}{l}{\textit{Backbone: Sup-21K}} \\
    \multirow{5}{*}{FlyGCL-Prompt}
    & $\times$ & $\times$
    & 68.31\std{4.29} & 65.95\std{2.85} & 40.26\std{3.40} & 27.15\std{2.07}
    & 43.12\std{1.74} & 43.74\std{5.28} & 50.73\std{3.45} & 49.28\std{3.64} \\
    & $\checkmark$ & $\times$
    & 81.84\std{1.95} & 83.83\std{1.62} & 9.51\std{1.75} & 0.16\std{2.69}
    & 56.91\std{1.36} & 53.61\std{1.16} & 18.15\std{1.71} & 16.53\std{1.44} \\
    & $\times$ & $\checkmark$
    & 81.91\std{1.98} & 83.72\std{1.21} & 10.40\std{1.61} & 0.43\std{2.55}
    & 56.68\std{1.25} & 53.67\std{1.22} & 18.53\std{2.27} & 16.62\std{2.13} \\
    \rowcolor{gray!10}
    & $\checkmark$ & $\checkmark$
    & \bfseries 83.77\std{2.21} & \bfseries 87.59\std{0.38} & \bfseries 4.31\std{0.82} & \bfseries -4.23\std{1.71}
    & \bfseries 60.22\std{1.18} & \bfseries 59.62\std{0.50} & \bfseries 10.74\std{1.69} & \bfseries 9.00\std{1.67} \\
    \addlinespace[2pt]
    
    \multirow{5}{*}{FlyGCL-Adapter}
    & $\times$ & $\times$
    & 66.78\std{4.75} & 64.98\std{4.72} & 39.27\std{3.16} & 26.16\std{5.99}
    & 46.36\std{1.79} & 46.44\std{2.97} & 54.02\std{6.96} & 52.33\std{6.85} \\
    & $\checkmark$ & $\times$
    & 81.60\std{1.61} & 83.09\std{2.14} & 10.38\std{2.29} & 1.51\std{3.74}
    & 61.04\std{1.36} & 56.23\std{0.87} & 21.29\std{3.25} & 19.79\std{3.12} \\
    & $\times$ & $\checkmark$
    & 81.19\std{1.94} & 82.19\std{1.87} & 12.56\std{2.24} & 2.53\std{3.30}
    & 60.22\std{1.45} & 61.20\std{0.58} & 15.98\std{2.43} & 14.19\std{2.17} \\
    \rowcolor{gray!10}
    & $\checkmark$ & $\checkmark$
    & \bfseries 83.65\std{2.31} & \bfseries 87.29\std{0.21} & \bfseries 4.83\std{0.59} & \bfseries -4.09\std{2.10}
    & \bfseries 63.91\std{1.22} & \bfseries 62.64\std{0.24} & \bfseries 12.13\std{1.57} & \bfseries 10.72\std{1.44} \\
    \addlinespace[2pt]
    
    \multirow{5}{*}{FlyGCL-LoRA}
    & $\times$ & $\times$
    & 70.77\std{3.86} & 68.98\std{3.30} & 33.24\std{3.85} & 21.43\std{4.65}
    & 48.30\std{2.19} & 48.87\std{5.69} & 54.24\std{4.25} & 52.97\std{4.42} \\
    & $\checkmark$ & $\times$
    & 81.27\std{2.30} & 83.17\std{1.05} & 10.08\std{1.17} & 0.53\std{1.98}
    & 62.48\std{1.41} & 57.78\std{0.93} & 20.04\std{2.99} & 18.60\std{2.57} \\
    & $\times$ & $\checkmark$
    & 82.04\std{2.17} & 83.14\std{1.72} & 10.80\std{1.78} & 1.13\std{2.97}
    & 62.01\std{1.30} & 57.67\std{0.60} & 20.76\std{3.17} & 18.67\std{2.89} \\
    \rowcolor{gray!10}
    & $\checkmark$ & $\checkmark$
    & \bfseries 83.87\std{2.22} & \bfseries 87.64\std{0.19} & \bfseries 4.46\std{0.62} & \bfseries -5.06\std{1.89}
    & \bfseries 65.32\std{1.17} & \bfseries 63.74\std{0.58} & \bfseries 11.79\std{2.20} & \bfseries 10.30\std{1.98} \\
    \midrule
    
    \multicolumn{11}{l}{\textit{Backbone: Sup-21K/1K}} \\
    \multirow{5}{*}{FlyGCL-Prompt}
    & $\times$ & $\times$
    & 57.64\std{3.84} & 56.17\std{9.31} & 57.11\std{8.94} & 42.63\std{7.39}
    & 42.51\std{1.79} & 43.84\std{3.94} & 59.07\std{6.71} & 57.55\std{6.54} \\
    & $\checkmark$ & $\times$
    & 71.08\std{2.93} & 71.47\std{6.50} & 18.68\std{5.11} & 4.74\std{8.76}
    & 55.47\std{1.05} & 48.13\std{1.89} & 31.95\std{3.66} & 30.98\std{3.53} \\
    & $\times$ & $\checkmark$
    & 72.77\std{3.14} & 69.35\std{6.50} & 23.01\std{4.58} & 11.43\std{6.48}
    & 56.47\std{1.22} & 47.48\std{2.08} & 37.84\std{8.69} & 36.48\std{8.51} \\
    \rowcolor{gray!10}
    & $\checkmark$ & $\checkmark$
    & \bfseries 80.72\std{1.61} & \bfseries 86.74\std{0.70} & \bfseries 3.09\std{0.86} & \bfseries -10.71\std{2.94}
    & \bfseries 67.13\std{1.21} & \bfseries 65.78\std{0.69} & \bfseries 12.10\std{1.92} & \bfseries 10.88\std{1.95} \\
    \addlinespace[2pt]
    
    \multirow{5}{*}{FlyGCL-Adapter}
    & $\times$ & $\times$
    & 60.42\std{2.78} & 57.58\std{8.68} & 59.11\std{10.34} & 44.65\std{7.77}
    & 49.06\std{2.22} & 48.99\std{5.26} & 59.55\std{8.94} & 58.13\std{8.57} \\
    & $\checkmark$ & $\times$
    & 72.72\std{3.30} & 72.83\std{7.56} & 18.24\std{5.41} & 6.26\std{7.93}
    & 62.76\std{1.76} & 54.83\std{2.36} & 32.58\std{5.81} & 31.89\std{5.81} \\
    & $\times$ & $\checkmark$
    & 74.60\std{1.93} & 71.77\std{5.14} & 21.95\std{4.90} & 11.04\std{5.45}
    & 62.13\std{1.49} & 64.01\std{2.05} & 24.25\std{4.34} & 23.18\std{4.25} \\
    \rowcolor{gray!10}
    & $\checkmark$ & $\checkmark$
    & \bfseries 82.79\std{1.71} & \bfseries 87.42\std{0.81} & \bfseries 3.61\std{1.09} & \bfseries -6.78\std{2.42}
    & \bfseries 73.09\std{1.44} & \bfseries 70.97\std{0.92} & \bfseries 13.22\std{2.06} & \bfseries 12.42\std{2.06} \\
    \addlinespace[2pt]
    
    \multirow{5}{*}{FlyGCL-LoRA}
    & $\times$ & $\times$
    & 57.01\std{3.58} & 54.88\std{10.98} & 56.11\std{11.59} & 39.63\std{10.37}
    & 47.06\std{2.13} & 47.82\std{3.48} & 54.80\std{7.17} & 53.29\std{7.26} \\
    & $\checkmark$ & $\times$
    & 67.20\std{4.77} & 65.91\std{9.04} & 19.01\std{5.99} & 7.29\std{6.13}
    & 60.20\std{1.03} & 52.25\std{2.96} & 32.76\std{5.47} & 31.67\std{5.00} \\
    & $\times$ & $\checkmark$
    & 69.74\std{4.18} & 74.10\std{7.38} & 17.34\std{6.33} & 4.25\std{5.36}
    & 59.32\std{1.49} & 62.32\std{1.18} & 22.85\std{3.69} & 21.74\std{3.65} \\
    \rowcolor{gray!10}
    & $\checkmark$ & $\checkmark$
    & \bfseries 80.60\std{1.45} & \bfseries 86.39\std{0.93} & \bfseries 3.17\std{1.12} & \bfseries -10.06\std{2.56}
    & \bfseries 71.19\std{1.24} & \bfseries 69.30\std{0.61} & \bfseries 12.32\std{2.46} & \bfseries 11.33\std{2.38} \\
    \bottomrule
    \end{tabular}
    }
    \end{table*}

\begin{table*}[htbp!]
\centering
\caption{Performance comparison on continual vision-language benchmarks under the GCL setting. We report final average accuracy $A_{\rm last}$ (\%, $\uparrow$), average anytime accuracy $A_{\rm auc}$ (\%, $\uparrow$), average forgetting $F$ (\%, $\downarrow$), and backward transfer $\mathrm{BWT}$ (\%, $\uparrow$).}
\label{tab:vl}
\setlength{\tabcolsep}{2.5pt}

\resizebox{\linewidth}{!}{
\begin{tabular}{
l
*{8}{S[table-format=3.2(3)]}
}
\toprule

\multirow{2.5}{*}{\textbf{Method}} 
& \multicolumn{4}{c}{\textbf{CIFAR-100}} 
& \multicolumn{4}{c}{\textbf{ImageNet-R}} \\

\cmidrule(lr){2-5}
\cmidrule(lr){6-9}

& {$A_{\rm last} (\uparrow)$} 
& {$A_{\rm auc} (\uparrow)$}
& {$F (\downarrow)$} 
& {$\mathrm{BWT} (\uparrow)$}

& {$A_{\rm last} (\uparrow)$} 
& {$A_{\rm auc} (\uparrow)$}
& {$F (\downarrow)$} 
& {$\mathrm{BWT} (\uparrow)$} \\

\midrule

SeqFT 
& 73.38\std{1.51}
& 79.96\std{0.68}
& 21.12\std{2.22}
& -19.85\std{2.30}
& 72.74\std{0.76}
& 80.45\std{0.58}
& 17.90\std{1.29}
& -15.89\std{1.43} \\

EWC~\cite{kirkpatrick2017overcoming}
& 71.82\std{2.01}
& 79.75\std{0.73}
& 22.98\std{2.60}
& -21.95\std{2.46}
& 69.17\std{0.44}
& 76.93\std{0.45}
& 19.94\std{1.05}
& -16.95\std{1.20} \\

LwF~\cite{li2017learning}
& 65.1\std{0.69}
& 74.63\std{0.90}
& 35.15\std{0.73}
& -35.02\std{0.80}
& 60.11\std{1.27}
& 73.96\std{0.65}
& 37.46\std{1.98}
& -36.26\std{1.86} \\

L2P~\cite{wang2022learning}
& 66.34\std{0.16}
& 73.55\std{1.03}
& 8.59\std{1.18}
& -8.29\std{1.20}
& 68.21\std{0.10}
& 75.96\std{0.65}
& 8.49\std{0.26}
& -8.30\std{0.21} \\

DualPrompt~\cite{wang2022dualprompt}
& 65.83\std{0.19}
& 71.19\std{1.01}
& 7.12\std{1.05}
& -6.02\std{1.19}
& 66.56\std{0.11}
& 73.37\std{0.77}
& 7.98\std{0.15}
& -7.36\std{0.19} \\

CODA-Prompt~\cite{smith2023coda}
& 66.79\std{0.08}
& 74.34\std{0.92}
& 8.72\std{1.19}
& -8.60\std{1.20}
& 71.68\std{0.00}
& 79.32\std{0.90}
& 8.00\std{0.36}
& -8.00\std{0.36} \\

% PROOF
% & 38.58\std{5.65}
% & 36.93\std{3.89}
% & 33.44\std{6.15}
% & -32.49\std{6.46}
% & 18.79\std{11.70}
% & 26.82\std{3.49}
% & 23.52\std{5.35}
% & -19.63\std{5.12} \\

CLAP4CLIP~\cite{jha2024clap4clip}
& 68.24\std{0.06}
& 75.92\std{1.16}
& 9.07\std{0.74}
& -8.79\std{0.78}
& 77.01\std{0.11}
& 83.11\std{0.42}
& 7.29\std{0.41}
& -6.98\std{0.45} \\

% MoE-Adapters
% & 63.38\std{2.44}
% & 74.63\std{0.67}
% & 36.02\std{2.22}
% & -35.75\std{2.08}
% & 67.90\std{1.37}
% & 78.72\std{0.64}
% & 31.08\std{0.98}
% & -29.40\std{1.38} \\

% RaPF
% & 70.38\std{1.76}
% & 75.31\std{0.29}
% & 15.06\std{1.60}
% & -4.62\std{1.86}
% & 65.32\std{1.62}
% & 69.50\std{0.68}
% & 18.42\std{1.15}
% & -8.80\std{1.25} \\

MG-CLIP~\cite{huang2025mind}
& 77.54\std{0.36}
& 81.35\std{0.46}
& 10.48\std{0.54}
& -6.64\std{0.46}
& 72.67\std{1.83}
& 78.37\std{0.68}
& 10.91\std{1.07}
& -5.66\std{1.19} \\

% DMNSP
% & 76.79\std{0.38}
% & 78.12\std{0.92}
% & 9.21\std{0.55}
% & -4.62\std{0.50}
% & 75.34\std{0.97}
% & 78.72\std{1.05}
% & 8.57\std{0.86}
% & -4.34\std{1.27} \\

% \rowcolor{gray!10}
% FlyPrompt
% & 74.28\std{0.38}
% & 81.15\std{0.91}
% & 11.48\std{1.21}
% & -10.25\std{1.36}
% & 71.09\std{0.76}
% & 77.18\std{0.83}
% & 8.85\std{1.11}
% & -2.38\std{0.97} \\

% \rowcolor{gray!10}
% FlyAdapter
% & 76.17\std{1.12}
% & 81.94\std{0.88}
% & 6.31\std{1.40}
% & -3.14\std{1.45}
% & 77.13\std{0.85}
% & 81.22\std{1.04}
% & 7.15\std{0.36}
% & -3.73\std{0.57} \\

\rowcolor{gray!10}
% FlyLoRA
FlyGCL (Ours)
& \bf 79.59\std{0.48}
& \bf 85.15\std{0.22}
& \bf 4.70\std{0.59}
& \bf -3.08\std{0.53}
& \bf 79.30\std{0.25}
& \bf 84.72\std{0.31}
& \bf 6.67\std{0.66}
& \bf -4.58\std{0.72} \\

\bottomrule
\end{tabular}
} 

\end{table*}

% --- Table: Skill Assessment ---

\begin{table*}[t]
\centering
\caption{Performance comparison on the continual skill assessment benchmark (EgoExoLearn). We report final average ranking accuracy $A_{\rm last}$ (\%, $\uparrow$), average forgetting $\bar{F}_{T}$ (\%, $\downarrow$), and average anytime ranking accuracy $A_{\rm auc}$ (\%, $\uparrow$).}
\label{tab:ce4l_skill}
\renewcommand\arraystretch{1.10}
\setlength{\tabcolsep}{3.0pt}
\resizebox{\textwidth}{!}{
\begin{tabular}{
    l
    S[table-format=2.2(2)]
    S[table-format=-2.2(2)]
    S[table-format=2.2(2)]
    S[table-format=2.2(2)]
    S[table-format=-2.2(2)]
    S[table-format=2.2(2)]
    S[table-format=2.2(2)]
    S[table-format=-2.2(2)]
    S[table-format=2.2(2)]
}
\toprule
\multirow{2.5}{*}{\textbf{Method}}
% & \multicolumn{6}{c}{\textbf{Ego-exo}}
% & \multicolumn{3}{c}{\textbf{Ego-only}} \\
% \cmidrule(lr){2-7}
% \cmidrule(lr){8-10}
& \multicolumn{3}{c}{\textbf{RAAN + RN (Ego-exo)}}
& \multicolumn{3}{c}{\textbf{RAAN + TL (Ego-exo)}}
& \multicolumn{3}{c}{\textbf{RAAN (Ego-only)}} \\
\cmidrule(lr){2-4}
\cmidrule(lr){5-7}
\cmidrule(lr){8-10}
& {$A_{\rm last}$ ($\uparrow$)}
& {$\bar{F}_{T}$ ($\downarrow$)}
& {$A_{\rm auc}$ ($\uparrow$)}
& {$A_{\rm last}$ ($\uparrow$)}
& {$\bar{F}_{T}$ ($\downarrow$)}
& {$A_{\rm auc}$ ($\uparrow$)}
& {$A_{\rm last}$ ($\uparrow$)}
& {$\bar{F}_{T}$ ($\downarrow$)}
& {$A_{\rm auc}$ ($\uparrow$)} \\
\midrule
% Joint
% & 80.17\std{0.06} & {--} & {--}
% & 80.41\std{0.08} & {--} & {--}
% & 80.42\std{0.14} & {--} & {--} \\

SeqFT
& 72.45\std{2.21} & 10.97\std{1.64} & 75.76\std{2.00}
& 71.63\std{2.13} & 13.20\std{1.94} & 75.87\std{2.03}
& 72.94\std{2.15} & 11.30\std{1.35} & 76.19\std{1.84} \\

ER~\cite{rolnick2019experience}
& 77.39\std{0.50} &  3.13\std{2.83} & 79.14\std{1.40}
& 78.27\std{0.73} &  2.39\std{0.95} & 79.80\std{1.61}
& 77.24\std{0.56} &  3.94\std{2.30} & 79.07\std{1.48} \\

DER++~\cite{buzzega2020dark}
& 78.60\std{0.30} &  1.17\std{1.70} & 79.50\std{1.08}
& 79.47\std{0.07} &  1.09\std{0.81} & 80.14\std{1.16}
& 79.46\std{0.26} &  0.98\std{1.24} & 79.98\std{0.95} \\

EWC~\cite{kirkpatrick2017overcoming}
& 72.59\std{1.95} & 11.32\std{2.26} & 75.72\std{1.85}
& 71.05\std{2.00} & 14.22\std{2.44} & 75.75\std{1.96}
& 73.34\std{2.28} & 10.62\std{1.75} & 76.29\std{1.77} \\

LwF~\cite{li2017learning}
& 68.87\std{2.58} & 12.52\std{3.20} & 74.42\std{1.65}
& 71.83\std{2.11} &  0.49\std{2.24} & 74.63\std{3.04}
& 72.61\std{1.73} &  2.21\std{3.63} & 75.13\std{2.98} \\

L2P+~\cite{wang2022learning}
& 71.43\std{2.56} & 12.52\std{1.20} & 74.98\std{1.65}
& 71.38\std{0.97} & 14.22\std{3.81} & 75.18\std{0.85}
& 72.03\std{1.48} & 11.69\std{3.64} & 75.58\std{0.93} \\

DualPrompt+~\cite{wang2022dualprompt}
& 70.04\std{5.90} &  9.26\std{3.04} & 74.91\std{2.37}
& 75.21\std{0.70} &  7.45\std{2.50} & 77.04\std{1.10}
& 75.90\std{0.44} &  6.02\std{2.02} & 77.42\std{0.91} \\

S-Prompt+~\cite{wang2022s}
& 77.75\std{0.41} &  \bf 0.00\std{0.21} & 79.61\std{2.00}
& 78.05\std{0.78} &  \bf 0.13\std{0.17} & 80.17\std{2.10}
& 79.17\std{0.30} &  \bf 0.05\std{0.03} & 80.49\std{1.76} \\

% VISTA
% & 79.58\std{0.81} &  0.19\std{0.19} & {--}
% & 80.41\std{0.48} &  0.12\std{0.14} & {--}
% & 80.38\std{0.32} & -0.19\std{0.51} & {--} \\

% \midrule
\rowcolor{gray!10}
FlyGCL (Ours)
& \bf 82.66\std{0.22} &  0.31\std{0.22} & \bf 81.38\std{0.17}
& \bf 82.59\std{0.23} &  0.31\std{0.25} & \bf 81.32\std{0.15}
& \bf 82.27\std{0.47} &  0.38\std{0.28} & \bf 81.30\std{0.11} \\
%\rowcolor{gray!10}
%FlyGCL (Ours)
%& 80.02\std{0.23} & \bf -0.31\std{0.69} & 79.64\std{2.09}
%& 80.07\std{0.82} & \bf -0.07\std{0.59} & 80.12\std{2.08}
%& 80.34\std{0.42} & 0.13\std{0.14} & 80.54\std{1.79} \\
\bottomrule
\end{tabular}
}
\end{table*}

% \input{tex/tab-ego-learn-skill-updated}

% --- Table: Action Segmentation ---
% \input{tex/tab-ego-learn-as}

% --- Table: Cross-View Association ---
% \input{tex/tab-ego-learn-cva}

% --- Table: Action Anticipation ---
\begin{table*}[t]
\centering
\caption{Performance comparison on the continual action anticipation benchmark (EgoExoLearn). We report final average Top-5 recall $A_{\rm last}$ (\%, $\uparrow$) and average anytime Top-5 recall $A_{\rm auc}$ (\%, $\uparrow$) for verb (-V) and noun (-N) prediction. Avg. denotes the average over Ego-V, Ego-N, Exo-V, and Exo-N.}
\label{tab:ce4l_anticipation}
\renewcommand\arraystretch{1.10}
\setlength{\tabcolsep}{2.5pt}
\resizebox{0.90\textwidth}{!}{
\begin{tabular}{
    l
    *{5}{S[table-format=2.2(3)]}
    *{5}{S[table-format=2.2(3)]}
}
\toprule
\multirow{2.5}{*}{\textbf{Method}}
& \multicolumn{5}{c}{\textbf{$A_{\rm last}$} ($\uparrow$)}
& \multicolumn{5}{c}{\textbf{$A_{\rm auc}$} ($\uparrow$)} \\
\cmidrule(lr){2-6}
\cmidrule(lr){7-11}
& {\textbf{Ego-V}} & {\textbf{Ego-N}} & {\textbf{Exo-V}} & {\textbf{Exo-N}} & {\textbf{Avg.}}
& {\textbf{Ego-V}} & {\textbf{Ego-N}} & {\textbf{Exo-V}} & {\textbf{Exo-N}} & {\textbf{Avg.}} \\
\midrule

\multicolumn{11}{c}{\textit{Ego-exo}} \\
\midrule
% Joint
% & 35.50\std{0.41} & 27.80\std{0.17} & 33.12\std{0.29} & 27.11\std{0.12} & 30.88\std{0.25}
% & {--} & {--} & {--} & {--} & {--} \\

SeqFT
& 31.25\std{1.15} & 19.26\std{0.67} & 30.30\std{0.91} & 18.82\std{0.54} & 24.91\std{0.82}
& 31.60\std{0.33} & 20.60\std{0.90} & 30.32\std{0.63} & 20.07\std{0.95} & 25.65\std{0.70} \\

ER~\cite{rolnick2019experience}
& 34.49\std{0.53} & 25.29\std{0.24} & 32.70\std{0.15} & 24.75\std{0.23} & 29.31\std{0.29}
& 33.44\std{0.28} & 24.44\std{1.00} & 32.03\std{0.31} & 23.80\std{1.16} & 28.43\std{0.69} \\

DER++~\cite{buzzega2020dark}
& 33.70\std{0.37} & 24.55\std{0.53} & 32.37\std{0.26} & 24.13\std{0.35} & 28.69\std{0.38}
& 32.92\std{0.33} & 23.57\std{1.55} & 31.73\std{0.38} & 23.18\std{1.54} & 27.85\std{0.95} \\

EWC~\cite{kirkpatrick2017overcoming}
& 31.29\std{1.04} & 19.96\std{0.48} & 30.64\std{0.95} & 19.44\std{0.25} & 25.33\std{0.68}
& 31.54\std{0.20} & 20.67\std{1.03} & 30.04\std{0.44} & 20.11\std{1.17} & 25.59\std{0.71} \\

LwF~\cite{li2017learning}
& 30.57\std{0.21} & 21.38\std{1.46} & 31.19\std{1.32} & 20.58\std{1.16} & 25.93\std{1.04}
& 30.58\std{0.38} & 21.14\std{0.81} & 30.08\std{0.21} & 20.50\std{0.94} & 25.57\std{0.58} \\

L2P+~\cite{wang2022learning}
& 28.51\std{1.89} & 16.19\std{0.80} & 26.12\std{0.93} & 15.21\std{0.62} & 21.51\std{1.06}
& 28.85\std{0.62} & 17.53\std{0.81} & 26.35\std{1.12} & 16.74\std{1.03} & 22.37\std{0.90} \\

DualPrompt+~\cite{wang2022dualprompt}
& 29.50\std{2.37} & 16.02\std{2.03} & 28.31\std{1.69} & 16.02\std{1.81} & 22.47\std{1.98}
& 32.66\std{1.95} & 18.19\std{3.01} & 30.57\std{3.17} & 18.25\std{2.90} & 24.92\std{2.76} \\

S-Prompt+~\cite{wang2022s}
& 31.83\std{2.24} & 16.49\std{2.43} & 27.64\std{1.46} & 16.36\std{2.02} & 23.08\std{2.04}
& \bf 35.09\std{4.24} & 18.97\std{3.54} & 31.56\std{4.18} & 19.08\std{3.43} & 26.17\std{3.85} \\

% VISTA
% & 35.23\std{1.05} & 25.23\std{1.33} & 32.77\std{3.37} & 26.01\std{1.48} & 29.81\std{1.81}
% & {--} & {--} & {--} & {--} & {--} \\

\rowcolor{gray!10}
FlyGCL (Ours)
& \bf 35.21\std{0.52} & \bf 34.18\std{0.60} & \bf 39.18\std{2.26} & \bf 41.65\std{1.28} & \bf 37.55\std{1.15}
& 32.90\std{0.32} & \bf 32.46\std{0.28} & \bf 38.75\std{0.77} & \bf 38.85\std{0.47} & \bf 35.74\std{0.37} \\
%\rowcolor{gray!10}
%FlyGCL (Ours)
%& 34.62\std{3.79} & 17.95\std{4.15} & 34.01\std{4.28} & 18.11\std{3.98} & 26.15\std{4.15}
%& \bf 37.50\std{5.06} & 20.57\std{5.23} & 36.71\std{4.97} & 20.75\std{5.09} & 28.79\std{5.32} \\
\midrule

\multicolumn{11}{c}{\textit{Ego-only}} \\
\midrule
% Joint
% & 33.40\std{0.03} & 29.88\std{0.16} & 32.68\std{0.03} & 29.32\std{0.11} & 31.32\std{0.08}
% & {--} & {--} & {--} & {--} & {--} \\

SeqFT
& 29.83\std{1.05} & 20.77\std{0.99} & 30.47\std{0.85} & 20.34\std{0.81} & 25.35\std{0.93}
& 29.36\std{0.31} & 22.03\std{1.48} & 28.81\std{0.25} & 21.59\std{1.57} & 25.45\std{0.90} \\

ER~\cite{rolnick2019experience}
& 31.15\std{0.49} & 26.47\std{0.47} & \bf 30.84\std{0.23} & \bf 26.50\std{0.21} & 28.74\std{0.35}
& 30.41\std{0.40} & 26.04\std{1.37} & \bf 30.25\std{0.32} & \bf 26.03\std{1.55} & \bf 28.18\std{0.91} \\

DER++~\cite{buzzega2020dark}
& 31.31\std{0.36} & 26.91\std{0.69} & 30.68\std{0.22} & 26.16\std{0.56} & 28.77\std{0.46}
& 30.29\std{0.57} & 26.15\std{2.08} & 29.40\std{0.65} & 25.54\std{1.86} & 27.85\std{1.29} \\

EWC~\cite{kirkpatrick2017overcoming}
& 29.72\std{1.12} & 20.63\std{1.00} & 30.16\std{0.37} & 20.19\std{0.84} & 25.18\std{0.83}
& 29.31\std{0.18} & 22.02\std{1.22} & 28.69\std{0.33} & 21.56\std{1.29} & 25.39\std{0.76} \\

LwF~\cite{li2017learning}
& 29.84\std{0.77} & 22.24\std{0.77} & 29.75\std{0.75} & 21.79\std{0.74} & 25.91\std{0.76}
& 29.19\std{0.41} & 22.52\std{0.92} & 28.76\std{0.58} & 22.09\std{1.03} & 25.64\std{0.73} \\

L2P+~\cite{wang2022learning}
& 28.76\std{1.14} & 16.24\std{1.18} & 27.88\std{1.20} & 15.50\std{1.34} & 22.10\std{1.21}
& 28.77\std{0.40} & 17.29\std{1.05} & 27.24\std{0.53} & 16.86\std{0.84} & 22.54\std{0.70} \\

DualPrompt+~\cite{wang2022dualprompt}
& 27.86\std{1.30} & 17.54\std{2.14} & 27.36\std{0.85} & 17.42\std{1.92} & 22.54\std{1.55}
& 28.77\std{1.09} & 19.73\std{3.41} & 27.93\std{1.98} & 19.92\std{3.56} & 24.09\std{2.51} \\

S-Prompt+~\cite{wang2022s}
& 30.41\std{1.11} & 18.36\std{1.72} & 27.99\std{1.62} & 18.23\std{1.35} & 23.75\std{1.45}
& 29.36\std{1.48} & 21.54\std{3.61} & 28.58\std{2.89} & 21.73\std{3.56} & 25.30\std{2.89} \\

\rowcolor{gray!10}
FlyGCL (Ours)
& \bf 34.97\std{0.19} & \bf 33.75\std{0.45} & 28.53\std{0.92} & 18.27\std{0.21} & \bf 28.88\std{0.33}
& \bf 32.66\std{0.23} & \bf 31.75\std{0.34} & 27.82\std{0.22} & 15.85\std{0.55} & 27.02\std{0.07} \\
\bottomrule
\end{tabular}
}
\end{table*}

% \input{tex/tab-ego-learn-aa-updated}

% --- Table: Action Planning ---
% \input{tex/tab-ego-learn-ap}

% --- Table: egoexo-fitness (skill assessment) ---

\begin{table*}[t]
\centering
\caption{Performance comparison on the continual skill assessment benchmark (EgoExo-Fitness). We report final average ranking accuracy $\bar{A}_{\mathrm{last}}$ (\%, $\uparrow$), average forgetting $\bar{F}_{T}$ (\%, $\downarrow$), and average anytime ranking accuracy $A_{\rm auc}$ (\%, $\uparrow$).}
\label{tab:fitness_skill}
\renewcommand\arraystretch{1.10}
\setlength{\tabcolsep}{3.2pt}
\resizebox{\textwidth}{!}{
\begin{tabular}{
    l
    S[table-format=3.2(2)]
    S[table-format=-2.2(2)]
    S[table-format=3.2(2)]
    S[table-format=3.2(2)]
    S[table-format=-2.2(2)]
    S[table-format=3.2(2)]
    S[table-format=3.2(2)]
    S[table-format=-2.2(2)]
    S[table-format=3.2(2)]
}
\toprule
\multirow{2.5}{*}{\textbf{Method}}
% & \multicolumn{3}{c}{\textbf{Ego-only}}
% & \multicolumn{6}{c}{\textbf{Ego-exo}} \\
% \cmidrule(lr){2-4}
% \cmidrule(lr){5-10}
& \multicolumn{3}{c}{\textbf{RAAN (Ego-only)}}
& \multicolumn{3}{c}{\textbf{RAAN+RN (Ego-exo)}}
& \multicolumn{3}{c}{\textbf{RAAN+TL (Ego-exo)}} \\
\cmidrule(lr){2-4}
\cmidrule(lr){5-7}
\cmidrule(lr){8-10}
& {$\bar{A}_{\mathrm{last}}$ ($\uparrow$)}
& {$\bar{F}_{T}$ ($\downarrow$)}
& {$A_{\rm auc}$ ($\uparrow$)}
& {$\bar{A}_{\mathrm{last}}$ ($\uparrow$)}
& {$\bar{F}_{T}$ ($\downarrow$)}
& {$A_{\rm auc}$ ($\uparrow$)}
& {$\bar{A}_{\mathrm{last}}$ ($\uparrow$)}
& {$\bar{F}_{T}$ ($\downarrow$)}
& {$A_{\rm auc}$ ($\uparrow$)} \\
\midrule
% Joint$_{2000}$
% & 80.42\std{0.14} & {--} & {--}
% & 80.17\std{0.06} & {--} & {--}
% & 80.41\std{0.08} & {--} & {--} \\

% Joint$_{1}$
% & 54.51\std{0.93} & {--} & {--}
% & 54.34\std{1.18} & {--} & {--}
% & 54.34\std{1.18} & {--} & {--} \\
% \midrule
SeqFT
& 54.26\std{1.87} & 1.55\std{1.02} & 54.62\std{3.36}
& 54.21\std{1.98} & 1.60\std{1.07} & 54.52\std{3.58}
& 54.22\std{1.89} & 1.63\std{0.99} & 54.58\std{3.39} \\

ER~\cite{rolnick2019experience}
& 54.62\std{1.86} & 1.02\std{0.57} & 54.68\std{3.48}
& 54.36\std{1.99} & 1.39\std{0.97} & 54.64\std{3.57}
& 54.60\std{1.92} & 0.78\std{0.40} & 54.64\std{3.50} \\

DER++~\cite{buzzega2020dark}
& 54.68\std{1.74} & \bfseries 0.45\std{0.06} & 54.78\std{3.41}
& 54.73\std{2.11} & \bfseries 0.65\std{0.56} & 54.71\std{3.57}
& 54.62\std{1.71} & \bfseries 0.27\std{0.14} & 54.73\std{3.42} \\

EWC~\cite{kirkpatrick2017overcoming}
& 54.22\std{1.86} & 1.60\std{1.02} & 54.62\std{3.34}
& 54.17\std{2.00} & 1.69\std{1.19} & 54.62\std{3.34}
& 54.22\std{1.87} & 1.60\std{1.00} & 54.58\std{3.39} \\

LwF~\cite{li2017learning}
& 54.28\std{1.87} & 1.36\std{0.89} & 54.67\std{3.30}
& 54.31\std{2.00} & 1.34\std{0.90} & 54.52\std{3.54}
& 54.42\std{1.90} & 1.14\std{0.66} & 54.67\std{3.34} \\

L2P+~\cite{wang2022learning}
& 54.33\std{1.96} & 1.53\std{0.92} & 54.62\std{3.38}
& 54.12\std{2.08} & 1.67\std{1.09} & 54.60\std{3.44}
& 54.23\std{1.94} & 1.62\std{1.01} & 54.62\std{3.41} \\

DualPrompt+~\cite{wang2022dualprompt}
& 50.94\std{1.34} & 3.05\std{3.20} & 52.30\std{2.22}
& 51.64\std{0.34} & 1.97\std{2.10} & 52.36\std{2.52}
& 50.95\std{1.27} & 3.21\std{3.06} & 52.38\std{2.27} \\

S-Prompt+~\cite{wang2022s}
& 57.85\std{3.97} & 1.07\std{0.20} & 57.22\std{3.64}
& 58.04\std{2.63} & 1.19\std{0.85} & 57.98\std{3.13}
& 57.75\std{3.35} & 1.79\std{0.33} & 57.94\std{2.70} \\

% VISTA
% & 58.40\std{4.49} & 0.81\std{1.58} & 59.38\std{4.15}
% & 57.65\std{4.59} & 1.58\std{1.54} & 54.09\std{2.06}
% & 58.53\std{4.71} & 0.69\std{1.63} & 59.41\std{4.19} \\

% FlyPrompt
% & 57.14\std{1.90} & 0.47\std{0.96} & 57.53\std{3.36}
% & 56.20\std{1.76} & 1.18\std{0.77} & 57.46\std{3.36}
% & 56.73\std{1.64} & 1.02\std{0.32} & 57.46\std{3.36} \\
% \midrule
\rowcolor{gray!10}
FlyGCL (Ours)
& \bfseries 62.47\std{1.04} & 0.89\std{1.15} & \bfseries 61.60\std{3.14}
& \bfseries 62.48\std{2.06} & 1.94\std{1.42} & \bfseries 62.29\std{3.24}
& \bfseries 62.99\std{1.25} & 1.09\std{1.04} & \bfseries 61.99\std{2.97} \\
% VISTA+
% & 63.06\std{2.24} & 0.71\std{1.49} & 62.14\std{3.02}
% & 61.45\std{2.59} & 1.87\std{1.65} & 62.13\std{2.93}
% & 62.97\std{2.19} & 0.86\std{1.39} & 62.13\std{2.93} \\
\bottomrule
\end{tabular}
}
\end{table*}

% --- Table: egoexo-fitness (ac) ---

\begin{table*}[t]
\centering
\caption{Performance comparison on the continual action classification benchmark (EgoExo-Fitness). We report final average accuracy $A_{\rm last}$ (\%, $\uparrow$), average forgetting $\bar{F}_T$ (\%, $\downarrow$), and average anytime accuracy $A_{\rm auc}$ (\%, $\uparrow$).}
\label{tab:fitness_action_classification}
\renewcommand\arraystretch{1.10}
\setlength{\tabcolsep}{4.0pt}
\resizebox{0.82\textwidth}{!}{
\begin{tabular}{
    l
    S[table-format=2.2(3)]
    S[table-format=-2.2(3)]
    S[table-format=2.2(3)]
    S[table-format=2.2(3)]
    S[table-format=-2.2(3)]
    S[table-format=2.2(3)]
}
\toprule
\multirow{2.5}{*}{\textbf{Method}}
& \multicolumn{3}{c}{\textbf{Ego-exo}}
& \multicolumn{3}{c}{\textbf{Ego-only}} \\
\cmidrule(lr){2-4}
\cmidrule(lr){5-7}
& {$A_{\rm last}$ ($\uparrow$)}
& {$\bar{F}_T$ ($\downarrow$)}
& {$A_{\rm auc}$ ($\uparrow$)}
& {$A_{\rm last}$ ($\uparrow$)}
& {$\bar{F}_T$ ($\downarrow$)}
& {$A_{\rm auc}$ ($\uparrow$)} \\
\midrule
% Joint$_{2000}$
% & {--} & {--} & {--}
% & {--} & {--} & {--} \\
%
% Joint$_{1}$
% & 32.66\stdp{3.22} & {--} & {--}
% & 27.81\stdp{2.78} & {--} & {--} \\
% \midrule
SeqFT
& 24.97\std{10.60} & -1.82\stdp{1.56} & 28.86\std{15.63}
& 25.00\stdp{8.53} & -7.75\stdp{1.43} & 28.70\std{12.93} \\

ER~\cite{rolnick2019experience}
& 27.31\stdp{5.69} & \bfseries -4.14\stdp{4.72} & 31.35\std{10.32}
& 28.40\stdp{3.97} & \bfseries -11.78\stdp{3.39} & 30.45\stdp{9.47} \\

DER++~\cite{buzzega2020dark}
& 22.29\stdp{3.30} & -1.67\stdp{1.33} & 29.00\std{10.32}
& 21.13\stdp{4.10} & -6.29\stdp{2.34} & 27.67\stdp{9.54} \\

EWC~\cite{kirkpatrick2017overcoming}
& 24.93\std{10.58} & -1.91\stdp{1.63} & 28.83\std{15.62}
& 25.08\stdp{8.43} & -8.07\stdp{1.38} & 28.66\std{12.87} \\

LwF~\cite{li2017learning}
& 24.98\std{10.25} & -3.35\stdp{2.60} & 28.67\std{15.34}
& 25.43\stdp{9.30} & -9.36\stdp{4.65} & 28.64\std{12.60} \\

L2P+~\cite{wang2022learning}
& 14.53\stdp{1.85} & 0.99\stdp{1.41} & 28.86\stdp{0.61}
& 13.44\stdp{2.17} & 0.14\stdp{0.58} & 25.88\stdp{6.16} \\

DualPrompt+~\cite{wang2022dualprompt}
& 18.54\stdp{6.55} & 20.02\stdp{7.52} & 28.85\std{12.07}
& 17.10\stdp{6.72} & 19.76\stdp{6.97} & 28.31\stdp{8.97} \\

S-Prompt+~\cite{wang2022s}
& 12.79\stdp{4.11} & 14.43\std{12.66} & 23.72\std{12.77}
& 11.02\stdp{5.14} & 6.40\stdp{8.90} & 23.21\stdp{7.75} \\

% VISTA
% & 25.13\stdp{4.09} & 25.22\std{10.59} & 34.75\std{12.69}
% & 28.00\stdp{7.21} & 15.78\stdp{4.70} & 33.83\std{10.79} \\
%
% FlyPrompt
% & 30.42\stdp{9.73} & 18.79\stdp{9.13} & 37.56\std{16.28}
% & 33.03\stdp{8.19} & 12.49\stdp{3.41} & 36.54\stdp{9.85} \\
% \midrule
\rowcolor{gray!10}
FlyGCL (Ours)
& \bfseries 37.86\std{12.85} & 7.02\stdp{6.41} & \bfseries 42.34\std{16.42}
& \bfseries 39.61\std{13.23} & 2.08\stdp{5.41} & \bfseries 41.05\std{14.34} \\
\bottomrule
\end{tabular}
}
\end{table*}

% --- Table: egoexo-fitness (cvsv) ---

\begin{table*}[t]
\centering
\caption{Performance comparison on the continual sequence verification benchmark (EgoExo-Fitness). We report final average performance $\bar{A}_{\mathrm{last}}$ (\%, $\uparrow$), average forgetting $\bar{F}_{T}$ (\%, $\downarrow$), and average anytime performance $A_{\rm auc}$ (\%, $\uparrow$) in terms of ROC-AUC and mAP.}
\label{tab:fitness_verification}
\renewcommand\arraystretch{1.10}
\setlength{\tabcolsep}{2.5pt}
\resizebox{0.725\linewidth}{!}{
\begin{tabular}{
    l
    S[table-format=3.2(2)]
    S[table-format=2.2(2)]
    S[table-format=3.2(2)]
    S[table-format=3.2(2)]
    S[table-format=2.2(2)]
    S[table-format=3.2(2)]
}
\toprule
\multirow{3}{*}{\textbf{Method}}
& \multicolumn{3}{c}{\textbf{ROC-AUC}}
& \multicolumn{3}{c}{\textbf{mAP}} \\
\cmidrule(lr){2-4}
\cmidrule(lr){5-7}
& {$\bar{A}_{\mathrm{last}}$ ($\uparrow$)}
& {$\bar{F}_{T}$ ($\downarrow$)}
& {$A_{\rm auc}$ ($\uparrow$)}
& {$\bar{A}_{\mathrm{last}}$ ($\uparrow$)}
& {$\bar{F}_{T}$ ($\downarrow$)}
& {$A_{\rm auc}$ ($\uparrow$)} \\
\midrule
\multicolumn{7}{c}{\textit{Ego-exo}} \\
\midrule
% Joint$_{2000}$
% & 98.05\std{0.21} & {--} & {--}
% & 94.07\std{0.23} & {--} & {--} \\
%
% Joint$_{1}$
% & 96.21\std{0.58} & {--} & {--}
% & 72.29\std{0.13} & {--} & {--} \\
% \midrule
SeqFT
& 83.42\std{2.64} & 5.75\std{3.88} & 87.72\std{0.53}
& 53.43\std{5.11} & 22.58\std{5.26} & 70.37\std{4.29} \\

ER~\cite{rolnick2019experience}
& 86.02\std{0.52} & 4.49\std{1.38} & 89.40\std{0.52}
& 57.26\std{4.45} & 20.02\std{4.96} & 72.26\std{4.21} \\

DER++~\cite{buzzega2020dark}
& 89.58\std{1.21} & 4.28\std{1.86} & \bfseries 92.83\std{1.01}
& 64.55\std{3.20} & 19.68\std{5.75} & 79.29\std{2.94} \\

EWC~\cite{kirkpatrick2017overcoming}
& 83.65\std{2.45} & 5.55\std{4.07} & 87.83\std{0.61}
& 53.34\std{5.05} & 22.73\std{5.45} & 70.38\std{4.25} \\

LwF~\cite{li2017learning}
& 84.22\std{1.72} & 5.61\std{3.07} & 88.44\std{0.66}
& 55.90\std{5.78} & 21.13\std{6.00} & 71.76\std{4.35} \\

L2P+~\cite{wang2022learning}
& 83.95\std{2.39} & 5.37\std{3.71} & 88.01\std{0.30}
& 53.98\std{5.38} & 22.38\std{5.03} & 70.75\std{4.67} \\

DualPrompt+~\cite{wang2022dualprompt}
& 84.85\std{2.12} & 6.89\std{3.64} & 89.92\std{0.93}
& 55.33\std{3.31} & 24.49\std{6.99} & 73.69\std{3.43} \\

S-Prompt+~\cite{wang2022s}
& 86.67\std{1.31} & 6.39\std{1.18} & 91.43\std{1.27}
& 62.05\std{6.47} & 22.11\std{3.62} & 78.52\std{6.40} \\

% VISTA
% & 88.68\std{4.17} & 6.31\std{3.39} & 93.45\std{2.12}
% & 64.86\std{8.28} & 21.39\std{7.07} & 81.02\std{4.19} \\
%
% FlyPrompt
% & 88.92\std{3.74} & 4.90\std{2.83} & 92.63\std{1.73}
% & 64.89\std{8.04} & 18.71\std{4.48} & 78.85\std{7.06} \\
% \midrule
\rowcolor{gray!10}
FlyGCL (Ours)
& \bfseries 89.64\std{2.09} & \bfseries 4.22\std{2.51} & 92.62\std{1.91}
& \bfseries 76.77\std{8.82} & \bfseries 11.23\std{8.12} & \bfseries 85.25\std{4.56} \\
% VISTA+
% & 91.74\std{3.11} & 4.30\std{2.07} & 94.97\std{1.60}
% & 71.66\std{5.49} & 18.67\std{5.72} & 85.75\std{1.26} \\
\midrule
\multicolumn{7}{c}{\textit{Ego-only}} \\
\midrule
% Joint$_{2000}$
% & 91.65\std{0.77} & {--} & {--}
% & 63.72\std{0.38} & {--} & {--} \\
%
% Joint$_{1}$
% & 90.76\std{0.43} & {--} & {--}
% & 52.36\std{0.32} & {--} & {--} \\
% \midrule
SeqFT
& 83.64\std{3.57} & 5.55\std{3.02} & 86.61\std{1.66}
& 51.37\std{3.65} & 24.25\std{5.26} & 68.86\std{3.15} \\

ER~\cite{rolnick2019experience}
& 83.71\std{3.30} & 6.29\std{2.67} & 88.09\std{1.38}
& 53.92\std{4.55} & 23.00\std{6.10} & 70.18\std{2.79} \\

DER++~\cite{buzzega2020dark}
& 86.81\std{2.55} & 6.19\std{2.93} & 89.94\std{0.63}
& 57.08\std{4.12} & 23.46\std{6.24} & 74.19\std{1.62} \\

EWC~\cite{kirkpatrick2017overcoming}
& 82.59\std{2.24} & 5.56\std{2.48} & 86.61\std{1.66}
& 50.70\std{3.61} & 24.21\std{5.41} & 68.86\std{3.15} \\

LwF~\cite{li2017learning}
& 84.52\std{3.87} & 5.37\std{4.24} & 87.73\std{0.72}
& 53.02\std{4.86} & 23.02\std{5.64} & 69.48\std{3.51} \\

L2P+~\cite{wang2022learning}
& 83.32\std{3.55} & 5.96\std{2.81} & 87.17\std{1.32}
& 51.93\std{4.15} & 23.78\std{5.41} & 68.73\std{3.25} \\

DualPrompt+~\cite{wang2022dualprompt}
& 85.24\std{2.92} & 4.50\std{3.21} & 88.62\std{1.21}
& 51.93\std{4.07} & 25.23\std{4.69} & 70.85\std{2.81} \\

S-Prompt+~\cite{wang2022s}
& 84.37\std{8.05} & 7.53\std{6.94} & 85.57\std{0.39}
& 53.15\std{6.46} & 26.03\std{6.97} & 67.19\std{2.50} \\

% VISTA
% & 85.45\std{1.56} & 7.60\std{0.53} & 91.16\std{1.16}
% & 56.24\std{4.02} & 26.61\std{5.89} & 76.20\std{2.17} \\
%
% FlyPrompt
% & 86.81\std{0.76} & 4.69\std{1.63} & 90.20\std{0.35}
% & 58.00\std{1.08} & 23.23\std{3.68} & 75.43\std{3.74} \\
% \midrule
\rowcolor{gray!10}
FlyGCL (Ours)
& \bfseries 92.52\std{3.89} & \bfseries 3.84\std{2.43} & \bfseries 95.40\std{2.12}
& \bfseries 86.29\std{8.62} & \bfseries 7.72\std{5.82} & \bfseries 92.08\std{4.48} \\
% VISTA+
% & 89.90\std{4.35} & 5.36\std{3.87} & {--}
% & 68.52\std{10.30} & 19.99\std{9.56} & {--} \\
\bottomrule
\end{tabular}
}
\end{table*}

% --- Table: egoexo-fitness (gev) ---

\begin{table*}[t]
\centering
\caption{Performance comparison on the continual guidance-based execution verification benchmark (EgoExo-Fitness). We report final average performance $A_{\mathrm{last}}$ (\%, $\uparrow$), average forgetting $\bar{F}_T$ (\%, $\downarrow$), and average anytime performance $A_{\rm auc}$ (\%, $\uparrow$) in terms of classification accuracy and F1 score.}
\label{tab:fitness_guidance_verification}
\renewcommand\arraystretch{1.10}
\setlength{\tabcolsep}{2.5pt}
\resizebox{0.725\linewidth}{!}{
\begin{tabular}{
    l
    S[table-format=3.2(3)]
    S[table-format=-2.2(3)]
    S[table-format=3.2(3)]
    S[table-format=3.2(3)]
    S[table-format=-2.2(3)]
    S[table-format=3.2(3)]
}
\toprule
\multirow{3}{*}{\textbf{Method}}
& \multicolumn{3}{c}{\textbf{Cls Acc}}
& \multicolumn{3}{c}{\textbf{F1}} \\
\cmidrule(lr){2-4}
\cmidrule(lr){5-7}
& {$A_{\mathrm{last}}$ ($\uparrow$)}
& {$\bar{F}_T$ ($\downarrow$)}
& {$A_{\rm auc}$ ($\uparrow$)}
& {$A_{\mathrm{last}}$ ($\uparrow$)}
& {$\bar{F}_T$ ($\downarrow$)}
& {$A_{\rm auc}$ ($\uparrow$)} \\
\midrule
\multicolumn{7}{c}{\textit{Ego-exo}} \\
\midrule
% Joint$_{2000}$
% & {--} & {--} & {--}
% & {--} & {--} & {--} \\
%
% Joint$_{1}$
% & 71.01\stdp{3.50} & {--} & {--}
% & 77.80\stdp{3.33} & {--} & {--} \\
% \midrule
SeqFT
& 58.12\stdp{6.20} & -1.65\stdp{2.86} & 53.45\stdp{5.69}
& 66.96\stdp{8.18} & -2.62\stdp{4.11} & 59.80\stdp{7.88} \\

ER~\cite{rolnick2019experience}
& 59.43\stdp{8.86} & -2.93\stdp{1.25} & 53.99\stdp{6.13}
& 67.73\std{10.81} & -3.18\stdp{2.56} & 60.45\stdp{8.31} \\

DER++~\cite{buzzega2020dark}
& 58.75\stdp{6.14} & \bfseries -3.23\stdp{9.79} & 53.87\stdp{1.71}
& 65.05\std{8.81} & \bfseries -4.15\stdp{13.17} & 59.04\stdp{2.68} \\

EWC~\cite{kirkpatrick2017overcoming}
& 57.99\stdp{9.36} & -1.65\stdp{2.32} & 53.55\stdp{6.20}
& 66.06\std{11.89} & -1.64\stdp{4.21} & 59.96\stdp{8.45} \\

LwF~\cite{li2017learning}
& 56.87\stdp{9.15} & -1.35\stdp{2.81} & 53.28\stdp{6.07}
& 65.01\std{11.50} & -1.31\stdp{3.08} & 59.89\stdp{8.69} \\

L2P+~\cite{wang2022learning}
& 55.49\stdp{9.44} & 0.36\stdp{4.87} & 52.26\stdp{5.53}
& 62.80\std{13.09} & 1.03\stdp{8.27} & 58.13\stdp{7.61} \\

DualPrompt+~\cite{wang2022dualprompt}
& 58.08\stdp{8.84} & -0.16\stdp{7.36} & 53.90\stdp{6.91}
& 64.53\std{12.01} & 0.02\stdp{9.51} & 59.42\std{11.08} \\

S-Prompt+~\cite{wang2022s}
& 58.35\std{16.81} & 12.68\std{12.34} & 72.11\std{10.07}
& 62.02\std{20.48} & 13.77\std{14.29} & 76.43\std{11.26} \\

% VISTA
% & 59.25\std{14.87} & 10.02\stdp{9.21} & 81.49\stdp{4.59}
% & 64.12\std{17.90} & 9.32\stdp{9.61} & 87.43\stdp{3.56} \\
%
% FlyPrompt
% & 57.18\std{12.11} & -4.16\std{11.43} & 51.77\stdp{6.88}
% & 63.99\std{15.93} & -4.67\std{14.66} & 56.16\std{10.59} \\
% \midrule
\rowcolor{gray!10}
FlyGCL (Ours)
& \bfseries 76.27\stdp{2.52} & 3.33\stdp{5.50} & \bfseries 78.98\stdp{7.00}
& \bfseries 85.75\stdp{2.70} & -0.97\stdp{6.23} & \bfseries 84.90\stdp{7.60} \\
% VISTA+
% & 72.62\stdp{2.69} & 4.68\stdp{2.85} & {--}
% & 80.53\stdp{2.45} & 3.00\stdp{2.93} & {--} \\
\midrule
\multicolumn{7}{c}{\textit{Ego-only}} \\
\midrule
% Joint$_{2000}$
% & {--} & {--} & {--}
% & {--} & {--} & {--} \\
%
% Joint$_{1}$
% & 77.67\stdp{2.36} & {--} & {--}
% & 83.42\stdp{2.07} & {--} & {--} \\
% \midrule
SeqFT
& 58.74\stdp{6.40} & -1.58\stdp{8.49} & 55.03\stdp{6.34}
& 65.53\stdp{7.42} & -2.00\std{10.54} & 60.75\stdp{9.02} \\

ER~\cite{rolnick2019experience}
& 65.04\std{17.74} & -0.92\std{16.56} & 58.31\stdp{8.06}
& 69.37\std{22.33} & 2.06\std{21.76} & 63.36\std{10.65} \\

DER++~\cite{buzzega2020dark}
& 64.07\stdp{8.32} & -1.46\stdp{8.67} & 58.72\stdp{5.55}
& 70.26\stdp{9.96} & -0.89\std{11.08} & 64.25\stdp{7.31} \\

EWC~\cite{kirkpatrick2017overcoming}
& 57.62\std{13.90} & 0.11\std{14.34} & 54.60\stdp{8.50}
& 62.64\std{17.55} & 1.98\std{18.53} & 59.78\std{12.01} \\

LwF~\cite{li2017learning}
& \bfseries 66.92\stdp{6.46} & \bfseries -5.80\stdp{7.55} & 58.19\stdp{5.80}
& \bfseries 75.14\stdp{7.40} & \bfseries -6.76\stdp{9.31} & 64.67\stdp{7.98} \\

L2P+~\cite{wang2022learning}
& 58.30\stdp{7.16} & -1.83\std{10.72} & 54.51\stdp{5.09}
& 64.81\stdp{8.35} & -1.90\std{13.86} & 59.93\stdp{7.64} \\

DualPrompt+~\cite{wang2022dualprompt}
& 51.63\std{20.06} & 8.55\std{21.36} & 51.31\stdp{8.34}
& 51.69\std{30.21} & 15.71\std{34.57} & 54.47\std{12.67} \\

S-Prompt+~\cite{wang2022s}
& 53.82\stdp{2.43} & 6.56\stdp{2.67} & 53.91\stdp{4.16}
& 60.46\stdp{3.86} & 8.54\stdp{4.23} & 60.07\stdp{6.51} \\

% VISTA
% & 52.46\stdp{3.84} & 7.79\stdp{4.49} & 53.77\stdp{4.61}
% & 58.81\stdp{6.40} & 10.06\stdp{7.72} & 59.93\stdp{7.19} \\
%
% FlyPrompt
% & 49.94\std{16.50} & 8.98\std{15.49} & 52.15\stdp{8.56}
% & 50.94\std{23.14} & 16.01\std{22.83} & 55.74\std{12.78} \\
% \midrule
\rowcolor{gray!10}
FlyGCL (Ours)
& 65.87\stdp{8.90} & 11.98\stdp{8.95} & \bfseries 72.57\stdp{8.64}
& 73.13\stdp{9.75} & 8.74\stdp{9.77} & \bfseries 77.34\stdp{8.58} \\
\bottomrule
\end{tabular}
}
\end{table*}

% --- Table: egoexo-fitness (al) ---
%\input{tex/tab-ego-fitness-al}

% --- Table: Continual LIBERO-GCL (blurred boundary / si-blurry) ---

\begin{table*}[t]
\centering
\caption{Performance comparison on continual embodied vision-language-action learning benchmarks under the GCL setting (LIBERO). We report final average success rate $A_{\rm last}$ (\%, $\uparrow$), average anytime success rate $A_{\rm auc}$ (\%, $\uparrow$), forward transfer (FWT, \%, $\uparrow$), and negative backward transfer (NBT, \%, $\downarrow$).}
\label{tab:libero_gcl}
\renewcommand\arraystretch{1.10}
\setlength{\tabcolsep}{3.5pt}

\begin{minipage}[t]{0.485\textwidth}
\centering
\subcaption{LIBERO-Spatial} 
\resizebox{\linewidth}{!}{
\begin{tabular}{
    l
    S[table-format=3.2(3)]
    S[table-format=3.2(3)]
    S[table-format=3.2(3)]
    S[table-format=-3.2(3)]
}
\toprule
\textbf{Method}
& {$A_{\rm last}$ ($\uparrow$)}
& {$A_{\rm auc}$ ($\uparrow$)}
& {FWT ($\uparrow$)}
& {NBT ($\downarrow$)} \\
\midrule
SeqFT
& 23.67\std{1.02} & 44.06\std{0.92} & 63.17\std{2.79} & 50.58\std{2.03} \\
SeqLoRA
& 19.33\std{2.51} & 33.45\std{1.86} & 69.00\std{3.21} & 59.23\std{0.21} \\
PackNet~\cite{mallya2018packnet}
& 2.30\std{0.76} & 15.24\std{7.46} & 57.33\std{2.60} & 55.38\std{1.25} \\
ER~\cite{rolnick2019experience}
& 71.50\std{1.40} & 82.46\std{2.14} & 80.67\std{1.25} & 13.50\std{1.11} \\
EWC~\cite{kirkpatrick2017overcoming}
& 14.03\std{1.93} & 28.64\std{8.47} & 80.17\std{2.43} & 63.83\std{1.15} \\
LwF~\cite{li2017learning}
& 13.27\std{1.35} & 23.24\std{2.99} & 71.17\std{1.15} & 62.57\std{3.37} \\
L2P+~\cite{wang2022learning}
& 6.27\std{0.89} & 18.09\std{1.70} & 73.67\std{1.61} & 60.56\std{1.03} \\
DualPrompt+~\cite{wang2022dualprompt}
& 33.63\std{2.08} & 36.58\std{1.03} & \bfseries 88.44\std{1.50} & 60.03\std{2.00} \\
%CLARE~\cite{romer2026clare}
%& 81.73\std{1.75} & \bfseries 87.08\std{2.09} & 87.33\std{6.26} & -2.56\std{0.43} \\
% \midrule
\rowcolor{gray!10}
FlyGCL (Ours)
& \bfseries 83.12\std{1.42} & 81.58\std{0.64} & 82.75\std{3.84} & \bfseries -3.40\std{1.59} \\
\bottomrule
 \end{tabular}}
\end{minipage}
\hfill
\begin{minipage}[t]{0.485\textwidth}
\centering
\subcaption{LIBERO-Object}
\resizebox{\linewidth}{!}{
\begin{tabular}{
    l
    S[table-format=3.2(3)]
    S[table-format=3.2(3)]
    S[table-format=3.2(3)]
    S[table-format=-3.2(3)]
}
\toprule
\textbf{Method}
& {$A_{\rm last}$ ($\uparrow$)}
& {$A_{\rm auc}$ ($\uparrow$)}
& {FWT ($\uparrow$)}
& {NBT ($\downarrow$)} \\
\midrule
SeqFT
& 30.83\std{1.16} & 47.88\std{1.91} & \bfseries 94.67\std{1.97} & 43.10\std{2.47} \\
SeqLoRA
& 21.70\std{1.20} & 33.15\std{0.26} & 67.33\std{0.93} & 36.33\std{1.71} \\
PackNet~\cite{mallya2018packnet}
& 1.93\std{0.75} & 11.64\std{2.92} & 39.83\std{1.25} & 32.03\std{1.89} \\
ER~\cite{rolnick2019experience}
& \bfseries 89.50\std{0.50} & \bfseries 91.25\std{1.12} & 87.83\std{2.93} & 7.67\std{3.06} \\
EWC~\cite{kirkpatrick2017overcoming}
& 36.33\std{1.51} & 39.94\std{1.22} & 85.50\std{1.77} & 60.84\std{5.01} \\
LwF~\cite{li2017learning}
& 26.77\std{1.53} & 30.30\std{16.01} & 86.07\std{1.73} & 50.32\std{1.53} \\
L2P+~\cite{wang2022learning}
& 8.63\std{2.08} & 8.33\std{2.79} & 87.00\std{0.75} & 54.49\std{3.71} \\
DualPrompt+~\cite{wang2022dualprompt}
& 33.67\std{1.86} & 50.11\std{9.86} & 84.67\std{1.02} & 56.07\std{1.24} \\
%CLARE~\cite{romer2026clare}
%& 83.07\std{1.93} & 81.48\std{2.18} & 88.85\std{0.49} & \bfseries 0.78\std{0.40} \\
% \midrule
\rowcolor{gray!10}
FlyGCL (Ours)
& 86.11\std{1.56} & 86.03\std{1.01} & 87.73\std{1.30} & 1.40\std{1.55} \\
\bottomrule
\end{tabular}}
\end{minipage}

\vspace{2mm}

\begin{minipage}[t]{0.485\textwidth}
\centering
\subcaption{LIBERO-Goal}
\resizebox{\linewidth}{!}{
\begin{tabular}{
    l
    S[table-format=3.2(3)]
    S[table-format=3.2(3)]
    S[table-format=3.2(3)]
    S[table-format=-3.2(3)]
}
\toprule
\textbf{Method}
& {$A_{\rm last}$ ($\uparrow$)}
& {$A_{\rm auc}$ ($\uparrow$)}
& {FWT ($\uparrow$)}
& {NBT ($\downarrow$)} \\
\midrule
SeqFT
& 21.27\std{1.33} & 46.09\std{2.98} & 86.33\std{1.89} & 44.18\std{2.33} \\
SeqLoRA
& 11.80\std{0.65} & 23.27\std{1.73} & 77.83\std{0.75} & 37.66\std{3.00} \\
PackNet~\cite{mallya2018packnet}
& 3.90\std{0.44} & 20.00\std{0.56} & 30.00\std{1.93} & 9.23\std{1.07} \\
ER~\cite{rolnick2019experience}
& 74.73\std{1.32} & 82.01\std{1.11} & 84.33\std{1.31} & 29.68\std{2.25} \\
EWC~\cite{kirkpatrick2017overcoming}
& 12.67\std{1.29} & 22.82\std{1.27} & 88.83\std{0.29} & 43.84\std{0.76} \\
LwF~\cite{li2017learning}
& 13.13\std{2.42} & 23.39\std{0.92} & 92.17\std{1.58} & 41.03\std{2.57} \\
L2P+~\cite{wang2022learning}
& 10.03\std{1.02} & 12.73\std{1.83} & 89.00\std{3.18} & 30.24\std{3.03} \\
DualPrompt+~\cite{wang2022dualprompt}
& 35.67\std{1.81} & 46.92\std{2.64} & 90.67\std{4.50} & 40.88\std{2.66} \\
%CLARE~\cite{romer2026clare}
%& 91.47\std{0.81} & 86.89\std{0.38} & 91.37\std{0.64} & \bfseries -2.15\std{0.48} \\
% \midrule
\rowcolor{gray!10}
FlyGCL (Ours)
& \bfseries 94.50\std{1.95} & \bfseries 91.11\std{1.46} & \bfseries 92.25\std{2.33} & -1.51\std{0.41} \\
\bottomrule
\end{tabular}}
\end{minipage}
\hfill
\begin{minipage}[t]{0.485\textwidth}
\centering
\subcaption{LIBERO-Long}
\resizebox{\linewidth}{!}{
\begin{tabular}{
    l
    S[table-format=3.2(3)]
    S[table-format=3.2(3)]
    S[table-format=3.2(3)]
    S[table-format=-3.2(3)]
}
\toprule
\textbf{Method}
& {$A_{\rm last}$ ($\uparrow$)}
& {$A_{\rm auc}$ ($\uparrow$)}
& {FWT ($\uparrow$)}
& {NBT ($\downarrow$)} \\
\midrule
SeqFT
& 25.37\std{0.66} & 34.12\std{0.70} & \bfseries 80.67\std{1.41} & 51.31\std{3.08} \\
SeqLoRA
& 14.93\std{1.51} & 33.21\std{1.86} & 15.17\std{1.25} & 35.18\std{1.25} \\
PackNet~\cite{mallya2018packnet}
& 2.33\std{0.08} & 10.85\std{2.04} & 16.83\std{1.48} & 5.56\std{0.82} \\
ER~\cite{rolnick2019experience}
& 68.20\std{0.29} & 75.29\std{1.31} & 45.33\std{0.62} & 16.85\std{1.48} \\
EWC~\cite{kirkpatrick2017overcoming}
& 13.73\std{1.53} & 15.76\std{0.19} & 28.33\std{1.32} & 34.33\std{3.06} \\
LwF~\cite{li2017learning}
& 18.07\std{1.51} & 19.36\std{0.90} & 30.50\std{0.26} & 42.17\std{1.75} \\
L2P+~\cite{wang2022learning}
& 15.30\std{1.11} & 15.09\std{1.74} & 28.33\std{3.54} & 35.05\std{7.07} \\
DualPrompt+~\cite{wang2022dualprompt}
& 22.63\std{1.53} & 28.81\std{2.18} & 28.67\std{1.24} & 40.70\std{2.96} \\
%CLARE~\cite{romer2026clare}
%& 77.93\std{1.64} & 75.17\std{1.97} & 37.00\std{1.12} & \bfseries -1.75\std{1.46} \\
% \midrule
\rowcolor{gray!10}
FlyGCL (Ours)
& \bfseries 79.13\std{1.41} & \bfseries 79.12\std{1.11} & 80.15\std{2.42} & -1.15\std{1.01} \\
\bottomrule
\end{tabular}}
\end{minipage}
\end{table*}

% --- Table: Continual LIBERO (offline CL, extended) ---

\begin{table*}[t]
\centering
\caption{Performance comparison on continual embodied vision-language-action learning benchmarks under the offline CL setting (LIBERO). We report final average success rate $A_{\rm last}$ (\%, $\uparrow$), average anytime success rate $A_{\rm auc}$ (\%, $\uparrow$), forward transfer (FWT, \%, $\uparrow$), and negative backward transfer (NBT, \%, $\downarrow$).}
\label{tab:libero_offline}
\renewcommand\arraystretch{1.10}
\setlength{\tabcolsep}{3.2pt}

\begin{minipage}[t]{0.485\textwidth}
\centering
\subcaption{LIBERO-Spatial}
\resizebox{\linewidth}{!}{
\begin{tabular}{
    l
    S[table-format=3.2(3)]
    S[table-format=3.2(3)]
    S[table-format=3.2(3)]
    S[table-format=-3.2(3)]
}
\toprule
\textbf{Method}
& {$A_{\rm last}$ ($\uparrow$)}
& {$A_{\rm auc}$ ($\uparrow$)}
& {FWT ($\uparrow$)}
& {NBT ($\downarrow$)} \\
\midrule
SeqFT
& 9.27\std{0.35} & 26.53\std{0.41} & 88.07\std{0.93} & 86.88\std{0.75} \\
SeqLoRA
& 7.43\std{1.46} & 23.33\std{0.97} & 81.37\std{1.07} & 82.35\std{2.00} \\
PackNet~\cite{mallya2018packnet}
& 0.13\std{0.23} & 3.94\std{0.59} & 29.97\std{3.52} & 33.04\std{3.80} \\
ER~\cite{rolnick2019experience}
& 63.70\std{0.87} & 71.23\std{2.51} & \bfseries 88.40\std{0.46} & 24.76\std{3.72} \\
EWC~\cite{kirkpatrick2017overcoming}
&  8.87\std{0.42} & 25.87\std{0.53} & 86.43\std{1.56} & 85.49\std{1.57} \\
LwF~\cite{li2017learning}
&  6.27\std{1.23} & 21.39\std{1.52} & 74.30\std{2.26} & 74.90\std{1.30} \\
L2P+~\cite{wang2022learning}
&  6.30\std{2.95} & 22.34\std{3.47} & 81.47\std{4.42} & 83.16\std{1.52} \\
DualPrompt+~\cite{wang2022dualprompt}
& 14.10\std{1.55} & 33.34\std{0.93} & 84.67\std{2.04} & 72.71\std{2.17} \\
%CLARE~\cite{romer2026clare}
%& 83.57\std{1.06} & 83.91\std{0.67} & 84.23\std{0.67} & 0.35\std{0.03} \\
% \midrule
\rowcolor{gray!10}
FlyGCL (Ours)
& \bfseries 86.77\std{0.84} & \bfseries 86.61\std{0.26} & 86.03\std{0.99} & \bfseries -0.61\std{1.11} \\
\bottomrule
\end{tabular}}
\end{minipage}
\hfill
\begin{minipage}[t]{0.485\textwidth}
\centering
\subcaption{LIBERO-Object}
\resizebox{\linewidth}{!}{
\begin{tabular}{
    l
    S[table-format=3.2(3)]
    S[table-format=3.2(3)]
    S[table-format=3.2(3)]
    S[table-format=-3.2(3)]
}
\toprule
\textbf{Method}
& {$A_{\rm last}$ ($\uparrow$)}
& {$A_{\rm auc}$ ($\uparrow$)}
& {FWT ($\uparrow$)}
& {NBT ($\downarrow$)} \\
\midrule
SeqFT
& 20.90\std{2.00} & 36.41\std{0.28} & 96.03\std{0.40} & 84.68\std{0.32} \\
SeqLoRA
& 8.53\std{2.25} & 18.13\std{2.93} & 58.87\std{3.62} & 55.05\std{2.66} \\
PackNet~\cite{mallya2018packnet}
& 0.00\std{0.00} & 3.51\std{0.80} & 28.93\std{6.51} & 32.15\std{7.24} \\
ER~\cite{rolnick2019experience}
& 85.10\std{0.10} & 88.31\std{1.93} & 95.03\std{1.02} & 9.56\std{2.86} \\
EWC~\cite{kirkpatrick2017overcoming}
& 23.63\std{1.07} & 37.80\std{0.13} & \bfseries 96.93\std{0.68} & 84.08\std{1.11} \\
LwF~\cite{li2017learning}
& 19.27\std{2.21} & 36.03\std{0.32} & 95.57\std{0.64} & 84.61\std{0.66} \\
L2P+~\cite{wang2022learning}
&  8.17\std{0.75} & 25.41\std{0.21} & 84.53\std{1.21} & 82.83\std{2.54} \\
DualPrompt+~\cite{wang2022dualprompt}
&  9.93\std{0.31} & 30.00\std{1.02} & 83.83\std{1.66} & 75.03\std{2.01} \\
%CLARE~\cite{romer2026clare}
%& 82.67\std{2.46} & 82.67\std{2.37} & 82.93\std{2.03} & 0.28\std{0.51} \\
% \midrule
\rowcolor{gray!10}
FlyGCL (Ours)
& \bfseries 89.27\std{2.37} & \bfseries 88.82\std{1.55} & 88.50\std{1.95} & \bfseries -0.50\std{0.53} \\
\bottomrule
\end{tabular}}
\end{minipage}

\vspace{2mm}

\begin{minipage}[t]{0.485\textwidth}
\centering
\subcaption{LIBERO-Goal}
\resizebox{\linewidth}{!}{
\begin{tabular}{
    l
    S[table-format=3.2(3)]
    S[table-format=3.2(3)]
    S[table-format=3.2(3)]
    S[table-format=-3.2(3)]
}
\toprule
\textbf{Method}
& {$A_{\rm last}$ ($\uparrow$)}
& {$A_{\rm auc}$ ($\uparrow$)}
& {FWT ($\uparrow$)}
& {NBT ($\downarrow$)} \\
\midrule
SeqFT
& 8.93\std{0.42} & 27.31\std{0.28} & \bfseries 94.60\std{0.53} & 95.28\std{0.74} \\
SeqLoRA
& 1.70\std{1.31} & 17.07\std{1.42} & 70.67\std{1.76} & 76.49\std{2.22} \\
PackNet~\cite{mallya2018packnet}
& 0.00\std{0.00} & 4.68\std{0.88} & 28.90\std{5.48} & 32.10\std{6.10} \\
ER~\cite{rolnick2019experience}
& 59.93\std{5.58} & 73.60\std{3.94} & 93.83\std{0.45} & 28.24\std{5.80} \\
EWC~\cite{kirkpatrick2017overcoming}
&  8.77\std{0.25} & 27.17\std{0.43} & 94.53\std{1.24} & 95.32\std{1.08} \\
LwF~\cite{li2017learning}
&  7.97\std{0.46} & 25.18\std{0.29} & 90.40\std{0.10} & 92.10\std{0.44} \\
L2P+~\cite{wang2022learning}
&  8.80\std{1.06} & 26.29\std{1.19} & 90.07\std{1.63} & 90.32\std{1.81} \\
DualPrompt+~\cite{wang2022dualprompt}
& 25.73\std{2.54} & 39.88\std{1.96} & 91.03\std{1.39} & 70.57\std{3.72} \\
%CLARE~\cite{romer2026clare}
%& 89.17\std{0.81} & 89.18\std{1.16} & 88.90\std{1.84} & -0.28\std{0.97} \\
% \midrule
\rowcolor{gray!10}
FlyGCL (Ours)
& \bfseries 93.40\std{0.72} & \bfseries 93.17\std{0.59} & 93.20\std{1.04} & \bfseries -0.03\std{0.56} \\
\bottomrule
\end{tabular}}
\end{minipage}
\hfill
\begin{minipage}[t]{0.485\textwidth}
\centering
\subcaption{LIBERO-Long}
\resizebox{\linewidth}{!}{
\begin{tabular}{
    l
    S[table-format=3.2(3)]
    S[table-format=3.2(3)]
    S[table-format=3.2(3)]
    S[table-format=-3.2(3)]
}
\toprule
\textbf{Method}
& {$A_{\rm last}$ ($\uparrow$)}
& {$A_{\rm auc}$ ($\uparrow$)}
& {FWT ($\uparrow$)}
& {NBT ($\downarrow$)} \\
\midrule
SeqFT
& 8.63\std{0.15} & 21.77\std{0.25} & 72.97\std{1.43} & 71.48\std{1.76} \\
SeqLoRA
& 6.73\std{0.21} & 17.98\std{0.55} & 64.03\std{0.83} & 63.62\std{1.08} \\
PackNet~\cite{mallya2018packnet}
& 0.00\std{0.00} & 2.10\std{0.55} & 17.70\std{4.39} & 19.67\std{4.87} \\
ER~\cite{rolnick2019experience}
& 43.03\std{0.76} & 56.56\std{1.54} & 71.27\std{2.19} & 21.93\std{5.17} \\
EWC~\cite{kirkpatrick2017overcoming}
&  8.13\std{0.15} & 21.53\std{0.16} & 73.37\std{1.01} & 72.48\std{1.11} \\
LwF~\cite{li2017learning}
&  8.20\std{0.10} & 21.29\std{0.53} & 72.37\std{2.72} & 71.30\std{3.08} \\
L2P+~\cite{wang2022learning}
& 20.97\std{1.71} & 33.38\std{2.70} & 71.37\std{1.46} & 50.02\std{2.93} \\
DualPrompt+~\cite{wang2022dualprompt}
& 10.77\std{1.08} & 25.71\std{0.91} & 68.90\std{0.92} & 59.39\std{1.69} \\
%CLARE~\cite{romer2026clare}
%& 74.07\std{0.21} & 74.07\std{0.31} & 74.27\std{0.40} & 0.18\std{0.95} \\
% \midrule
\rowcolor{gray!10}
FlyGCL (Ours)
& \bfseries 76.93\std{0.40} & \bfseries 76.92\std{0.47} & \bfseries 76.93\std{1.14} & \bfseries 0.02\std{2.21} \\
\bottomrule
\end{tabular}}
\end{minipage}
\end{table*}

\end{document}